\begin{document}
\date{\today}

\title{Replicable Online Learning\footnote{Authors are ordered alphabetically. This work was supported in part by the National Science Foundation under grants CCF-2212968 and ECCS-2216899, by the Simons Foundation under the Simons Collaboration on the Theory of Algorithmic Fairness, and by the Office of Naval Research MURI Grant N000142412742.  The views expressed in this work do not necessarily reflect the position or the policy of the Government and no official endorsement should be inferred.}}
\author{Saba Ahmadi}
\author{Siddharth Bhandari} 
\author{Avrim Blum}
\affil{Toyota Technological Institute at Chicago\\
{\small\texttt{\{saba,siddharth,avrim\}@ttic.edu}}}

\maketitle
\begin{abstract}

We investigate the concept of algorithmic replicability introduced by \citet{impagliazzo2022reproducibility,ghazi2021user,kwangjun2024} in an online setting. In our model, the input sequence received by the online learner is generated from \emph{time-varying distributions} chosen by an adversary (obliviously). Our objective is to design low-regret online algorithms that, with high probability, produce the \emph{exact same sequence} of actions when run on two independently sampled input sequences generated as described above. We refer to such algorithms as adversarially replicable.

Previous works (such as \cite{esfandiari2022replicable}) explored replicability in the online setting under inputs generated independently from a fixed distribution; we term this notion as iid-replicability. Our model generalizes to capture both adversarial and iid input sequences, as well as their mixtures, which can be modeled by setting certain distributions as point-masses.

We demonstrate adversarially replicable online learning algorithms for online linear optimization and the experts problem that achieve sub-linear regret. Additionally, we propose a general framework for converting an online learner into an adversarially replicable one within our setting, bounding the new regret in terms of the original algorithm’s regret. We also present a nearly optimal (in terms of regret) iid-replicable online algorithm for the experts problem, highlighting the distinction between the iid and adversarial notions of replicability.

Finally, we establish lower bounds on the regret (in terms of the replicability parameter and time) that any replicable online algorithm must incur.

\end{abstract}

\section{Introduction}

The replicability crisis, which is pervasive across scientific disciplines, has substantial implications for the integrity and reliability of findings. In a survey of 1,500 researchers, it was reported that 70\% had attempted but failed to replicate another researcher’s findings~\citep{baker20161}. A recent Nature article \citep{ball2023ai} discusses how the replicability crisis in AI is creating a ripple effect across numerous scientific fields, including medicine, due to AI’s broad applications. This highlights an urgent need to establish a formal framework for assessing the replicability of experiments in machine learning. 

Driven by this context,~\citet{impagliazzo2022reproducibility} (see also \cite{ghazi2021user,kwangjun2024}) initiate the study of reproducibility as a property of algorithms themselves,
rather than the process by which their results are collected and reported. An algorithm $\ALG$ drawing samples
from an (unknown) distribution $\calD$ is called $\rho$-replicable if, run twice on independent samples and
the \emph{same randomness} $R$, $\ALG$ produces \emph{exactly the same answer} with probability at least $1-\rho$. 
For instance, consider the task of estimating the average value of a distribution $\calD$ over $\reals$, say $\mu$, using $n$ iid sample from $\calD$.
If we simply use the empirical average, say $\hat{\mu},$ it won't lead to a $\rho$-replicable algorithm.
However, replicability can be achieved at the cost of a little accuracy: we overlay a grid on $\reals$ with a fixed width (sufficiently larger than the standard deviation of $\hat{\mu}$) but a random offset, and then round $\hat{\mu}$ to the nearest grid point (see \cite{impagliazzo2022reproducibility} for more details).
In this work, we focus on studying and designing replicable online learning algorithms.

In the context of an online learning algorithm, say for instance for the setting of learning with experts advice or online linear optimization, $\rho$-replicability means that when the algorithm is run using the same internal randomness on two input sequences sampled iid from an (unknown) distribution $\calD$, the \textit{entire sequence of actions} performed is identical in both the runs with probability at least $1-\rho$. 
More precisely, suppose the inputs to the an online algorithm, say $\ALG$, arrive one by one, i.e., input $c_t$ at time $t$, where $c_t\in \chi$ for some input domain $\chi$.
For an input sequence $S=(c_1,c_2,\ldots,c_T)$ we let $a_t(S,R)$ be the action of $\ALG$ at time $t$: since $\ALG$ is an online algorithm, $a_t(S,R)$ is a function only of the history seen till time $t-1$ and internal randomness $R$.
We say that $\ALG$ is $\rho$-replicable if for all input distributions $\calD$ over $\chi$ we have:
\[
\Pr_{\substack{S \leftarrow \mathcal{D}^{\otimes T}\\ S' \leftarrow \mathcal{D}^{\otimes T}\\ R}} \left[ \forall t : a_t(S, R) = a_t(S', R) \right] \geq 1 - \rho,
\]
where $S$ and $S'$ are independent sequences of $T$ iid random variables sampled from $\calD$ (see \Cref{sec:Model}).

However, we would also like our online algorithm to achieve low regret as compared to the best course of action in hindsight. 
In the above set-up let the cost incurred by an action $a\in\calA$, where $\calA$ is the action space, on input $c$ be $f(a,c)$.
For instance, in the case of online linear optimization $a,c\in \reals^n$ and $f(a,c) =a\cdot c$ (the inner product of $a$ and $c$), and for the experts setting $a\in [n]$, $c\in \reals^n$ and $f(a,c)=c(a)$ the cost of expert $a$.
Then, the regret of $\ALG$ on input $S=(c_1,\ldots,c_T)$ is 
\[\regret(\ALG(S)) \coloneqq \Ex_{R}\left[\sum_{t=1}^T f(a,c_t)\right] - \min_{a\in\calA}\sum_{t=1}^Tf(a_t,c_t).
\]

Since the notion of replicability is under stochastic iid inputs, a first demand on $\ALG$ is that while being $\rho$-replicable, it has low expected regret when the input sequence is sampled iid from $\calD$, for all input distributions $\calD$.
In other words, $\Ex_{S\leftarrow \calD^{\otimes T}}[\regret(\ALG(S))]$
is low and $\ALG$ is $\rho$-replicable.
Such a setting was explored by \cite{esfandiari2022replicable} in the case of stochastic multi-arm bandits: see \Cref{subsec:related_work} for more works in this setting .
\sbcomment{what other reference fit the bill}
A stronger yet natural requirement is that the worst-case regret of $\ALG$ remains low; that is, $\sup_{S \in \chi^{T}} \regret(\ALG(S))$
is minimized while maintaining $\rho$-replicability. This gives us a best-of-both-worlds scenario: the algorithm achieves low regret in the worst case, while remaining replicable for stochastic iid inputs.

Another natural extension is to allow the distribution on inputs to vary over time. Since an online algorithm processes inputs indexed by time, it is reasonable to permit their distributions to be time-dependent. For instance, consider an online algorithm that recommends a stock to invest in daily based on historical performance, or the online shortest path problem in traffic management, where edge weights at time \( t \) represent travel times on streets.
One could also generalize this requirement to cases where part of the input is stochastic and part is adversarial.

We capture the above points into a single concept.
We assume that for each time $t$ the adversary chooses a distribution $\calD_t$ (in an oblivious fashion) and selects a cost vector $c_t$ from that distribution. The distributions  $\calD_t$ could be point-masses, reducing to the standard adversarial online formulation, but they could also be more general.  The goal of the online learner $\ALG$ is to (a) achieve low regret with respect to the actual sequence $c_1, c_2, ..., c_T$ of cost vectors observed and (b) to be replicable: for any sequence $\calD_1, \calD_2, ... \calD_T$, with probability at least $1-\rho$ the algorithm has the exact same behavior on two different draws $S,S'$ from this sequence when the learning algorithm uses the same randomness $R$, i.e.,
\[
\Pr_{\substack{S, S', R}} \left[ \forall t : a_t(S, R) = a_t(S', R) \right] \geq 1 - \rho.
\]
If this is the case then we say that $\ALG$ is adversarially $\rho$-replicable.
To distinguish it from the previous situation we will say that $\ALG$ is iid $\rho$-replicable if the replicability guarantees are only against iid inputs. (See \Cref{sec:Model}.)
\begin{remark*}
One could also study a notion of approximate replicability by requiring that, for each time step \( t \), 

\[
\Pr_{\substack{S, S', R}} \left[ a_t(S, R) = a_t(S', R) \right] \geq 1 - \rho,
\]

meaning that at each time step \( t \), the actions of the algorithm on \( S \) and \( S' \) are the same with probability at least \( 1 - \rho \). This contrasts with the stronger requirement that the entire sequence of actions on \( S \) and \( S' \) be identical with probability at least \( 1 - \rho \).

Choosing the stronger notion of replicability of course implies approximate replicability, but it also allows the algorithm to be composed with other algorithms, preserving replicability across compositions.

\end{remark*}

\subsection{Our Contributions}
We present online learning algorithms for online linear optimization and the experts problem that achieve both sub-linear regret and adversarial $\rho$-replicability. Moreover, we introduce a general framework for transforming an online learner for the above problems  into an adversarial $\rho$-replicable online learner, with a bound on its regret based on the regret of the original algorithm. 
Furthermore, we also give an almost optimal algorithm (in terms of regret) for the experts problem in the iid-replicability setting.

We also explore how replicability affects the regret of an algorithm by proving lower bounds of the regret of any online $\rho$-replicable algorithm in the adversarial and iid settings. 
Up to lower order terms the upper and lower bounds match for the iid-replicability setting while their being a gap for the adversarial-replicability setting.

The key ideas are outlined below.
\subsubsection{Adversarially $\rho$-replicable Online Linear Optimization}

We examine the online linear optimization problem which is a linear generalization of the standard online learning problem. 
Here, at time $t$ the algorithm taken an action $a_t\in \reals^n$ and receive an input $c_t\in \reals^n$, and accrues a cost $a_t\cdot c_t$ (see \Cref{sec:Model}).
In this context, we demonstrate a low regret algorithm that is adversarially $\rho$-replicable. Our algorithm leverages two main strategies: a) partitioning the time horizon into blocks and updating its action only at the endpoints of blocks (referred to as transition points), and b) rounding of cumulative cost vectors to the grid points in a random grid. Below, we provide a brief overview of these key ideas:
\paragraph{Blocking of time steps:} The algorithm partitions the time horizon into blocks of fixed size and only changes its actions at the end of a block. Here, for two independent sequences $S,S'$ drawn from the same sequence $\calD_1,\cdots, \calD_T$, since the sequences are close to each other but not exactly the same, a traditional low-regret algorithm, e.g. ``Follow the Perturbed Leader''($\FTPL$) by~\citet{kalai2005efficient}, might change its actions in different but relevantly close time steps. Consequently, the idea of blocking allows us to argue that this switching behavior of the algorithm happens 
with high probability within the same block. Hence, when the algorithm only changes its actions at the endpoint of a block, with high probability it shows the same behavior over two different sequences.

\paragraph{Rounding of cumulative cost vectors to the grid points in a random grid} Here, at the end of each block, the online learner rounds the cumulative cost vectors to the grid points in a grid with a random offset, and selects the action that minimizes the total cost pretending the grid point is the cost vector. This idea is inspired by the seminal ``Follow the Lazy Leader'' ($\FLL$) algorithm of ~\citet{kalai2005efficient}. Since $S, S'$ are drawn from the same sequence $\calD_1,\cdots,\calD_T$,  %
the cumulative cost vectors are within a bounded $\ell_1$ distance with high probability, and hence %
the cost vectors get rounded to the same grid point with high probability. %
This property helps us argue that the online learner picks the same action at all the block endpoints (and also all the time steps) with high probability. %

These two ideas help us achieve replicability guarantees, however, we need to be careful when setting the block size and the randomness of the grid; as we increase them, it is easier to achieve replicability, however, the regret will suffer. We show that by carefully selecting the blocksize and picking the randomness of the grid, low-regret learning and replicability over all time steps is possible.

\medskip
\begin{theorem}
[Informal; see \Cref{thm:FLLB-regret-replicability}]
Let $\rho>0$ be a parameter.
For the online linear optimization problem, there exists
an adversarially $\rho$-replicable algorithm with sub-linear regret.
\end{theorem}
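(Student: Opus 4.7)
The plan is to analyze the natural algorithm combining the two ideas outlined in the introduction, which I will call FLLB. Partition $[T]$ into $T/B$ consecutive blocks of length $B$. At the end of block $k$ the learner forms the cumulative cost vector $C_k := \sum_{t \leq kB} c_t$, rounds each coordinate to the nearest point $\tilde C_k$ of a grid of width $w$ whose offset is drawn once uniformly at random and stored as the shared randomness $R$, and plays $a_{k+1} \in \arg\min_{a \in \calA} a \cdot \tilde C_k$ throughout block $k+1$. The parameters $B$ and $w$ are chosen at the end of the analysis.

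For replicability, I would couple two independent runs on $S, S' \leftarrow \calD_1 \otimes \cdots \otimes \calD_T$ sharing the same grid offset $R$. Each difference $c_t - c_t'$ is independent and mean-zero once we condition on $\calD_t$, so a coordinatewise Hoeffding inequality together with a union bound gives $\|C_k - C_k'\|_1 \leq L$ for all $k$ with probability at least $1 - \rho/2$, for some $L = O(n\sqrt{T \log(nT/\rho)})$. Conditioned on this event, a standard argument for random-offset grids bounds the probability (over $R$) that $C_k$ and $C_k'$ round to different grid points by $\|C_k - C_k'\|_1/w$. A union bound over the $T/B$ block endpoints makes the total failure probability at most $\rho/2 + O(L\, T/(Bw))$, which can be driven below $\rho$ by taking $w$ sufficiently large.

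For regret, I would decompose into three pieces: (i) the regret of the idealized algorithm that at every update plays the minimizer against the rounded cumulative cost vector, which by the $\FLL$ analysis of Kalai--Vempala is $O(w)$ times the diameter of $\calA$ in expectation; (ii) the rounding bias from using $\tilde C_k$ in place of $C_k$, contributing $O(w)$ per update and hence $O(wT/B)$ in total; and (iii) the blocking overhead from freezing the action for $B$ steps, which via a standard mini-batching or drift argument yields an $O(\sqrt{BT})$-type bound. Substituting the smallest $w$ meeting the replicability constraint leaves a function of $B$ alone, and optimizing $B$ should produce a sub-linear total regret.

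The main obstacle I anticipate is the three-way interaction among $B$, $w$, and $\rho$: replicability forces $w$ to scale roughly like $LT/(B\rho)$, while both the rounding and $\FLL$ regret terms push $w$ down, and $B$ has opposite effects on the rounding term and the blocking overhead. Verifying that some choice of $(B,w)$ makes all three regret components --- together with the $\sqrt{\log(nT/\rho)}$ concentration factor --- simultaneously sub-linear in $T$ is the delicate calculation. A secondary obstacle is justifying the mini-batching bound on the blocking overhead in our oblivious but potentially highly non-stationary setting; this may require a drift-style argument tailored to the block structure rather than a direct appeal to classical $\FLL$, since we cannot assume the cost vectors are identically distributed across time.
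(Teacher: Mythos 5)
Your proposal is essentially the paper's own route: the algorithm is exactly $\FLLB$ (block the time horizon, round the cumulative cost at each block endpoint to a grid with a single shared random offset, play the minimizer against the grid point), the replicability argument is the same coupling of two runs via concentration of $\|c_{1:t}-c'_{1:t}\|_1$ plus the $\|\cdot\|_1/w$ grid-collision bound and a union bound over the $T/B$ transition points, and the same three-way balancing of $B$, $w=1/\eps$, and $\rho$ works out to the paper's $\widetilde{\calO}(DT^{5/6}n^{1/6}\rho^{-1/3})$. Two caveats worth noting: your itemized regret terms (i)--(ii) are not quite right as stated --- the idealized $\FLL$-at-block-granularity regret is not $O(Dw)$ but $O(D(BT/w + w))$, which the paper obtains by coupling $\FTPLB$ with the non-implementable $\BTPL$ and $\BTL$ (your mini-batching item (iii) recovers the same bound if carried out, so the decomposition is fixable rather than fatal); and the paper's \Cref{lemma:concentration_in_ell_1_norm} applies McDiarmid to the $\ell_1$ norm of the summed differences to get $L=O(\sqrt{nT\log(T/\rho)})$, which is a $\sqrt{n}$ sharper than your coordinatewise Hoeffding bound and improves only the $n$-dependence, not sublinearity in $T$.
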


\subsubsection{An Adversarially $\rho$-Replicable Algorithm for the Experts Problem}

Next, we explore the experts problem, where, in each period, the online learner selects an expert from a set of n experts, incurs the cost associated with the chosen expert, and observes the cost $[0,1]$ for all experts in that round. The objective of the online learner is to achieve low regret compared to the best expert in hindsight while ensuring replicability across all time steps. We present an adversarially $\rho$-replicable learning algorithm with sub-linear regret. This algorithm applies the strategy of partitioning the time horizon into blocks and only updating its actions at the end of each block. Initially, geometric noise is added to each expert, and at the end of each block, the algorithm chooses the expert with the lowest cumulative cost, incorporating the initial noise. We demonstrate that, with appropriately chosen block sizes and noise levels, it is possible to achieve low-regret learning along with adversarial $\rho$-replicability. To prove replicability, we leverage the properties of geometric noise, such as memorylessness, and argue that when two trajectories are sufficiently close in $\ell_{\infty}$ distance, with high probability the best experts in these trajectories are the same.

\medskip
\begin{theorem}
[Informal; see \Cref{thm:FTPLS-regret-replicability}]
Let $\rho>0$ be a parameter.
For the experts problem, there exists
an adversarially $\rho$-replicable algorithm with sub-linear regret.
\end{theorem}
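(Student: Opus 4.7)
The plan is to analyze a blocked version of Follow-the-Perturbed-Leader. Using the shared randomness $R$, draw perturbations $g_1,\dots,g_n$ i.i.d.\ from the geometric distribution with parameter $\eta$, partition $[T]$ into blocks of size $B$, and commit, throughout block $k$, to the action $a_k = \arg\min_i \bigl(L_{(k-1)B}(i) - g_i\bigr)$, where $L_t(i)=\sum_{s\le t}c_s(i)$. The regret analysis reduces to that of standard FTPL on $T/B$ meta-rounds whose cost vectors lie in $[0,B]^n$: the ``be-the-leader'' inequality contributes $\mathbb{E}[\max_i g_i] = O(\log n/\eta)$, while a stability argument bounds the per-meta-round switching probability by $O(\eta B)$ and hence the total ``price of FTPL'' by $O(\eta B^2\cdot T/B) = O(\eta BT)$. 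Thus the expected regret of the blocked algorithm is $O(\log n/\eta + \eta BT)$.

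For replicability, fix any oblivious sequence $\calD_1,\dots,\calD_T$ and draw two independent trajectories $S,S'$ with shared noise. First, by Hoeffding's inequality applied to the centred differences $c_s(i)-c'_s(i)\in[-1,1]$, for each block endpoint $t=kB$ and expert $i$ we have $|L_{kB}(i)-L'_{kB}(i)| \le \delta_k := O(\sqrt{kB\log(nT/\rho)})$ with probability $1-\rho B/(2nT)$; a union bound makes this hold simultaneously for all $(k,i)$ with probability at least $1-\rho/2$. Second, at any fixed block endpoint, condition on the identity $i^\star$ of the minimizer of $\{L_{kB}(i)-g_i\}_i$ and on the runner-up value $y=\min_{j\ne i^\star}(L_{kB}(j)-g_j)$; the memorylessness of the geometric law applied to $g_{i^\star}$ implies that the gap $G := y - (L_{kB}(i^\star)-g_{i^\star})$ is itself geometrically distributed (up to a one-step shift), so $\Pr[G < 2\delta_k] \le O(\eta\delta_k)$. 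Since a coordinate-wise shift of at most $\delta_k$ can alter the argmin only if $G < 2\delta_k$, the two runs agree at block endpoint $k$ with all but $O(\eta\delta_k)$ probability, and summing over block endpoints yields a total disagreement probability of $O\bigl(\eta\sum_{k=1}^{T/B}\sqrt{kB\log(nT/\rho)}\bigr) = \tilde O(\eta T^{3/2}/B)$.

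Setting this bound equal to $\rho/2$ forces $\eta = \tilde\Theta(\rho B/T^{3/2})$; substituting into the regret bound and optimizing $B$ (balancing $\log(n)\,T^{3/2}/(\rho B)$ against $\rho B^2/T^{1/2}$) yields $B = \tilde\Theta(T^{2/3}/\rho^{2/3})$ and a final expected regret of $\tilde O(T^{5/6}/\rho^{1/3})$, which is sub-linear in $T$ as claimed.

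The main obstacle is the anti-concentration step underlying replicability: the probability that the top two perturbed scores are within $2\delta$ of each other must be bounded \emph{uniformly} over the (arbitrary and data-dependent) score vector $L_{kB}$. A naive union bound over pairs $(i,j)$ of candidate minimizers would cost an extra factor of $n$; memorylessness is precisely what lets us avoid this, by conditioning on the identity of the current minimizer and arguing that, irrespective of who the runner-up is, the remaining randomness in $g_{i^\star}$ is distributed as a fresh geometric and hence the gap falls in a length-$2\delta$ interval with probability only $O(\eta\delta)$.
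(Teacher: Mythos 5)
Your proposal is correct and follows essentially the same route as the paper's proof of \Cref{thm:FTPLS-regret-replicability}: the same blocked FTPL with geometric noise, the same regret decomposition ($O(\log n/\eta)$ from be-the-leader plus $O(\eta BT)$ from stability via memorylessness), and the same replicability argument combining concentration of the cumulative-cost differences with a memorylessness-based anti-concentration bound on the gap between the leader and the runner-up. The parameter choices and the final $\tilde O(T^{5/6}\rho^{-1/3})$ bound match the paper's as well.
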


\subsubsection{A General Framework for Converting an Online Learning Algorithm to an Adversarially $\rho$-Replicable Learning Algorithm }
Furthermore, in~\Cref{sec:general-framework}, we present a general framework for converting an online learner to an adversarially $\rho$-replicable algorithm where its regret is a function of the regret of the initial algorithm. We illustrate how this general framework can be applied to both online linear optimization and expert problems.  However, this approach comes with a trade-off: while it provides a versatile and unified solution, it gives worse regret bounds compared to algorithms specifically designed for each individual setting.

We continue with the approach of grouping time steps into blocks and rounding the cumulative cost vector at the end of each block to the nearest grid point. 
Specifically, at the end of block \( i \), we compute the difference between the rounded cumulative cost vectors at the ends of blocks \( i \) and \( i - 1 \). 
This difference is then fed as the \( i \)-th input to the initial algorithm, and the action suggested by the initial algorithm is applied throughout block \( i + 1 \).

However, we need a slightly more sophisticated approach to relate the regret bound of the external algorithm to that of the internal algorithm (the initial online learner). When providing the \( i \)-th input to the internal algorithm, we use two fresh random grids to compute the rounded cumulative cost vectors at the ends of blocks \( i \) and \( i - 1 \). Consequently, the cumulative cost vector at the end of block \( i \) is rounded in two different ways: once for the \( i \)-th input to the internal algorithm and once for the \( (i+1) \)-th block.
(See \Cref{lem:regret-general-framework} for more details on why this helps.)

As before, we need to strike a balance between the grid size and the block size to achieve both low-regret and adversarial replicability.

\medskip
\begin{theorem}
[Informal; see~\Cref{thm:general-framework-guarantees}]
Let $\rho>0$ be a parameter.
Given an internal algorithm $\ALGINT$ with bounded regret, it can be converted to an adversarially $\rho$-replicable external algorithm $\ALGEXT$ with bounded regret.
\end{theorem}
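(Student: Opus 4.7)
The plan is to build $\ALGEXT$ as a wrapper that runs $\ALGINT$ on a coarsely blocked, randomly rounded version of the cost stream, and then tune the block size $B$ and grid width $\Delta$ to trade replicability against regret. Concretely, I partition the $T$ steps into $T/B$ blocks, let $C_i=\sum_{t\le iB}c_t$, and at the end of block $i$ draw two independent uniform offsets in $[0,\Delta)^n$ (part of the shared randomness $R$) to round $C_{i-1}$ and $C_i$ to points $\widetilde C_{i-1}$ and $\widehat C_i$ on the respective shifted grids. I then feed $\widehat C_i-\widetilde C_{i-1}$ as the $i$-th input to $\ALGINT$ and play the action returned by $\ALGINT$ throughout block $i+1$, exactly as sketched in the introduction.

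For replicability, fix a block boundary and compare the runs on $S$ and $S'$. Each coordinate of $C_i-C_i'$ is a sum of $iB$ bounded independent mean-zero terms, so a Hoeffding and union bound yields $\|C_i-C_i'\|_1=O(n\sqrt{T\log(nT/\rho)})$ uniformly in $i$ with probability $1-\rho/2$. Conditioned on this, the standard random-shifted-grid argument says that a grid of width $\Delta$ with uniform offset rounds $C_i$ and $C_i'$ to the same point with probability at least $1-O(\|C_i-C_i'\|_1/\Delta)$, and a union bound over all $T/B$ boundaries and the two grids per boundary shows that choosing $\Delta$ to dominate $n\sqrt{T\log(nT/\rho)}\cdot T/(B\rho)$ makes the inputs to $\ALGINT$ coincide on the two runs with probability at least $1-\rho$. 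Since $\ALGINT$ uses the same internal randomness on both runs, its outputs, and therefore the block-constant actions of $\ALGEXT$, agree at every time step.

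For regret, I write $\widehat C_i-\widetilde C_{i-1}=(C_i-C_{i-1})+E_i^{(2)}-E_{i-1}^{(1)}$, where each $E$ is a zero-mean rounding error of magnitude $O(\Delta)$ and, crucially, the two errors at a given boundary are independent of each other and of the errors at every other boundary; this independence is the entire purpose of drawing two fresh grids per boundary. The regret of $\ALGEXT$ on the true stream telescopes into (i) the regret of $\ALGINT$ on the perturbed block-difference sequence $(\widehat C_i-\widetilde C_{i-1})_i$, (ii) an $O(B)$ boundary correction accounting for the one-block delay in $\ALGINT$'s feedback, and (iii) a cross term $\sum_i\langle a_i-a^*, E_i^{(2)}-E_{i-1}^{(1)}\rangle$. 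By independence across boundaries the cross term is a martingale whose Azuma--Hoeffding bound is $O(\Delta\sqrt{T/B})$ instead of the naive deterministic $\Delta\cdot T/B$; combined with the regret of $\ALGINT$ over $T/B$ steps on inputs of magnitude $O(B+\Delta)$, this produces an overall regret for $\ALGEXT$ expressed as an explicit function of the regret of $\ALGINT$.

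The main obstacle is exactly this martingale step: the replicability constraint forces $\Delta$ to grow polynomially in $T/(B\rho)$, so a deterministic $\Delta\cdot T/B$ accumulation of rounding error would be fatal, and one must leverage the independence of the two fresh grids at each boundary to cancel first-order terms in the cross term. Once the cross-term concentration is in place, I set $B$ and $\Delta$ as appropriate powers of $T$, $n$, and $1/\rho$ balancing the three contributions above, which yields the sub-linear regret bound claimed in \Cref{thm:general-framework-guarantees}.
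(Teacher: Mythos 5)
Your construction, replicability argument, and the key structural insight (two fresh grids per boundary so that each block's action is independent of the rounding errors it is later evaluated against) are exactly the paper's approach in Algorithm~\ref{alg:general-framework} and \Cref{lem:regret-general-framework,thm:general-framework-guarantees}. The only difference is that your Azuma--Hoeffding step for the cross term is unnecessary for bounding \emph{expected} regret: since the action is independent of the two fresh rounding errors and their difference has mean zero, the cross term vanishes exactly in expectation, which is how the paper obtains $\E[\cost(\ALGEXT,S)]=\E[\cost(\ALGINT,\hat{S})]$ and hence the clean bound $(B+2n/\eps)\regret_{T/B}(\ALGINT)$ (your per-coordinate Hoeffding bound on $\|C_i-C_i'\|_1$ is also looser by a $\sqrt{n}$ factor than the paper's \Cref{lemma:concentration_in_ell_1_norm}, but this affects only the polynomial dependence on $n$).
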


\subsubsection{iid-Replicability for the Experts Problem}

To investigate and contrast the differences between iid and adversarial replicability, in \cref{sec:vanilla_replicability_experts} we design an iid $\rho$-replicable online algorithm for the experts problem which has low worst-case regret.
Recall that in this setting, we have \( n \) experts, and the cost vectors are drawn iid from an unknown distribution \( \mathcal{D} \) over \( [0,1]^n \).

While it is possible to use algorithms that are adversarially replicable, we adopt a different approach here to achieve better regret bounds. We still group time steps into blocks, but in this case, the blocks are of varying lengths. Additionally, we add geometric noise of varying magnitude at the end of each block, which helps in keeping the regret low while achieving replicability.

As before, at the end of each block, we select the expert with the minimum cumulative cost (incorporating the added noise) to use for the entire next block. The first block has length approximately \( \sqrt{T} \), providing an estimate of each expert’s expected cost under \( \mathcal{D} \) with an accuracy of \( T^{-1/4} \). Using these estimates, we select the best expert at the end of block 1 (after incorporating noise) and stick with them for the next block, which has length \( T^{3/4} \).
Proceeding in this manner, we ensure an expected regret of at most \( \sqrt{T} \) per block, with at most \( \log\log(T) \) blocks in total. (See \Cref{alg:vnilla_replicability_experts} for details.)

\medskip
\begin{theorem}
[Informal; see~\Cref{thm:vanilla_replicability_experts}]
Let $\rho>0$ be a parameter.
For the experts problem, there exists
an iid $\rho$-replicable algorithm with worst-case regret roughly $O(\sqrt{T}\log(n)/\rho)$.
\end{theorem}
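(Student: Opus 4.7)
My plan is to analyze the block-structured algorithm outlined in the overview: partition the horizon into $B = O(\log\log T)$ blocks whose lengths grow as $\ell_i \asymp T^{1-2^{-i}}$ (so $\ell_1 \asymp \sqrt{T}$, $\ell_2 \asymp T^{3/4}$, and $\sum_i \ell_i = \Theta(T)$); at the end of block $i$, form the per-expert empirical mean $\hat\mu_i^{(e)}$ from the samples collected during that block, perturb each by an independent geometric random variable $N_i^{(e)}$ of scale $\sigma_i$ drawn from the shared randomness $R$, and commit to $\arg\min_e \bigl(\hat\mu_i^{(e)} + N_i^{(e)}\bigr)$ for the entirety of block $i+1$.

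I would establish replicability block by block. By Hoeffding and a union bound over the $n$ experts, two independent runs $S,S'$ that share the randomness $R$ satisfy $\|\hat\mu_i - \hat\mu_i'\|_\infty \le \epsilon_i := O\bigl(\sqrt{\log(nB/\rho)/\ell_i}\bigr)$ at the end of block $i$ with probability at least $1-\rho/(2B)$. Because both runs use the same noise realization $N_i$, the argmin can disagree only if the minimum gap in the perturbed vector is at most $2\epsilon_i$; bounding this anti-concentration via the memoryless property of the geometric distribution yields a disagreement probability of order $\epsilon_i/\sigma_i$ (up to $\log n$ factors coming from identifying the top few candidate experts). Setting $\sigma_i \asymp \epsilon_i \cdot B\log(n)/\rho$ and union-bounding over the $B$ blocks then gives a total replicability failure probability of at most $\rho$.

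The regret in block $i+1$ decomposes into an estimation term $O(\epsilon_i \ell_{i+1})$, arising from following an expert whose true mean is within $O(\epsilon_i)$ of the best, plus a perturbation term $O(\sigma_i \log(n)\ell_{i+1})$, which uses that the maximum of $n$ geometric variables of scale $\sigma_i$ is $O(\sigma_i \log n)$ with high probability. The block-length rule is chosen precisely so that $\epsilon_i \ell_{i+1} \asymp \sqrt{T}$ for every $i$; substituting the choice of $\sigma_i$ above makes the perturbation term of the same order, up to the extra $\log(n)/\rho$ factor. Summing over the $O(\log\log T)$ blocks then gives the claimed $\tilde O(\sqrt{T}\log(n)/\rho)$ regret.

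The main technical obstacle is the argmin-stability step. A naive union bound over all pairs of experts introduces a spurious factor of $n$ in the required noise scale $\sigma_i$, which would degrade the final regret dependence from $\log n$ to $n$. Avoiding this requires isolating the comparison to the best perturbed expert against its one or two closest competitors and invoking geometric memorylessness only for that comparison; once this stability lemma is in place, the rest of the proof is a routine balancing of $\ell_i$ and $\sigma_i$ together with the union bound over the $O(\log\log T)$ blocks.
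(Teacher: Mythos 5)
Your core construction and analysis for the iid case track the paper's \Cref{alg:vnilla_replicability_experts} closely: doubly-exponentially growing blocks $L^{(i)} = T^{1-2^{-i}}$, geometric perturbation of the empirical costs at each block boundary, concentration of the empirical averages to accuracy $\epsilon_i \asymp \sqrt{\log(nB/\rho)/L^{(i)}}$, and an argmin-stability step proved by conditioning on the noise of all non-winning experts and invoking memorylessness only for the winner against its closest competitor (this is exactly the paper's event $F^{(i)}$, and it does avoid the spurious factor of $n$ you flag as the main obstacle). The balancing $\epsilon_i L^{(i+1)} \asymp \sqrt{T}$ and the union bound over $O(\log\log T)$ blocks are also the same. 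Two minor quantitative remarks: the paper anchors both runs to the population-level perturbed argmin $a_*^{(i)} = \argmin_{a}\left(\mu_a P^{(i-1)} - X_a^{(i)}\right)$ rather than comparing the two empirical perturbed vectors directly, which makes the memorylessness conditioning cleaner; and your choice $\sigma_i \asymp \epsilon_i B \log(n)/\rho$ carries a superfluous $\log n$ --- a scale $\sigma_i \asymp \epsilon_i B/\rho$ already drives the per-block disagreement probability down to $\rho/B$, and your extra factor is what produces the $\log^2 n$ in your final accounting instead of the paper's single $\log$.

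The genuine gap is that the theorem claims a \emph{worst-case} regret bound, i.e., one holding for every cost sequence, while your regret analysis is valid only under the iid assumption (it is phrased entirely in terms of true means $\mu_a$ and concentration around them). Without a safeguard, the block-commit algorithm has worst-case regret $\Omega(T)$: an oblivious adversary can make one expert marginally the perturbed leader at the end of block $B-1$ and then charge that expert cost $1$ and another expert cost $0$ throughout the final block, which has length $\Theta(T)$ and during which your algorithm never updates. The paper closes this by monitoring the realized regret online and switching to ordinary $\FTPL$ as soon as the regret reaches $K - 2\sqrt{T\ln n}$, which caps the worst-case regret at $K$ by construction; it must then also show that under iid inputs this fallback is triggered with probability at most $O(\rho)$ --- using uniform-in-time concentration of the cumulative costs (the event $G$) and a bound on the maximum of the geometric noises (the event $H$) --- so that replicability is not destroyed by the non-replicable fallback. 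You would need to add both the fallback mechanism and this ``fallback is rarely triggered'' argument to prove the stated theorem.
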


In light of our lower bounds (see below or \Cref{thm:lower_bound_n=2_vanilla_replicability_experts}) the regret achieved is optimal up to $\poly(\log\log(T))$ and other logarithmic factors.  

\subsubsection{Lower Bounds on Regret for Replicable Online Algorithms}

In \Cref{sec:lower_bounds} we prove lower bounds on the regret a $\rho$-replicable algorithm must suffer, both in the iid and adversarial settings. 
The main idea for the iid setting is to using the lower bound on the replicability of the coin problem from \cite{impagliazzo2022reproducibility}. 

The coin problem involves identifying the bias of a coin (either \( 1/2 + \tau \) or \( 1/2 - \tau \)) using \( T \) iid samples, with success probability \( 1 - \delta \) and replicability at least \( 1 - \rho \), which requires \( \rho \geq \Omega\left(\frac{1}{\tau\sqrt{T}}\right) \). 
By embedding this problem in the experts setting (even with $n=2$) with \( \tau \approx \regret/T \), we obtain \( \regret > \Omega\left(\frac{\sqrt{T}}{\rho}\right) \) 

In case of adversarial replicability with $n$ experts, we use the above idea in $\log(n)$ phases of length $\approx T/\log(n)$ to essentially embed $\log(n)$ different instances of the coin problem.
\medskip

\begin{theorem}[Informal; see \Cref{thm:lower_bound_n=2_vanilla_replicability_experts,thm:lower_bound_vanilla_replicability_experts}]
    If the parameter $\rho$ is sufficiently larger than $1/\sqrt{T}$ then any iid $\rho$-replicable algorithm must suffer a worst-case regret $\Omega(\sqrt{T}(1/\rho+\sqrt{\log(n)})$. 
    Further, any adversarial $\rho$-replicable algorithm must suffer a worst-case regret of $\Omega(\sqrt{T\log(n)}/\rho)$.
\end{theorem}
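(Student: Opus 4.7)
The plan is to reduce both statements to the replicability lower bound for the coin problem in \cite{impagliazzo2022reproducibility}: any $\rho$-replicable algorithm that correctly distinguishes a coin of bias $1/2+\tau$ from one of bias $1/2-\tau$ using $L$ iid samples with constant success probability must satisfy $\rho\geq \Omega(1/(\tau\sqrt{L}))$, equivalently $\tau\geq \Omega(1/(\rho\sqrt{L}))$. For the iid lower bound I would first treat $n=2$ by considering the two product distributions $\calD_{\pm}$ on $[0,1]^2$ whose coordinates are independent Bernoullis with means $(1/2\mp\tau,\,1/2\pm\tau)$. If the algorithm's expected regret is at most $R<c\tau T$ for a small constant $c$, then by Markov's inequality with constant probability the algorithm plays the cheaper expert in a strict majority of rounds; the identity of the majority-played expert is a function of the action trajectory alone and hence $\rho$-replicable, and it correctly identifies which of $\calD_{\pm}$ is active. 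This is a $\rho$-replicable coin-problem solver with $T$ samples and bias $\tau$, so the coin-problem bound applied with $\tau\asymp R/T$ forces $R\geq \Omega(\sqrt{T}/\rho)$. The additive $\sqrt{T\log n}$ term is inherited verbatim from the classical, non-replicable experts lower bound, which every online algorithm must satisfy.

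For the stronger $\sqrt{T\log n}/\rho$ adversarial bound I would chain $\log_2 n$ independent coin-problem instances across $\log_2 n$ consecutive phases of length $L=T/\log_2 n$. Fix a uniformly random root-to-leaf path $v_0,v_1,\ldots,v_{\log_2 n}$ in a balanced binary tree whose leaves are the experts, where each $v_i$ is a uniformly random child of $v_{i-1}$. In phase $i$ the adversary (acting obliviously) assigns Bernoulli loss $1/2-\tau$ to every expert in the subtree rooted at the ``winning'' child $v_i$, Bernoulli loss $1/2+\tau$ to every expert in the sibling's subtree, and deterministic loss $1$ to every expert outside $v_{i-1}$'s subtree. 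Under this construction the overall best expert is the leaf $v_{\log_2 n}$, and phase $i$ embeds an isolated coin problem distinguishing the two children of $v_{i-1}$: its answer is extracted as the side on which the learner played for a majority of rounds in that phase. By the same Markov-plus-coin-problem argument applied per phase, any $\rho$-replicable algorithm must suffer per-phase expected regret $\Omega(\sqrt{L}/\rho)$, and summing gives $\Omega(\log_2 n\cdot\sqrt{T/\log_2 n}/\rho)=\Omega(\sqrt{T\log n}/\rho)$.

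The main obstacle is in justifying the phase-by-phase reduction cleanly, since adversarial $\rho$-replicability bounds only the probability that the \emph{entire} length-$T$ trajectory disagrees across two runs, not a per-phase probability. I would handle this by defining the phase-$i$ guess as a deterministic function of the full trajectory (the majority-played child of $v_{i-1}$ during the rounds of phase $i$), so that with probability at least $1-\rho$ all phase guesses agree simultaneously across the two draws, and each phase-$i$ guess thereby inherits $\rho$-replicability without any union bound. A Markov-type argument on the per-phase expected regret then supplies the constant-probability correctness that the coin-problem lower bound requires. A secondary technical point is that the tree path is randomized while adversarial replicability is stated for a fixed distribution sequence: I would resolve this by conditioning on the path, applying the coin-problem bound pathwise, and averaging, using that the supremum of regret over sequences upper-bounds its expectation over any randomized instance.
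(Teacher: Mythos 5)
Your proposal is correct and follows essentially the same route as the paper: the $n=2$ iid bound via a majority-vote reduction to the coin-problem replicability lower bound with $\tau\asymp K/T$, and the adversarial bound via $\log_2 n$ halving phases of length $T/\log_2 n$ in which the losing half is frozen at cost $1$ (the paper phrases this as an induction on $\log_2 n$ with a WLOG choice of the surviving half rather than a randomized tree path, but the constructions coincide, and your explicit handling of per-phase replicability via whole-trajectory agreement matches the paper's conditioning step). One small detail: for the reduction to a \emph{single} coin the two experts' costs should be a deterministic function of the coin flips (the paper uses $c_t(2)=1-c_t(1)$); with your independent Bernoulli coordinates the auxiliary coordinate would have to be folded into the shared internal randomness, which does not quite match the iid-replicability coupling, so the anticorrelated (or one-deterministic-expert) construction is the cleaner choice.
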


\subsection{Further Related Work}
\label{subsec:related_work}

Online learning is a well-studied and classical field, with extensive literature on the subject. Our work is inspired by \cite{kalai2005efficient}.

Algorithmic replicability was independently introduced by~\citet{impagliazzo2022reproducibility,ghazi2021user,kwangjun2024}, and later studied in the context of multi-armed bandits~\citet{esfandiari2022replicable}, reinforcement learning~\citet{eaton2024replicable}, clustering~\citet{esfandiari2024replicable}. Several works have shown statistical equivalences and separations between replicability and other notions of algorithmic stability including differential privacy~\citep{bun2023stability,kalavasis2023statistical}. 
In fact, replicable bandit algorithms have also found use in medical contexts \cite{zhang2024replicable}.
However, all of these works focus on the \textit{iid-replicability setting}, making it intriguing to explore these problems in the \textit{adversarial replicability setting}.

\subsection{Further Directions \& Open Quesitons}
The questions we address in this paper fall within the full-information setting, specifically the experts problem and online linear optimization. A natural future direction is to extend these techniques to the bandit/partial-information setting while requiring \emph{adversarial replicability}, where we believe our ideas could be valuable.

As mentioned, it would also be interesting to explore other problems, such as clustering and reinforcement learning, that have been studied in the iid context, and extend them to the adversarial replicability setting.

Additionally, the current lower bounds on regret (\Cref{thm:lower_bound_n=2_vanilla_replicability_experts,thm:lower_bound_vanilla_replicability_experts}), while optimal for the iid-replicability setting, do not match the upper bounds in the adversarial replicability setting for either the experts problem or online linear optimization. An intriguing open question is to prove an \(\omega(\sqrt{T})\) lower bound on regret for a fixed value of \(\rho\), say \(1/100\), in the experts setting, or to design an algorithm that achieves \(O(\sqrt{T})\) regret while remaining \(\rho\)-replicable.

\section{Model}
\label{sec:Model}
\paragraph{Online Linear Optimization}

Online linear optimization is a linear extension of the online learning problem, where the online learner $\ALG$ must take a series of actions $a_1,a_2,\cdots$ each from a possibly infinite set of actions $\calA\subset \Rbb^n$. After the $t^{th}$ action is taken, the decision maker observes the current step's cost $c_t\in \calC\subset \Rbb^n$. The cost of taking an action $a$ when the cost is $c$ is $a\cdot c$, and the total cost incurred by the algorithm is $\sum_t a_t\cdot c_t$. We use $S=\{c_1,\cdots,c_T\}$ to denote the sequence of costs. The learner aims to minimize the regret accrued over $T$ steps where regret is defined as:

\[\regret(\ALG(S))=\sum_t c_t\cdot a_t-\min_{a\in \calA}\sum_t a\cdot c_t\]

\paragraph{Experts Problem}
In this problem, in each period $t$, the learner $\ALG$ picks an expert $a_t$ from a set of $n$ experts denoted by $\calA$, pays the cost associated with the chosen expert $c_t(a_t)$, and then observes the cost between $[0,1]$ for each expert.  Let $S$ denote the cost sequence $\{c_1,\cdots,c_T\}$. The goal of the learner is to have low regret, i.e. ensure that its total cost is not much larger than the minimum total cost of any expert.

\[\regret(\ALG(S))=\sum_t c_t(a_t)-\min_{a\in \calA} \sum_t c_t(a)\]

\paragraph{iid-Replicability} Let $\calD$ be a distribution over an input domain $\chi$, and let $\ALG$ be an online algorithm.
Let $S,S'$ be independent sequences of length $T$ of inputs drawn iid from $\calD$, and let $R$ represent the internal randomness used by $\ALG$.
Further, let $a_t(S,R)$ be the action at time $t$ on input $S$ and internal randomness $R$.
We say that $\ALG$ is $\rho$-iid replicable if:
\[\Pr_{S,S',R}[\forall t\in[T]: a_t(S,R)=a_t(S',R)]\geq 1-\rho.\]

\paragraph{Adversarial Online Replicability} 
Throughout the paper, we consider a more general version of replicability where all the examples in the sequences $S,S'$ are not drawn necessarily from the same distribution $\calD$. 
Instead, we assume an adversarial sequence of distributions $\calD_1, \calD_2, ..., \calD_T$, and the $t^{th}$ element of the sequence is drawn from distribution $\calD_t$.
Let $\calP = \otimes_{t=1}^{T}\calD_t$, i.e., the product distribution $\calD_1\times \ldots \times \calD_T$.
Under this assumption, we say an online algorithm $\ALG$ is adversarially $\rho$-replicable if for sequences $S,S'$ each of length $T$ sampled independently from $\calP$, it produces the same sequence of actions with probability at least $1-\rho$, i.e.,  

\[\Pr_{\substack{S,S',R}}[\forall t\in[T]:a_t(S,R)=a_t(S',R)]\geq 1-\rho.\]

\section{Adversarially $\rho$-Replicable Online Linear Optimization}
In this section, we present~\Cref{alg:FLLB} that is an adversarially $\rho$-replicable algorithm with sub-linear regret
for online linear optimization. 
Our algorithm leverages two main ideas of first partitioning the time horizon into blocks and updating its action only at the endpoints of blocks, referred to as transition points, and second rounding of cumulative cost vectors to the grid points in a random grid. The rounding idea is inspired by the ``Follow the Lazy Leader''($\FLL$) algorithm of~\citet{kalai2005efficient}. However, we argue that their algorithm is not necessarily replicable and to fix this issue, we also need the idea of blocking timesteps and updating the algorithm's actions only at the transition points.~\Cref{alg:FLLB} is described in~\Cref{sec:FLLB}. We analyze the regret of~\Cref{alg:FLLB} by appealing to a different algorithm ``Follow the Perturbed Leader with Block Updates'' that incurs the same expected cost as $\FLL$ (\Cref{sec:FTPLB}). Finally,~\Cref{thm:FLLB-regret-replicability} shows that by selecting the block size and noise level carefully~\Cref{alg:FLLB} achieves adversarial $\rho$-replicability with sublinear regret bounds.

\subsection{Follow the Lazy Leader with Block Updates ($\FLLB(\eps)$)}
\label{sec:FLLB}

\Cref{alg:FLLB} is a modified version of 
the ``Follow the Lazy Leader''($\FLL$) algorithm of~\citet{kalai2005efficient}. %
$\FLL$ first picks a $n$-dimensional grid with spacing $1/\eps$ where $\eps$ is given as input, and a random offset $p\sim Unif([0,1/\eps)^n)$. Then at each iteration, it considers the cube $\sum_{i=1}^{t-1}c_i+[0,1/\eps)^n$ and picks the unique grid point $g_{t-1}$ inside this cube (see~\Cref{fig:FLLB}). Subsequently, it plays the action that minimizes the total cost pretending the cost vector is $g_{t-1}$, $a_t=\argmin_{a\in \calA} a\cdot g_{t-1}$. $\FLL$ achieves sublinear regret, however, in~\Cref{example:FLL-not-replicable}, we argue that $\FLL$ does not necessarily satisfy the adversarial $\rho$-replicability property.

\begin{example}
\label{example:FLL-not-replicable}
Consider two different cost sequences $S, S'$ sampled from the same product distribution $\calP$ arriving in an online fashion. Suppose there are only two actions $a_1,a_2\in \calA$ where $a_1$ has a lower cost in hindsight. 
Suppose $c_1=c'_1=(0,B)$, where $B<0$. This means that day $0$ gives a bonus to the second action and pretends that $a_2$ has a better performance initially.
Therefore, the algorithm keeps predicting $a_2$ until after a sufficient number of examples arrive at timestep $\tau$ and it becomes clear that $a_1$ has better performance, and then, the algorithm shifts to outputting $a_1$ for the rest of the timesteps. %
However, since $S,S'$ are different sequences, it is not necessarily the case that the switching thresholds %
for $S$ and $S'$ are the same. %
Consequently, $\FLL$ is not necessarily adversarially $\rho$-replicable.
\end{example}

In \Cref{alg:FLLB}, in addition to the rounding idea of $\FLL$, we leverage the idea of \emph{blocking} timesteps to overcome the challenge presented in~\Cref{example:FLL-not-replicable}. Since at each timestep $t$, both $S_t,S'_t$ are drawn from the same distribution $\calD_t$, it is the case that %
the switching thresholds for $S$ and $S'$ are ``approximaltey'' the same. We use this intuition to \emph{block} the timesteps and during each block output the same action. Now as long as the block sizes are large enough, the switching threshold for both sequences $S, S'$ would be within the same block. That helps us to achieve replicability. However, since we are outputting the same action for each block and not updating our decision at each round, the regret will suffer. Nevertheless, we show that by carefully selecting the block size, we can achieve adversarial $\rho$-replicability and sublinear regret guarantees.

Now we are ready to formally present our ``Follow the Lazy Leader with Block Updates'' algorithm (\Cref{alg:FLLB}). 
In the beginning, choose offset $p\in [0,\frac{1}{\eps})^n$ uniformly at random, determining a grid $G=\{p+\frac{1}{\eps}z \mid z\in \calZ^n\}$. %
Then at each iteration $t$, if $t-1$ is a multiple of the block size $B$, the algorithm plays the action that minimizes the cost pretending the cost vector is $g_{t-1}$,
where $g_{t-1}$ is the unique point in $G\cap \Big(c_{1:t-1}+[0,\frac{1}{\eps})^n\Big)$. Therefore, the action played by the algorithm in step $t$ is $a_{t}=\argmin_{a\in \calA} a\cdot g_{t-1}$.
However, if $t-1$ is not a multiple of $B$, the algorithm stays with the action played in the previous round. In~\Cref{thm:FLLB-regret-replicability}, we prove that $\FLLB$ achieves strong replicability and sublinear regret guarantees.

\begin{algorithm}[!h]
\caption{Follow the Lazy Leader with Block Updates ($\FLLB(\eps,B)$)}
\label{alg:FLLB}
\KwIn{Sequence $S=\{c_1,\cdots,c_T\}$ arriving one by one over the time. $\eps>0$ and block size $B$.}

Sample $p\sim [0,\frac{1}{\eps})^n$ uniformly at random.\;

Let $a_1$ be a random action picked from $\calA$.\;
\sbcomment{changed the order of steps}

Let $G\leftarrow \{p+\frac{1}{\eps}z \mid z\in \calZ^n\}$.\;

\For{$t:1,\cdots,T$}{
    \If{$t-1$ is a multiple of $B$}{
        $g_{t-1}\leftarrow G\cap \Big(c_{1:t-1}+[0,\frac{1}{\eps})^n\Big)$.\;
        
        $a_t\leftarrow\argmin_{a\in \calA}  a\cdot g_{t-1}$.\;
    }
    \Else{
    \tcp{Stay with the previous action.}
        $a_t\leftarrow a_{t-1}$.\;
    }
}
\end{algorithm}

\begin{restatable}[Regret and Replicability Guarantees of $\FLLB$]{thm}{FLLBAnalysis}
\label{thm:FLLB-regret-replicability}
$\FLLB(\eps,B)$ is adversarially $\rho$-replicable and achieves regret $\widetilde{\calO}(D T^{5/6}n^{1/6}\rho^{-1/3})$, when $B=\Bigg(\frac{2(\sqrt{2\log(2T/\rho)}+2)\sqrt{n}T}{\rho}\Bigg)^{2/3}$, and $\eps=\frac{1}{\sqrt{BT}}$.
\end{restatable}

Proof of~\Cref{thm:FLLB-regret-replicability} is given in~\Cref{sec:proof-FLLB}. %
First, we analyze the regret of~\Cref{alg:FLLB}. 
However, we need to be careful when picking the values of $\eps$ and $B$ so that the regret does not blow up. If $\eps$ is too large then in~\Cref{alg:FLLB}, a lot of different accumulated cost vectors $c_{1:t-1}$ get mapped to the same grid point $g_{t-1}$ which would cause a lot of regret. Similarly, when $B$ is too large, the algorithm takes the same action for a large block of time and does not update its decision which would cause the regret to suffer. In order to pick optimal values for $\eps$ and $B$, we analyze the regret of~\Cref{alg:FLLB} by appealing to a different algorithm ``Follow the Perturbed Leader with Block Updates'' ($\FTPLB(\eps,B)$) that in expectation behaves identically to $\FLLB(\eps,B)$ on any single period and incurs the same expected cost. This algorithm is a modified version of the ``Follow the Perturbed Leader'' ($\FTPL(\eps)$) algorithm by Hannan-Kalai-Vempala.

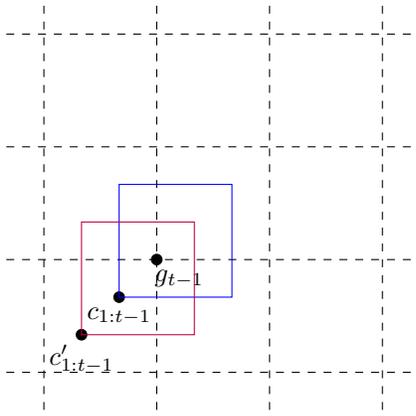
\begin{figure}[!hbpt]
    \centering
    \begin{tikzpicture}
        \draw[dashed] (0,-0.5)--(0,5);
        \draw[dashed] (1.5,-0.5)--(1.5,5);
        \draw[dashed] (3,-0.5)--(3,5);
        \draw[dashed] (4.5,-0.5)--(4.5,5);

        \draw[dashed] (-0.5,0)--(5,0);
        \draw[dashed] (-0.5,1.5)--(5,1.5);
        \draw[dashed] (-0.5,3)--(5,3);
        \draw[dashed] (-0.5,4.5)--(5,4.5);

        \filldraw (1.5,1.5) circle (2pt); 
        \node[below] at (1.8,1.5) {\small $g_{t-1}$};

        \filldraw (1,1) circle (2pt); 
        \node[below] at (1,1) {\small $c_{1:t-1}$};

        \draw[blue] (1, 1) rectangle (2.5, 2.5);

        \filldraw (0.5,0.5) circle (2pt); 
        \node[below] at (0.5,0.5) {\small $c'_{1:t-1}$};

        \draw[purple] (0.5, 0.5) rectangle (2, 2);
        
    \end{tikzpicture}
    \caption{An illustration of the $\FLLB(\eps,B)$ algorithm. The perturbed point $c_{1:t-1}+p$ is uniformly random over a cube of side $1/\eps$ with vertex at $c_{1:t-1}$ (similarly for $c'_{t-1}+p$). In~\Cref{thm:FLLB-regret-replicability}, we prove that by McDiarmid concentration bound, two different trajectories $c_{1:t-1}$ and $c'_{1:t-1}$ are within a distance $\Omega(\sqrt{nT})$ with high probability, %
    and they get mapped to the same grid point $g_{t-1}$ with high probability.}
    \label{fig:FLLB}
\end{figure}

\subsection{Follow the Perturbed Leader with Block Updates ($\FTPLB$)}
\label{sec:FTPLB}

First, we describe the $\FTPL(\eps)$ algorithm (\Cref{alg:FTPL}) by Hannan-Kalai-Vempala. $\FTPL(\eps)$ first hallucinates a fake day $0$ with a random cost vector $c_0$, where $c_0\sim Unif([0,\frac{1}{\eps})^n)$. Then in each time period $t$, it picks the action that minimizes the cost of all days $c_0,\cdots, c_{t-1}$:
\[a_t=\argmin_{a\in \calA} \langle c_0+c_1+\cdots+c_{t-1},a\rangle\]

\begin{algorithm}[!h]
\caption{Follow the Perturbed Leader($\FTPL(\eps)$)~\citep{kalai2005efficient}}
\label{alg:FTPL}
\KwIn{Sequence $S=(c_1,\cdots,c_T)$ arriving one by one over the time, $\eps>0$.}
Sample $c_0\sim [0,\frac{1}{\eps})^n$ uniformly at random.\;

\For{$t:1,\cdots,T$}{
    $a_t\leftarrow\argmin_{a\in \calA} \sum_{i=0}^{t-1} a\cdot c_i$.\;
}
\end{algorithm}

The same argument used in~\Cref{example:FLL-not-replicable} demonstrates that~\Cref{alg:FTPL} is not adversarially $\rho$-replicable. Now, we describe the ``Follow the Perturbed Leader with Block Updates''($\FTPLB(\eps,B)$,~\Cref{alg:FTPLB}). $\FTPLB$ first hallucinates a fake day $0$ with a random cost vector $c_0$, where $c_0\sim [0,\frac{1}{\eps})^n$ uniformly at random. Then, it picks the action that minimizes the cost of all days $c_0,\cdots, c_{B\floor*{t-1/B}}$, where $B$ is the block size.
\[a_t=\argmin_{a\in \calA} \langle c_0+c_1+\cdots+c_{B\floor*{(t-1)/B}},a\rangle\]

\begin{algorithm}[!h]
\caption{Follow the Perturbed Leader with Block Updates ($\FTPLB(\eps,B)$)}
\label{alg:FTPLB}
\KwIn{Sequence $S=(c_1,\cdots,c_T)$ arriving one by one over the time. $\eps>0$ and block size $B$.}
Sample $c_0\sim [0,\frac{1}{\eps})^n$ uniformly at random.\;

\For{$t:1,\cdots,T$}{
    \If{$t-1$ is a multiple of $B$}{
        $a_t\leftarrow\argmin_{a\in \calA} \sum_{i=0}^{t-1} a\cdot c_i$.\;
    }
    \Else{
    \tcp{Stay with the previous action.}
        $a_t\leftarrow a_{t-1}$.\;
    }
    \tcp{this is equivalent to having $a_t=\argmin_{a\in \calA}\sum_{i=1}^{B\floor{(t-1)/B}}a\cdot c_i$}
}
\end{algorithm}

\begin{remark}
In each step t when $t$ is a transition point, $\FLLB$ pretends that the cumulative cost vector is $g_{t-1}$ that is the unique grid point inside the cube $c_{1:t-1}+[0,1/\eps)^n$. Due to the random offset $p$ of the grid considered in $\FLLB$, $g_{t-1}$ is uniformly distributed inside this cube. Furthermore,  $\FTPLB$, also pretends the cumulative cost vector is $c_{1:t-1}+c_0$ where $c_0\sim Unif([0,1/\eps)^n)$, and therefore it has the same distribution as $g_{t-1}$. This implies, $\FLLB$ and $\FTPLB$ have the same expected cost.
\end{remark}

In order to bound the regret of~\Cref{alg:FTPLB}, first we show the following lemma holds.

\begin{lemma}
\label{lemma:BTL-regret}
First, we show that the non-implementable\footnote{it is not implementable since it assumes at time $t$ the algorithm knows the cost vector $c_t$ before picking action $a_t$.} ``Be the Leader'' (BTL) algorithm of picking 
\[a_t=\argmin_{a\in \calA}\langle c_1+\cdots,c_t,a\rangle\]
has zero (or negative) regret:
\[\langle c_1, a_1\rangle + \langle c_2, a_2\rangle+\cdots+\langle c_T, a_T\rangle \leq \langle c_1, a_T\rangle + \langle c_2, a_T\rangle+\cdots+\langle c_T, a_T\rangle.\]
\end{lemma}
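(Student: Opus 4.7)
The plan is to prove this by induction on the time horizon $T$. The base case $T=1$ is immediate, since $a_1 = \argmin_{a\in\calA}\langle c_1,a\rangle$, so trivially $\langle c_1,a_1\rangle \leq \langle c_1,a_1\rangle$.

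For the inductive step, assume the statement holds for horizon $T-1$, namely
\[
\sum_{t=1}^{T-1}\langle c_t,a_t\rangle \;\leq\; \sum_{t=1}^{T-1}\langle c_t,a_{T-1}\rangle.
\]
Adding $\langle c_T, a_T\rangle$ to both sides, it suffices to show that
\[
\sum_{t=1}^{T-1}\langle c_t,a_{T-1}\rangle + \langle c_T,a_T\rangle \;\leq\; \sum_{t=1}^{T}\langle c_t,a_T\rangle,
\]
which, after cancelling the $c_T$ term and using linearity, is equivalent to
\[
\Bigl\langle \sum_{t=1}^{T-1}c_t,\,a_{T-1}\Bigr\rangle \;\leq\; \Bigl\langle \sum_{t=1}^{T-1}c_t,\,a_T\Bigr\rangle.
\]
But this last inequality is exactly the definition of $a_{T-1}$ as the minimizer of $a\mapsto \langle c_1+\cdots+c_{T-1},a\rangle$ over $\calA$, so it holds automatically.

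There is no real obstacle here; the argument is the standard Kalai--Vempala ``Be the Leader'' telescoping/induction trick, and the only subtlety is making sure to peel off the final term $\langle c_T,a_T\rangle$ (which appears on both sides after the induction hypothesis is invoked) so that the remaining inequality reduces to the optimality condition defining $a_{T-1}$. The induction will go through for any action set $\calA$ as long as the $\argmin$ is well-defined, which is assumed throughout.
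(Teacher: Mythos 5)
Your proof is correct and follows essentially the same route as the paper's: induction on $T$, invoking the induction hypothesis for horizon $T-1$, using the optimality of $a_{T-1}$ for the cumulative cost $c_1+\cdots+c_{T-1}$ to swap it for $a_T$, and then adding back the $\langle c_T,a_T\rangle$ term. No gaps.
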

We defer the proof to~\Cref{sec:appendix-proof-regret-BTL}.

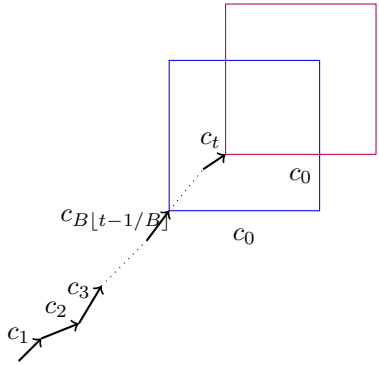
\begin{figure}[H]
\centering
\begin{tikzpicture}
    \draw[blue] (0, 0) rectangle (2, 2);
    \node[below] at (1,-0.1) {\small $c_0$};
    \draw[purple] (0.75, 0.75) rectangle (2.75 , 2.75);
    \node[below] at (1.75,0.7) {\small $c_0$};
    
    \draw[thick,->] (-2, -2)--(-1.7,-1.7);
    \node[above] at (-2,-1.9) {\small $c_1$};
    \draw[thick, ->] (-1.7, -1.7)--(-1.2,-1.5);
    \node[above] at (-1.5,-1.55) {\small $c_2$};

    \draw[thick, ->] (-1.2, -1.5)--(-0.9,-1);
    \node[above] at (-1.2,-1.3) {\small $c_3$};

    \draw[thick, ->] (-0.3, -0.4)--(0,0);
    \node[above] at (-0.7,-0.4) {\small $c_{B\floor{t-1/B}}$};

    \draw [dotted](-0.9,-1)--(-0.3, -0.4);

    \draw[thick, ->] (0.45, 0.55)--(0.75,0.75);
    \node[above] at (0.55,0.7) {\small $c_t$};
    
    \draw [dotted](0,0)--(0.45, 0.55);
\end{tikzpicture}
\caption{An illustration of the $\FTPLB$ algorithm. The blue and red boxes are corresponding to $\FTPLB$ and $\BTPL$ respectively.}
\label{fig:FTPLB-BTPL}
\end{figure}

\begin{theorem}
\label{theorem:regret-replicable-FTPLB}
Assume the maximum $\ell_1$ length of any cost vector $c\in \calC$ is $1$, and the $\ell_1$ diameter of the action set $\calA$ is $D$. If $c_0\sim \text{Unif}([0,1/\eps]^n)$ then the expected regret of $\FTPLB(\eps,B)$ in $T$ steps satisfies:
\[\Ex[\regret]\leq D(B\eps T+1/\eps)\]

setting $\eps=1/(\sqrt{BT})$ gives an expected regret at most $D\sqrt{BT}$.
\end{theorem}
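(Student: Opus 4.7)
The plan is to recognize that $\FTPLB(\eps,B)$ is exactly $\FTPL(\eps)$ run on a compressed sequence of length $T/B$ built from block sums, and then apply the classical Kalai--Vempala regret analysis to the compressed instance. Define $\tilde c_k := \sum_{t=(k-1)B+1}^{kB} c_t$ for $k=1,\ldots,T/B$; subadditivity of the $\ell_1$ norm gives $\|\tilde c_k\|_1 \leq B$ (and hence also $\|\tilde c_k\|_\infty \leq B$). Because $\FTPLB$ keeps the same action, say $a_k$, throughout block $k$, the algorithm's total cost is $\sum_k a_k\cdot \tilde c_k$ and the cost of any fixed action $a^*\in\calA$ is $\sum_k a^*\cdot \tilde c_k$. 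Moreover, the update rule $a_k=\argmin_{a\in\calA} a\cdot(c_0+\tilde c_1+\cdots+\tilde c_{k-1})$ is exactly the choice $\FTPL(\eps)$ would play at time $k$ on the sequence $(\tilde c_1,\ldots,\tilde c_{T/B})$. So the regret of $\FTPLB(\eps,B)$ on $(c_1,\ldots,c_T)$ equals the regret of $\FTPL(\eps)$ on $(\tilde c_1,\ldots,\tilde c_{T/B})$, and the rest of the argument works entirely with the compressed sequence.

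Next, I would analyze the compressed regret via the standard ``peek one block ahead'' trick. Let $b_k:=\argmin_{a\in\calA} a\cdot(c_0+\tilde c_1+\cdots+\tilde c_k)$ be the Be-the-Perturbed-Leader action at block $k$. Applying \Cref{lemma:BTL-regret} to the augmented sequence $c_0,\tilde c_1,\ldots,\tilde c_{T/B}$ gives $c_0\cdot b_0+\sum_k \tilde c_k\cdot b_k \leq c_0\cdot a^*+\sum_k \tilde c_k\cdot a^*$ for every fixed $a^*\in\calA$. Rearranging and bounding $|c_0\cdot(a^*-b_0)|\leq \|c_0\|_\infty\cdot D\leq D/\eps$ yields
\[
\sum_k \tilde c_k\cdot b_k \;-\; \min_{a^*\in\calA}\sum_k \tilde c_k\cdot a^* \;\leq\; D/\eps,
\]
which contributes the ``$D/\eps$'' term in the target bound.

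The main technical step, and the part I expect to be the main obstacle, is to bound the expected ``switching cost'' $\Ex[\sum_k \tilde c_k\cdot(a_k-b_k)]$ taken over the random offset $c_0\sim \mathrm{Unif}([0,1/\eps)^n)$. This is the classical Kalai--Vempala stability argument: for any fixed realization of $\tilde c_1,\ldots,\tilde c_k$, the minimizers of $a\mapsto a\cdot(s_{k-1}+c_0)$ and $a\mapsto a\cdot(s_{k-1}+\tilde c_k+c_0)$ agree unless $c_0$ lies in a ``seam'' region of the perturbation cube whose $n$-dimensional Lebesgue measure is controlled by the translation $\tilde c_k$. Coupling $c_0$ with $c_0+\tilde c_k$, and bounding each per-block cost change by $D\|\tilde c_k\|_\infty$, yields $\Ex[\tilde c_k\cdot(a_k-b_k)]\leq \eps D\|\tilde c_k\|_1\|\tilde c_k\|_\infty\leq \eps D B^2$ per block. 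I would aim to import this argument essentially verbatim from Kalai--Vempala, since no structural feature of the present block-update setting interferes with it.

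Summing over the $T/B$ blocks gives a total switching cost of $\eps D B^2\cdot (T/B)=\eps D B T$, so combining with the BTL bound produces $\Ex[\regret]\leq D(\eps B T+1/\eps)$. Setting $\eps=1/\sqrt{BT}$ optimizes the right-hand side to $2D\sqrt{BT}=O(D\sqrt{BT})$, matching the claim.
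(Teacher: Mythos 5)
Your proposal is correct and follows essentially the same route as the paper's proof: the chain $\FTPLB \to$ be-the-perturbed-leader $\to$ BTL, with the overlapping-cubes stability argument contributing the $\eps DBT$ term and \Cref{lemma:BTL-regret} plus the bound $|c_0\cdot(a^*-b_0)|\le D/\eps$ contributing the rest. The only difference is organizational — you first reduce to standard $\FTPL$ on the compressed sequence of $T/B$ block-sums of $\ell_1$ norm at most $B$ and pay $\eps DB^2$ per block, whereas the paper runs the same comparison per time step against a BTPL tracking the full prefix and pays $\eps DB$ per step; both yield the identical bound $D(\eps BT+1/\eps)$.
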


\begin{proof}[Proof of~\Cref{theorem:regret-replicable-FTPLB}]
We consider another non-implementable algorithm $\BTPL$ and argue that the expected cost paid by $\FTPLB$ at time $t$ is close to the expected cost paid by $\BTPL$ at time $t$. $\BTPL$ takes the action $a_t=\argmin_{a\in \calA} \sum_{i=0}^t a\cdot c_i$. Here, similar to $\BTL$, $\BTPL$ assumes knowledge of cost vector $c_t$ before choosing $a_t$ which is infeasible. Moreover, we demonstrate that the expected cost of $\BTPL$ is close to $\BTL$, and since $\BTL$ has zero (or negative) regret, we can upper bound the regret of $\FTPL$.

$\FTPLB$ is choosing its objective function 
$a_t=\argmin_{a\in \calA}\sum_{i=1}^{B\floor{(t-1)/B}}a\cdot c_i$ uniformly at random from a box of side-length $1/\eps$ with corner at $c_1+\cdots+c_{B\floor*{t-1/B}}$. $\BTPL$ is choosing its objective function uniformly at random from a box of side-length $1/\eps$ with corner at $c_1+\cdots+c_t$ (\Cref{fig:FTPLB-BTPL}). We call these boxes $\Boxx^{\FTPLB}$ and $\Boxx^{\BTPL}$ respectively. These boxes each have volume $V=(1/\eps)^n$. Their bottom corners differ by the sum of vector $c_{B\floor*{t-1/B}+1},\cdots,c_t$ each of $\ell_1$ length at most $1$, and therefore in total differing by an $\ell_1$ length at most $B$. As a result, the intersection of the boxes has volume at least $(1/\eps)^n-B(1/\eps)^{n-1}\geq V(1-B\eps)$.

This implies that for some $p\leq B\eps$, we can view $\FTPLB$ as with probability $1-p$ choosing its objective function uniformly at random from $I=\Boxx_t^{\FTPLB}\cap \Boxx_t^{\BTPL}$, and with probability $p$ choosing its objective function uniformly at random from $F=\Boxx_t^{\FTPLB}\setminus \Boxx_t^{\BTPL}$. Similarly, we can view $\BTPL$ as with probability $1-p$ choosing its objective uniformly at random from $I$, and with probability $p$ choosing its objective function uniformly at random from $B=\Boxx_t^{\BTPL}\setminus \Boxx_t^{\FTPLB}$. Thus, if $\alpha_I=\Ex_{c\sim I}[\langle c_t,a_t\rangle\mid a_t=\argmin_{a\in \calA}\langle c,a\rangle]$, $\alpha_F=\Ex_{c\sim F}[\langle c_t,a_t\rangle\mid a_t=\argmin_{a\in \calA}\langle c,a\rangle]$, and $\alpha_B=\Ex_{c\sim B}[\langle c_t,a_t\rangle\mid a_t=\argmin_{a\in \calA}\langle c,a\rangle]$, then the expected cost of $\FTPLB$ at time $t$ is $(1-p)\alpha_I+p\alpha_F$ and the expected cost of $\BTPL$ at time $t$ is $(1-p)\alpha_I+p\alpha_B$. %

Now, if we prove that $|\alpha_F-\alpha_B|\leq D$, then the difference in expected cost between $\FTPL$ and $\BTPL$ at time $t$ is at most $p|\alpha_F-\alpha_B|\leq BD\eps$. Summing over all times $t$ we get a difference at most $BDT\eps$. To prove that $|\alpha_F-\alpha_B|\leq D$, note that for any $c\in \calC$ and $a,a'\in \calA$, we have $\langle c,a\rangle - \langle c,a'\rangle\leq D$. This follows from the fact that the $\ell_{\infty}$ length of $c$ is at most $1$ (this follows from the assumption that $\ell_1$ length of $c$ is at most $1$) and the $\ell_1$ diameter of $\calA$ is $D$. Consequently, $|\alpha_F-\alpha_B|\leq D$.

Next, we bound the difference between the cost of $\BTPL$ and $\BTL$. Let $a_t^{\BTPL}$ and $a_t^{\BTL}$ show the actions taken by $\BTPL$ and $\BTL$ at time $t$ respectively.
Let $a_T^{\BTL}$ be the optimal action in hindsight. For any vector $c_0$, by~\Cref{lemma:BTL-regret}, we have
\begin{align*}
\langle c_0,a_0^{\BTPL}\rangle+\cdots+\langle c_T,a_T^{\BTPL}\rangle\leq \langle (c_0+\cdots+c_T),a_T^{\BTPL}\rangle \leq \langle (c_0+\cdots+c_T),a_T^{\BTL}\rangle.
\end{align*}
Therefore,
\begin{align}
\langle c_1,a_1^{\BTPL}\rangle +\cdots+\langle c_T, a_T^{\BTPL}\rangle\leq \langle(c_1+\cdots+c_T),a_T^{\BTL}\rangle+\langle c_0, a_T^{\BTL}-a_0^{\BTPL}\rangle.
\label{ineq:regret-BTPL}
\end{align}

The left-hand side of~\Cref{ineq:regret-BTPL} is the cost of $\BTPL$. The first term on the right-hand side of~\Cref{ineq:regret-BTPL} is the cost of the optimal action in hindsight. Therefore, the expected regret of $\BTPL$ is at most $\Ex[\langle c_0, a_T^{\BTL}-a_0^{\BTPL}\rangle]$. Since $\calA$ has $\ell_1$ diameter at most $D$, and each coordinate of $c_0$ has expected value $1/\eps$, therefore $\Ex[\langle c_0, a_T^{\BTL}-a_0^{\BTPL}\rangle]\leq D/\eps$.

This implies the regret of $\FTPL$ is at most $BDT\eps+D/\eps$ and the proof is complete.
\end{proof}

\subsection{Proof of~\Cref{thm:FLLB-regret-replicability}}
\label{sec:proof-FLLB}
Now we are ready to analyze regret and replicability guarantees of $\FLLB(\eps)$.

\FLLBAnalysis*

\begin{proof}
First, we bound the probability of non-replicability. Recall that~\Cref{alg:FLLB} chooses a uniformly random grid of spacing $1/\eps$. In each transition $t$, if $t$ is a transition point $\{B+1,2B+1,\cdots,\}$,~\Cref{alg:FLLB} %
picks the action that minimizes the total cost, pretending the cumulative cost vector is $g_{t-1}$, where $g_{t-1}$ is a grid point that lies inside $c_{1:t-1}+[0,1/\eps)^n$ (see \Cref{fig:FLLB}).
 Consider two different trajectories $c_1,\cdots,c_t$ and $c'_1,\cdots,c'_t$ where each $c_i,c'_i\sim \calD_i$. First, we bound the probability that $g_{t-1}\neq g'_{t-1}$ as follows. This event happens iff the grid point in $c_{1:t-1}+[0,\frac{1}{\eps}]^n$ is not in $c'_{1:t-1}+[0,\frac{1}{\eps}]^n$. First, we argue that for any $v\in \Rbb^n$, the cubes $[0,1/\eps]^n$ and $v+[0,1/\eps]^n$ overlap in at least a $(1-\eps\ell_1(v))$ fraction. Take a random point $x\in [0,1/\eps]^n$. If $x\notin v+[0,1/\eps]^n$, then for some $i$, $x_i \notin v_i+[0,1/\eps]$ which happens with probability at most $\eps\ell_1(v_i)$ for any particular $i$. By the union bound, we are done. Thus in order to upper bound the probability that $g_{t-1}\neq g'_{t-1}$, we need to upper bound the $\ell_1$ distance between cumulative cost vectors $c_{1:t}$ and $c'_{1:t}$. For any $1\leq t\leq T$:

\begin{align}
\label{Ineq:MD2}
&\Pr\Bigg(\ell_1(c_{1:t}-c'_{1:t})\geq p\sqrt{nT}\Bigg)\\
&\leq \Pr\Bigg(\ell_1(c_{1:t}-c'_{1:t})\geq p\sqrt{nt}\Bigg)\leq \exp(-\frac{(p-2)^2n}{2})\label{eq:guarantees-FLLB-1}\\
&\leq \frac{1}{\exp(n\log(2T/\rho))}\leq (\frac{\rho}{2T})^n\leq \frac{\rho}{2T}\label{eq:guarantees-FLLB-2}
\end{align}

where~\Cref{eq:guarantees-FLLB-1} holds by~\Cref{lemma:concentration_in_ell_1_norm}, and~\Cref{eq:guarantees-FLLB-2} holds by setting $p=\sqrt{2\log (2T/\rho)}+2$.
This implies that the probability of non-replicability at a transition point $t$, i.e. $g_{t-1}\neq g'_{t-1}$, is at most %
$\frac{\rho}{2T}+ \eps p\sqrt{nT}$. By taking union bound over all the transition points $t=\{B+1,2B+1,\cdots\}$ the total probability of non-replicability is at most %
$(\frac{\rho}{2T}+\eps p\sqrt{nT})\frac{T}{B}$. Consequently, we can set values of $B, \eps$ such that the probability of non-replicability over all the time-steps is at most $\rho$: 
\[ (\frac{\rho}{2T}+\eps p \sqrt{nT})\frac{T}{B}\leq \rho\]
in order to achieve this, it suffices to have: 
\[\frac{\eps p \sqrt{n}T^{3/2}}{B}=\rho/2\]
or equivalently:
\[\eps=\frac{B\rho}{2p\sqrt{n}T^{3/2}}\]

Additionally, in order to minimize regret by~\Cref{theorem:regret-replicable-FTPLB}, we need to set $\eps=\frac{1}{\sqrt{BT}}$. Now by setting
\[\frac{B\rho}{2p\sqrt{n}T^{3/2}}=\frac{1}{\sqrt{BT}}\]
it implies that 
\[B=\Bigg(\frac{2(\sqrt{2\log(2T/\rho)}+2)\sqrt{n}T}{\rho}\Bigg)^{2/3}=\widetilde{O}(T^{2/3}n^{1/3}(1/\rho)^{2/3})\]
Finally, plugging in the value of $B$ in~\Cref{theorem:regret-replicable-FTPLB}, gives a regret bound of $\calO(D\sqrt{BT})=\widetilde{\calO}(D T^{5/6}n^{1/6}\rho^{-1/3})$.
\end{proof}

\section{Experts}
In this section, we investigate the experts problem and present our algorithm ``Follow the Perturbed Leader with Block Updates and Geometric Noise''($\FTPLBS$) (\Cref{alg:FTPLB-star}) that is an adversarially $\rho$-replicable learning algorithm with sublinear regret. This algorithm applies the strategy of partitioning the time horizon into blocks and only updating its actions at the end of each block. Initially, geometric noise is added to each expert, and at the end of each block, the algorithm chooses the expert with the lowest cumulative cost, considering the initial noise.~\Cref{thm:FTPLS-regret-replicability} demonstrates that with appropriately chosen block sizes and noise levels, it is possible to achieve no-regret learning along with adversarial $\rho$-replicability. %
\paragraph{Follow the Perturbed Leader with Block Updates and Geometric Noise ($\FTPLB^{\star}$)}
In the beginning, for each expert $a\in \calA$, we sample $X_a$ from geometric distribution $\Geo(\eps)$. Then, when $t-1$ is a multiple of the block size $B$, i.e. is a transition point, the algorithm calculates the perturbed cumulative loss of each expert $a\in \calA$ as %
$\sum_{i=1}^{t-1}c_i(a)-X_a$, and selects the expert with smallest perturbed cumulative loss: $a_t\leftarrow \argmin_{a\in \calA} \sum_{i=1}^{t-1}c_i(a)-X_a$. When $t-1$ is not a multiple of $B$, the algorithm picks the expert that was chosen in the last round, i.e. $a_{t}\leftarrow a_{t-1}$.

\begin{algorithm}[!h]
\caption{Follow the Perturbed Leader with Geometric Noise and Block Updates ($\FTPLB^{\star}(\eps,B)$)}
\label{alg:FTPLB-star}
\KwIn{Sequence $S=(c_1,\cdots,c_T)$ arriving one by one over the time. $\eps>0$ and block size $B$.}
Sample for every expert $a\in \calA$, $X_a\sim \Geo(\eps)$.\;%

\For{$t:1,\cdots,T$}{
    \If{$t-1$ is a multiple of $B$}{
        
        $a_t\leftarrow\argmin_{a\in \calA} \sum_{i=1}^{t-1} c_i(a)-X_a$.\;
    }
    \Else{
    \tcp{Stay with the previous action.}
        $a_t\leftarrow a_{t-1}$.\;
    }
    \tcp{this is equivalent to having $a_t=\argmin_{a\in \calA}\sum_{i=1}^{B\floor{(t-1)/B}}c_i(a)$}
}
\end{algorithm}

\begin{theorem}
\label{theorem:regret-replicable-FTPLB-star}
The expected regret of $\FTPLB^{\star}(\eps,B)$ in $T$ steps satisfies:
\[\Ex[\regret(\FTPLB^{\star})]\leq \eps BT+\frac{\ln n}{\eps}\]
setting $\eps=\sqrt{\frac{\ln n}{BT}}$ gives an expected regret at most $\sqrt{BT\ln n}$.
\end{theorem}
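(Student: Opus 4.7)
My plan is to mirror the structure of the proof of \Cref{theorem:regret-replicable-FTPLB} from the linear-optimization section, substituting geometric noise on the experts for a uniform offset on a cube. Specifically, I will introduce an auxiliary non-implementable ``Be the Perturbed Leader'' algorithm $\BTPL^\star$ which, at time $t$, plays $a_t^{\BTPL^\star} = \argmin_{a \in \calA} \sum_{i=1}^t c_i(a) - X_a$, i.e., it peeks at the current round's cost before choosing. The expected regret of $\FTPLB^\star$ then decomposes as (expected regret of $\BTPL^\star$) plus (summed per-round expected cost discrepancy between $\FTPLB^\star$ and $\BTPL^\star$), and I will bound each piece separately.

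For the first piece I adapt the $\BTL$ argument of \Cref{lemma:BTL-regret}. Treating the vector $(-X_a)_{a\in\calA}$ as a fictitious day-$0$ cost vector, $\BTL$ applied to the augmented sequence yields $\sum_{t=1}^T c_t(a_t^{\BTPL^\star}) \leq \sum_{t=1}^T c_t(a^*) + X_{a_0^{\BTPL^\star}} - X_{a^*} \leq \sum_{t=1}^T c_t(a^*) + \max_a X_a$, where $a^*$ is the best expert in hindsight. Using the standard bound $\Ex[\max_{a \in \calA} X_a] \leq \frac{\ln n}{\eps} + O(1/\eps)$ for $n$ i.i.d.\ $\Geo(\eps)$ variables then yields expected regret $\frac{\ln n}{\eps}$ for $\BTPL^\star$ up to lower-order terms.

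For the second piece I compare $\FTPLB^\star$ and $\BTPL^\star$ step by step. At time $t$, set $t' \coloneqq B\lfloor (t-1)/B \rfloor$ so that $\FTPLB^\star$ picks $\argmin_a L_{t'}(a) - X_a$ while $\BTPL^\star$ picks $\argmin_a L_{t'}(a) + \Delta_t(a) - X_a$, where $L_s(a) \coloneqq \sum_{i=1}^s c_i(a)$ and $\Delta_t(a) \coloneqq \sum_{i=t'+1}^t c_i(a) \in [0,B]$. The memorylessness of $\Geo(\eps)$---conditioned on $X_a \geq k$, the shifted variable $X_a - k$ is again $\Geo(\eps)$---lets me couple the two argmins so that they agree except on an event of probability at most $\eps B$. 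Since per-round costs lie in $[0,1]$, the per-round expected excess cost of $\FTPLB^\star$ over $\BTPL^\star$ is at most $\eps B$, contributing $\eps B T$ across all $T$ rounds.

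Summing the two pieces gives expected regret at most $\eps B T + \frac{\ln n}{\eps}$; setting $\eps = \sqrt{\ln n / (BT)}$ optimizes this to $O(\sqrt{BT \ln n})$, matching the theorem up to constants. The main obstacle I anticipate is executing the coupling in the second piece without picking up an extraneous factor of $n$ from a naive union bound over experts; obtaining the sharp $\eps B$ per-step bound requires the kind of inductive one-unit-at-a-time coupling that underlies the Kalai--Vempala analysis of $\FTPL$ with geometric noise, applied $B$ times per block, rather than a single-shot union bound over the $n$ coordinates.
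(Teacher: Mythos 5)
Your proposal is correct and follows essentially the same route as the paper: the same auxiliary ``be the perturbed leader'' algorithm, the same decomposition into its regret (bounded via the day-$0$ trick and $\Ex[\max_a X_a]\le 1+H_n/\eps$) plus a per-round switching cost of $\eps B$, with the latter established exactly as you anticipate --- by conditioning on the identity of the $\FTPLBS$ argmin and on the other experts' noise values, then applying memorylessness of $\Geo(\eps)$ to a shift of at most $B$, which avoids any union bound over the $n$ experts.
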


\medskip

In order to prove~\Cref{theorem:regret-replicable-FTPLB-star}, first, consider the non-implementable algorithm ``Be the Perturbed Leader'' ($\BTPL$) that takes action $a_t^{\BTPL} = \argmin_{a\in \calA}\sum_{i=1}^t c_i(a)-X_a$ in each round has small regret. It is non-implementable since it assumes knowledge of the cost vector of the current step before taking an action in the current period.
\begin{lemma}
\label{lem:BTPL-regret-experts}
For any expert $a\in \calA$, 
\[\Ex[\cost(\BTPL)]\leq \cost(a)+\Ex[\max_{a\in \calA} X_a]\]
\end{lemma}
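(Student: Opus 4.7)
The plan is to treat the geometric noise as a ``phantom'' cost incurred at a fictitious time $t=0$ and reduce to the Be-the-Leader lemma (\Cref{lemma:BTL-regret}). Concretely, define $c_0(a) \coloneqq -X_a$ for each expert $a \in \calA$. Then for every $t \geq 0$, the BTPL action
\[
a_t^{\BTPL} \;=\; \argmin_{a \in \calA}\Bigl(\sum_{i=1}^t c_i(a) - X_a\Bigr) \;=\; \argmin_{a \in \calA}\sum_{i=0}^t c_i(a)
\]
is exactly the Be-the-Leader action on the augmented cost sequence $c_0, c_1, \ldots, c_T$ (which is a linear optimization instance over the standard-basis action set $\{e_a : a \in \calA\}$, so \Cref{lemma:BTL-regret} applies directly).

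Next I would invoke \Cref{lemma:BTL-regret} on this augmented sequence with the ``optimal in hindsight'' action taken to be an arbitrary fixed expert $a \in \calA$. This yields
\[
\sum_{t=0}^{T} c_t\bigl(a_t^{\BTPL}\bigr) \;\le\; \sum_{t=0}^{T} c_t(a).
\]
Pulling out the $t=0$ term on each side and substituting $c_0(\cdot) = -X_{(\cdot)}$ gives
\[
\cost(\BTPL) \;-\; X_{a_0^{\BTPL}} \;\le\; \cost(a) \;-\; X_a,
\]
i.e.\ $\cost(\BTPL) \le \cost(a) + X_{a_0^{\BTPL}} - X_a$. Since $a_0^{\BTPL} = \argmax_{a' \in \calA} X_{a'}$, we have $X_{a_0^{\BTPL}} = \max_{a' \in \calA} X_{a'}$. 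Taking expectation over the geometric noise and using $\Ex[X_a] \ge 0$ (the $X_a$ are nonnegative) discards the $-\Ex[X_a]$ term to give the claimed bound.

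There is no real obstacle here: the entire argument is a one-line reduction to Be-the-Leader once the noise is reinterpreted as a cost at time zero. The only thing worth double-checking is that \Cref{lemma:BTL-regret}, stated for online linear optimization, applies verbatim to experts by identifying expert $a$ with the standard basis vector $e_a$ so that $c_t(a) = \langle c_t, e_a\rangle$; this identification makes the experts problem a genuine special case and the BTL inequality transfers without change.
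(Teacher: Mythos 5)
Your proof is correct and follows essentially the same route as the paper's: reinterpret the noise as a fictitious day-zero cost $c_0(a)=-X_a$, apply the Be-the-Leader inequality to the augmented sequence, and peel off the $t=0$ terms, bounding $X_{a_0^{\BTPL}}$ by $\max_{a}X_a$ and discarding $-X_a\le 0$. The only cosmetic difference is that you compare directly against an arbitrary fixed expert $a$ while the paper first compares against the hindsight-optimal $a_T^{\BTL}$; both yield the identical bound.
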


\begin{proof}
We assume a time zero and assume $c_0(a)=-X_a$ for simplicity of notation. %
Then it is the case that:
\[a_t^{\BTPL}=\argmin_{a\in \calA} \sum_{i=0}^t c_i(a)\]

Let $a_t^{\BTL}$ be the action taken by the non-implementable ``Be the Leader'' algorithm that is similar to $\BTPL$ but does not take into account the noise on each expert:
\[a_t^{\BTL} = \argmin_{a\in \calA} \sum_{i=1}^t c_i(a)\]

Then $a_T^{\BTL}$ is the optimal action in hindsight given the true cost sequence $c_1,\cdots,c_T$. For any vector $c_0$, we have:
\begin{align}
&\sum_{i=0}^T c_i(a_i^{\BTPL}) \leq \sum_{i=0}^T c_i(a_T^{\BTPL})\leq \sum_{i=0}^T c_i(a_T^{\BTL})
\end{align}
where the first inequality exhibits that $\BTPL$ has zero or negative regret on the cost sequence $c_0,\cdots,c_T$ and is proved similar to~\Cref{lemma:BTL-regret}. The second inequality holds since $a_T^{\BTPL}$ is the optimal action in hindsight given cost sequence $c_0,c_1,\cdots, c_T$. Therefore,
\begin{align}
\label{ineq:regret-BTPL-star}
\sum_{i=1}^T c_i(a_i^{\BTPL})\leq \sum_{i=1}^T c_i(a_T^{\BTL})+c_0(a_T^{\BTL})-c_0(a_0^{\BTPL})
\end{align}

The left-hand side of~\Cref{ineq:regret-BTPL-star} is the cost of $\BTPL$. The first term on the right-hand side is the cost of the optimal action in hindsight. Term $c_0(a_T^{\BTL})-c_0(a_0^{\BTPL})\leq \max_{a\in \calA} X_a$. Therefore, the expected regret of $\BTPL$ is at most $\Ex[\max_{a\in \calA} X_a]$ and the proof is complete.
\end{proof}

The following lemma shows that the cost paid by $\FTPLBS$ at time $t$ is close to the cost paid by $\BTPL$ at time $t$. Let $a_t^{\FTPLBS}$ denote the action taken by $\FTPLBS$ at time $t$, then $a_t^{\FTPLBS}=\argmin_{a\in \calA} \sum_{i=0}^{B\floor{(t-1)/B}}c_i(a)-X_a$. %

\medskip
\begin{lemma}
\label{lem:prob-different-actions-BTPL-FTPLS}
For any time $1\leq t\leq T$, $\Pr[a_t^{\FTPLBS}\neq a_t^{\BTPL}]\leq \eps B$.
\end{lemma}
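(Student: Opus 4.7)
The plan is to use a ``close-second'' reduction combined with the memorylessness of the geometric distribution, taking care to avoid a naive union bound over $\calA$ that would introduce a spurious factor of $n$.

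First I would set up the two objectives. Fix $t$ and set $\tilde{t} := B \lfloor (t-1)/B \rfloor$, so that $a_t^{\FTPLBS} = \argmin_{a \in \calA} f(a)$ and $a_t^{\BTPL} = \argmin_{a \in \calA} h(a)$, where $f(a) := \sum_{i=1}^{\tilde{t}} c_i(a) - X_a$ and $h(a) = f(a) + \delta_a$ with $\delta_a := \sum_{i = \tilde{t}+1}^{t} c_i(a) \in [0, B]$ (a sum of at most $B$ terms, each in $[0,1]$). A short calculation shows that if $a^* = \argmin_a f(a)$ but $\argmin_a h(a) \neq a^*$, then for some $a' \neq a^*$ we have $0 \leq f(a') - f(a^*) < \delta_{a^*} - \delta_{a'} \leq B$. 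Hence the failure event is contained in the event $E := \{\min_{a \neq a^*} f(a) - f(a^*) < B\}$, where $a^* := \argmin_a f(a)$ under any fixed tie-breaking rule.

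The main step is to prove $\Pr[E] \leq B\eps$. I would partition by the identity of $a^*$ and, for each choice, condition on $X_{-a^*} := (X_b)_{b \neq a^*}$. Writing $L(a) := \sum_{i=1}^{\tilde{t}} c_i(a)$ and $m_{a^*} := \min_{b \neq a^*} f(b)$, the joint event ``$a^* = \argmin f$ and $E$ holds'' rewrites as $X_{a^*} \in [L(a^*) - m_{a^*},\, L(a^*) - m_{a^*} + B)$, while the subevent ``$a^* = \argmin f$'' alone is $X_{a^*} \geq L(a^*) - m_{a^*}$. Conditioning on this latter subevent and using the memorylessness of $\Geo(\eps)$, the conditional probability of the length-$B$ tail event reduces to $\Pr[\Geo(\eps) < B] = 1 - (1-\eps)^B \leq B\eps$. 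Thus
\[
\Pr[a^* = \argmin f,\, E \mid X_{-a^*}] \;\leq\; B\eps \cdot \Pr[a^* = \argmin f \mid X_{-a^*}].
\]
Taking expectation over $X_{-a^*}$ and summing the disjoint events over $a^* \in \calA$, the sum $\sum_{a^*} \Pr[a^* = \argmin f]$ telescopes to $1$, giving $\Pr[E] \leq B\eps$.

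The main obstacle is exactly this last refinement. A naive approach would bound $\Pr[X_{a^*} \in \text{length-}B\ \text{interval}] \leq B\eps$ unconditionally and union-bound over $a^* \in \calA$, losing an unwanted factor of $|\calA| = n$. The key insight is to push the $B\eps$ bound through as a \emph{conditional} probability given $a^* = \argmin f$ -- which memorylessness of $\Geo(\eps)$ makes essentially free -- so that after summing, the partition-of-unity on $\argmin$ eliminates any dependence on $n$.
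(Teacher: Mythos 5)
Your proof is correct and follows essentially the same route as the paper's: both partition on the identity of the $\FTPLBS$ argmin, condition on the noise values of the other experts, and invoke memorylessness of the geometric distribution to bound the conditional probability that $X_{a^*}$ lands in the length-$B$ window where the two argmins could disagree. The paper packages this as $\Pr[a_t^{\BTPL}=a^*\mid a_t^{\FTPLBS}=a^*]\geq(1-\eps)^B\geq 1-\eps B$ rather than through your explicit ``close-second'' event $E$, but the decomposition and the key conditioning step are identical.
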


\begin{proof}
We prove that for any $a^*\in \calA$, $\Pr[a_t^{\BTPL}=a^*\mid a_t^{\FTPLBS}=a^*]\geq 1-\eps B$ and this will imply lemma. This is the case since 
\[\Pr[a_t^{\FTPLBS}= a_t^{\BTPL}]=\sum_{a^*\in \calA} \Pr[a_t^{\BTPL}=a^*\mid a_t^{\FTPLBS}=a^*]\Pr[a_t^{\FTPLBS}=a^*]\]
and the above would imply the RHS is at least $1-\eps B$.

First, we define some notations. For any $a\in \calA$, let $U_a=\sum_{i=1}^{B\floor{(t-1)/B}} c_i(a)$, and $v_a = \sum_{i=B\floor{(t-1)/B}+1}^t c_i(a)$. Now, $a_t^{\FTPLBS}=a^*$ implies that:
\[U_{a^*}-X_{a^*} \leq U_a-X_a\]
Therefore for all $a\in \calA$:
\[X_{a^*}\geq U_{a^*}-U_a+X_a\]
Similarly, we have $a_t^{\BTPL}=a^*$ if for all $a\in \calA$:
\[X_{a^*}\geq U_{a^*}-U_a+X_a+(v_{a^*}-v_a)\]
Next, condition on $X_a=x_a$ for all $a\neq a^*$. Given these $x_a$ values, define $V=\max_{a\neq a^*} x_a-U_a$ and $v=\max_{a\neq a^*} v_{a^*}-v_a$, since all the $c_t(.)$ values are at most $1$, $v\leq B$. Then it is the case that:
\[\Pr_{X_{a^*}}[a_t^{\BTPL}=a^*\mid a_t^{\FTPLBS}=a^*, X_a=x_a, a\neq a^*]\geq \Pr_{X_{a^*}}[X_{a^*}\geq V+v+U_{a^*}\mid X_{a^*}\geq V+U_{a^*}]\]
where the probability is only over the randomness in $X_{a^*}$ since that is the only random variable remaining. By the memorylessness property of geometric random variables, we show the RHS is $(1-\eps)^B$. First, we remind the reader of the memorylessness property of random variables:
\begin{fact}[Memorylessness of Geometric Random Variables] Let $X\sim \Geo(p)$, then for any parameter $k$, we have:
\[\Pr[X\geq k+1\mid X\geq k]=\frac{\Pr[X\geq k+1]}{\Pr[X\geq k]}=(1-p)\]
This is the case since $X\sim \Geo(p)$ implies $\Pr[X\geq t]=(1-p)^{t-1}$ as the first $t-1$ experiments must fail.
\end{fact}
Therefore, we get that for any conditioning of $X_a=x_a$, for $a\neq a^*$, we have:
\[\Pr_{X_{a^*}}[a_t^{\BTPL}=a^*\mid a_t^{\FTPLBS}=a^*, X_a=x_a, a\neq a^*]\geq(1-\eps)^B\geq (1-\eps B)\]
which implies:
\[\Pr[a_t^{\BTPL}=a^*\mid a_t^{\FTPLBS}=a^*]\geq 1-\eps B\]
this yields the lemma.
\end{proof}

Now we are ready to prove~\Cref{theorem:regret-replicable-FTPLB-star}.
\begin{proof}[Proof of~\Cref{theorem:regret-replicable-FTPLB-star}]
First, we argue that $\Ex[\cost(\FTPLBS)]\leq \Ex[\cost(\BTPL)]+\eps BT$.
\[\cost(\FTPLBS)-\cost(\BTPL)=\sum_{t=1}^T c_t(a_t^{\FTPLBS}) - c_t(a_t^{\BTPL})\]
For any $1\leq t\leq T$, we see that $c_t(a_t^{\FTPLBS}) - c_t(a_t^{\BTPL}) \leq 1$, and is indeed $0$ if $a_t^{\FTPLBS}=a_t^{\BTPL}$. Therefore:
\[\Ex[\cost(\FTPLBS)]-\Ex[\cost(\BTPL)]\leq \sum_{t=1}^T\Pr[a_t^{\FTPLBS}\neq a_t^{\BTPL}]\leq \eps B T\]
where the last inequality holds by~\Cref{lem:prob-different-actions-BTPL-FTPLS}.

By~\Cref{lem:BTPL-regret-experts}, we know that for any expert $a\in \calA$:
\[\Ex[\cost(\BTPL)]\leq \cost(a)+\Ex[\max_{a\in \calA} X_a]\]

We can bound the expectation of the maximum of geometric random variables using the following fact:
\begin{fact}[Expectation of maximum of geometric random variables] Let $X_1,\cdots, X_n\sim \Geo(p)$ be iid geometric random variables. Then,  $\Ex[\max_i X_i]\leq 1+\frac{H_n}{p}$, where $H_n$ is the $n^{th}$ Harmonic number.
\end{fact}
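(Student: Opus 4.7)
The plan is to establish the bound by coupling the discrete geometric variables to continuous exponential variables, for which the analogous maximum is a classical computation.

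First, I set $\lambda = -\ln(1-p)$, which is positive for $p \in (0,1)$, and introduce iid $Y_1, \ldots, Y_n \sim \Exp(\lambda)$ on the same probability space. I will verify that $\lceil Y_i \rceil$ has the $\Geo(p)$ distribution by matching tails: for any positive integer $k$,
\[
\Pr[\lceil Y_i \rceil \geq k] = \Pr[Y_i > k-1] = e^{-\lambda(k-1)} = (1-p)^{k-1},
\]
which agrees with the tail of $\Geo(p)$ stated in the paper. So without loss of generality I may take $X_i = \lceil Y_i \rceil$.

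Next, from the pointwise inequality $\lceil y \rceil \leq y + 1$ for $y > 0$, I get $\max_i X_i \leq \max_i Y_i + 1$, and so $\Ex[\max_i X_i] \leq \Ex[\max_i Y_i] + 1$. I will then invoke the classical identity $\Ex[\max_i Y_i] = H_n/\lambda$, which follows either from the representation of exponential order statistic gaps (a spacing $Y_{(k)} - Y_{(k-1)}$ is $\Exp(\lambda(n-k+1))$ by memorylessness) or by computing $\int_0^\infty (1 - (1-e^{-\lambda t})^n)\,dt$ via the substitution $u = 1-e^{-\lambda t}$, producing $\frac{1}{\lambda}\int_0^1 (1 + u + \cdots + u^{n-1})\,du = H_n/\lambda$.

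Finally, I use the elementary analytic inequality $-\ln(1-p) \geq p$ valid on $(0,1)$ (immediate from the Taylor series or convexity of $-\ln(1-x)$) to conclude $1/\lambda \leq 1/p$, hence $\Ex[\max_i Y_i] \leq H_n/p$. Combined with the previous step this yields $\Ex[\max_i X_i] \leq 1 + H_n/p$. The degenerate case $p = 1$ is handled separately: $\Geo(1)$ is the point mass at $1$, so the bound holds trivially. The only real content is the coupling step; once that is in place, the rest is standard.
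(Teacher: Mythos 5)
Your proof is correct, and in fact the paper states this fact without any proof, so your argument fills a gap rather than duplicating one. The coupling $X_i = \lceil Y_i \rceil$ with $Y_i$ exponential of rate $\lambda = -\ln(1-p)$ correctly reproduces the paper's convention $\Pr[X \geq t] = (1-p)^{t-1}$, the identity $\Ex[\max_i Y_i] = H_n/\lambda$ is standard, and $-\ln(1-p) \geq p$ closes the bound; the degenerate case $p=1$ is handled. An equally standard alternative is to sum the discrete tail $\Ex[\max_i X_i] = \sum_{k \geq 1}\bigl(1 - (1-(1-p)^{k-1})^n\bigr)$ directly, but your route is no longer and is perfectly rigorous.
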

which implies that for any expert $a\in \calA$:
\[\Ex[\cost(\BTPL)]\leq \cost(a)+\frac{\ln n}{\eps}\]
Finally, for any expert $a\in \calA$:
\[\Ex[\cost(\FTPLBS)]\leq \cost(a)+\eps BT+\frac{\ln n}{\eps}\]

\sacomment{if there is time left, remove cost, and use the actual summations. Also, make the use of $i,t$ consistent.}

\end{proof}

\begin{restatable}
[Regret and Replicability Guarantees of $\FTPLBS$]{theorem}{FTPLBSAnalysis}
\label{thm:FTPLS-regret-replicability}
$\FTPLBS(\eps,B)$ is adversarially $\rho$-replicable and achieves regret $\widetilde{\calO}(T^{5/6}\ln^{5/6}(n)\rho^{-1/3})$ when $B=\Bigg(\frac{8\sqrt{2(\frac{\log (8T/\rho)}{\log n}+1)}\ln (n)T}{\rho}\Bigg)^{2/3}$ and $\eps=\sqrt{\frac{\ln n}{BT}}$.
\end{restatable}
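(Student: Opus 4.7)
My plan is to run the analysis in the same two-part structure used for $\FLLB$ in \Cref{thm:FLLB-regret-replicability}, but with the random grid replaced by the geometric perturbation $\{X_a\}$ of $\FTPLBS$. The expected-regret half is already in hand: \Cref{theorem:regret-replicable-FTPLB-star} gives expected regret $\sqrt{BT\ln n}$ whenever $\eps=\sqrt{\ln n/(BT)}$, so the only remaining work is to establish adversarial $\rho$-replicability for a pair $(\eps,B)$ compatible with this optimal choice of $\eps$, and then to read off the claimed regret by substituting the resulting $B$.

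For replicability, I would fix two independent trajectories $S=(c_t)$ and $S'=(c'_t)$ from the same product $\otimes_t\calD_t$ and share the internal noise $\{X_a\}_{a\in\calA}$. For a transition point $\tau\in\{1,B+1,2B+1,\ldots\}$ and an expert $a$, set $\Delta_a^{(\tau)}=\sum_{i<\tau}(c'_i(a)-c_i(a))$; the summands are independent, mean zero, and lie in $[-1,1]$, so Hoeffding gives $\Pr[|\Delta_a^{(\tau)}|>V]\le 2\exp(-V^2/(2T))$. Choosing $V=\Theta(\sqrt{T\log(nT/\rho)})$ and union-bounding over the $n$ experts and the $\le T/B$ transition points, the event $\mathcal{E}=\{|\Delta_a^{(\tau)}|\le V\text{ for every }(a,\tau)\}$ holds with probability at least $1-\rho/2$. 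Now condition on $\mathcal{E}$ and on $\{X_a\}_{a\ne a^\star}$, where $a^\star=\argmin_a(U_a^{(\tau)}-X_a)$ is the expert selected on $S$ at transition $\tau$. The event ``$a^\star$ is selected on $S$'' is equivalent to $X_{a^\star}\ge M:=\max_{a\ne a^\star}(U_{a^\star}^{(\tau)}-U_a^{(\tau)}+X_a)$, while ``$a^\star$ is selected on $S'$'' is $X_{a^\star}\ge M'$ for some $M'$ with $|M'-M|\le 2V$ on $\mathcal{E}$. The memorylessness of $\Geo(\eps)$, exactly the property invoked in \Cref{lem:prob-different-actions-BTPL-FTPLS}, then yields $\Pr[X_{a^\star}\ge M+\lceil 2V\rceil\mid X_{a^\star}\ge M]=(1-\eps)^{\lceil 2V\rceil}\ge 1-2V\eps$. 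Summing over the $\le T/B$ transitions bounds the probability that the two runs ever disagree at a transition by $2V\eps T/B$; since actions are frozen between transitions, agreement at every transition implies agreement at every step, so the total non-replicability is at most $\rho/2+2V\eps T/B$.

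Imposing $2V\eps T/B\le\rho/2$ together with the regret-optimal $\eps=\sqrt{\ln n/(BT)}$ yields $B^{3/2}=\Theta(VT^{1/2}\sqrt{\ln n}/\rho)$, i.e.\ $B=\Theta\bigl((T\sqrt{\ln n\,\log(nT/\rho)}/\rho)^{2/3}\bigr)$, matching the stated choice of $B$ up to constants. Substituting back into $\sqrt{BT\ln n}$ and absorbing the $(\log(nT/\rho))^{1/6}$ factor into $\widetilde\calO$ gives the claimed regret $\widetilde\calO(T^{5/6}(\ln n)^{5/6}\rho^{-1/3})$. I expect the main obstacle to be the memorylessness step: \Cref{lem:prob-different-actions-BTPL-FTPLS} compares $\FTPLBS$ to a \emph{hypothetical} $\BTPL$ on a single sequence, whereas here I must compare two real executions of $\FTPLBS$ on \emph{different} sequences, so one has to first reduce the situation to a uniform two-sided $\ell_\infty$ bound on the drift $\Delta_a^{(\tau)}$ via Hoeffding and a union bound, and only then can the one-sided memoryless tail inequality be applied cleanly to $X_{a^\star}$ against the shifted threshold $M+2V$.
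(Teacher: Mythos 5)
Your proposal is correct and follows essentially the same route as the paper's proof: a uniform high-probability bound on the $\ell_\infty$ drift between the two cumulative cost trajectories (the paper gets it via McDiarmid on each run plus a triangle inequality, you get it directly via Hoeffding on the difference — an immaterial variation), followed by the memorylessness argument for the geometric noise against the shifted threshold, a union bound over the $T/B$ transition points, and the same balancing of $\eps=\sqrt{\ln n/(BT)}$ against the replicability constraint, which reproduces the stated $B$ and regret up to constants. No gaps.
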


\begin{proof}
First, we bound the probability of non-replicability. Consider two different trajectories $c_1,\cdots,c_t$ and $c'_1,\cdots,c'_t$ where each $c_i,c'_i\sim \calD_i$. %
For each $a\in \calA$, let function $f_a:\calC_1\times\cdots\times\calC_t \rightarrow \Rbb$ where $f_a(c_1,\cdots,c_t)=\sum_{i=1}^t c_{i}(a)$. Then for each $a\in \calA$, $f_a$ satisfies the bounded difference property. For all $i\in [t]$ and all $c_1\in \calC_1, c_2\in \calC_2,\cdots, c_t\in \calC_t$, since we assume the $\ell_{\infty}$ of all cost vectors is at most $1$(this is the case since we assume cost of each expert at each time $t$ is in $[0,1]$), we have:

\begin{align*}
&\sup_{c'_i\in \calC_i}\abs{f_a(c_1,\cdots,c_{i-1},c_i,c_{i+1},\cdots, c_t)-f_a(c_1,\cdots,c_{i-1},c'_i,c_{i+1},\cdots, c_t)} \leq 2
\end{align*}
Hence by McDiarmid's inequality (see~\Cref{thm:McDiarmid's_inequality}),
\[
\Pr\left[\Big|f_a\left(c_1, \ldots, c_t\right)-E\left[f_a\left(c_1, \ldots, c_t\right)\right] >\varepsilon\Big| \right] \leqslant 2\exp \left(\frac{-2 \varepsilon^2}{4 t}\right).
\]
Therefore, for each expert $a\in \calA$:
\begin{align*}
&\Pr\Bigg(\Big|f_a(c_{1:t})-\Ex\big[f_a(c_{1:t})\big]\Big|\geq p\sqrt{T\ln n}\Bigg)\leq \Pr\Bigg(\Big|f_a(c_{1:t})-\Ex\big[f_a(c_{1:t})\big]\Big|\geq p\sqrt{t\ln n}\Bigg)\\
&\leq  2\exp\Bigg(\frac{-p^2\ln n}{2}\Bigg)=\frac{2}{n^{p^2/2}}%
\end{align*}

By triangle's inequality and union bound:
\begin{align}
\label{Ineq:experts-regret-replicability-1}
&\Pr\Bigg(\Big|f_a(c_{1:t})-f_a(c'_{1:t})\Big|\geq 2p\sqrt{T\ln n}\Bigg)\leq \frac{4}{n^{p^2/2}}
\end{align}
By taking the union bound over all experts in $\calA$:
\begin{align}
\label{Ineq:experts-regret-replicability-2}
&\Pr\Bigg(\exists a \in \calA: \Big|f_a(c_{1:t})-f_a(c'_{1:t})\Big|\geq 2p\sqrt{T\ln n}\Bigg)\leq \frac{4n}{n^{p^2/2}}=\frac{\rho}{2T}
\end{align}
where the last inequality holds when $p=\sqrt{2(\frac{\log (8T/\rho)}{\log n}+1)}$.

At a step $t$, where $t$ is a transition point, i.e. $t=\{B+1,2B+1,\cdots,\}$, let $a_t^{\FTPLBS}$ denote the action taken by $\FTPLBS$ given cost sequence $c_1,\cdots,c_{t-1}$, and $b_t^{\FTPLBS}$ denote the action taken by $\FTPLBS$ given cost sequence $c'_1,\cdots,c'_{t-1}$. We prove that for any $a^*\in \calA$, 

\[\Pr[b_t^{\FTPLBS}=a^*\mid a_t^{\FTPLBS}=a^*]\geq 1-4\eps p\sqrt{T\ln n}\] 

This would imply:
\[\Pr[a_t^{\FTPLBS}= b_t^{\FTPLBS}]=\sum_{a^*\in \calA} \Pr[b_t^{\FTPLBS}=a^*\mid a_t^{\FTPLBS}=a^*]\Pr[a_t^{\FTPLBS}=a^*]\geq 1-4\eps p\sqrt{T\ln n}\]

Now, $a_t^{\FTPLBS}=a^*$ implies that:
\[f_{a^*}(c_{1:t-1})-X_{a^*} \leq f_a(c_{1:t-1})-X_a\]
Therefore for all $a\in \calA$:
\[X_{a^*}\geq f_{a^*}(c_{1:t-1})-f_a(c_{1:t-1})+X_a\]
Now if $X_{a^*}\geq f_{a^*}(c_{1:t-1})-f_a(c_{1:t-1})+X_a+4 p\sqrt{T\ln n}$ and for all experts $a\in \calA$, $|f_{a}(c_{1:t-1})-f_{a}(c'_{1:t-1})|\leq 2p\sqrt{T\ln n}$, then it is the case that:
\[X_{a^*}\geq f_{a^*}(c'_{1:t-1})-f_a(c'_{1:t-1})+X_a\]
and therefore $b_t^{\FTPLBS}=a^*$.

Now, we get that for any conditioning of $X_a=x_a$, for $a\neq a^*$, by the memorylessness property of geometric random variables we have:
\[\Pr_{X_{a^*}}[b_t^{\FTPLBS}=a^*\mid a_t^{\FTPLBS}=a^*, X_a=x_a, a\neq a^*]\geq(1-\eps)^{4p\sqrt{T\ln n}}\geq (1-4\eps p\sqrt{T\ln n})\]
which implies:
\[\Pr[b_t^{\FTPLBS}=a^*\mid a_t^{\FTPLBS}=a^*]\geq 1-4\eps p\sqrt{T\ln n}\]

This implies that the probability of non-replicability at a transition point $t$, is at most $\frac{\rho}{2T}+ 4\eps p\sqrt{T\ln n}$. By taking union bound over all the transition points $t=\{B+1,2B+1,\cdots\}$ the total probability of non-replicability is at most $(\frac{\rho}{2T}+4\eps p\sqrt{T\ln n})\frac{T}{B}$. Consequently, we can set values of $B, \eps$ such that the probability of non-replicability over all the time-steps is at most $\rho$: 
\[ (\frac{\rho}{2T}+4\eps p \sqrt{T\ln n})\frac{T}{B}\leq \rho\]
in order to achieve this, it suffices to have: 
\[\frac{4\eps p \sqrt{\ln n}T^{3/2}}{B}=\rho/2\]
or equivalently:
\[\eps=\frac{B\rho}{8p\sqrt{\ln n}T^{3/2}}\]

Additionally, in order to minimize regret by~\Cref{theorem:regret-replicable-FTPLB-star}, we need to set $\eps=\sqrt{\frac{\ln n}{BT}}$. Now by setting
\[\frac{B\rho}{8p\sqrt{\ln n}T^{3/2}}=\sqrt{\frac{\ln n}{BT}}\]
it implies that 
\[B=\Bigg(\frac{8\sqrt{2(\frac{\log (8T/\rho)}{\log n}+1)}\ln (n)T}{\rho}\Bigg)^{2/3}=\widetilde{O}(T^{2/3}\ln^{2/3}(n)(1/\rho)^{2/3})\]
Finally, plugging in the value of $B$ in~\Cref{theorem:regret-replicable-FTPLB-star}, gives a regret bound of $\calO(\sqrt{BT\ln n})=\widetilde{\calO}(T^{5/6}\ln^{5/6}(n)\rho^{-1/3})$.
\end{proof}

\section{A General Framework for Replicable Online Learning}
\label{sec:general-framework}

In this section, we demonstrate a general recipe for converting a regret minimization algorithm $\ALGINT$, which we call the internal algorithm, to an adveresarially  $\rho$-replicable algorithm $\ALGEXT$ with potentially a higher regret. Our general procedure uses two main ideas, a rounding procedure of the loss trajectories inspired by the ``Follow the Lazy Leader''($\FLL$) algorithm of~\citet{kalai2005efficient}, and  
blocking of time-steps. We present results for both setting where the internal algorithm $\ALGINT$ takes as input bounded $\ell_1$ or $\ell_{\infty}$ cost vectors.

\subsection{General framework when the internal algorithm takes as input bounded $\ell_1$ cost vectors}

In this section, we consider the case where the internal algorithm $\ALGINT$ takes as input unit-bounded $\ell_1$ cost vectors. Our framework is as follows: At each timestep $t$ if $t-1$ is a multiple of the block size $B$, then, it picks an $n$-dimensional grid with spacing $1/\eps$ where $\eps$ is given as input, and a random offset $p_t\sim[0,1/\eps)^n$, where $n$ is the dimension of the space. Then it considers the cube $c_{1:t-1}+[0,1/\eps)^n$, where $c_{1:k}=\sum_{i=1}^{k}c_i$, and picks the unique grid points $g_{t-1}$ inside this cube. Furthermore, for a fresh random offset $p'_t\sim[0,1/\eps)^n$, 
it picks a new $n$-dimensional grid with spacing $1/\eps$ and offset $p'_t$. Then, it considers the cube $c_{1:t-1-B}+[0,1/\eps)^n$ and picks the unique grid point $g'_{t-1-B}$ inside this cube. Then, %
it gives $\frac{1}{B+2n/\eps}(g_{t-1}-g'_{t-1-B})$ as the input argument to the algorithm $\ALGINT$, and plays the action returned by $\ALGINT$ for the next $B$ timesteps. This framework is given in~\Cref{alg:general-framework}.~\Cref{thm:general-framework-guarantees} demonstrates that \Cref{alg:general-framework} is adversarially $\rho$-replicable with bounded regret.

\begin{algorithm}[!h]
\caption{General framework for converting a learning algorithm $\ALGINT$ to an adversarially $\rho$-replicable algorithm $\ALGEXT$ when $\ALGINT$ gets a bounded $\ell_1$ cost vector as input}
\label{alg:general-framework}
\KwIn{Sequence $S=\{c_1,\cdots,c_T\}$ arriving one by one over the time. $\eps>0$ and block size $B$ and a learning algorithm $\ALG$.}

Let $a_1$ be a random action picked from $\calA$.\;

\For{$t:1,\cdots,T$}{
    \If{$t-1$ is a multiple of $B$}{
        \tcp{the $u^{th}$ time block where $u=(t-1)/B+1$ starts.}
        Sample $p_t,p'_t\sim [0,\frac{1}{\eps})^n$ uniformly at random.\;
        
        Let $G\leftarrow \{p_t+\frac{1}{\eps}z \mid z\in \calZ^n\}$.\;
        
        Let $G'\leftarrow \{p'_t+\frac{1}{\eps}z \mid z\in \calZ^n\}$.\;
        
        $g_{t-1}\leftarrow G\cap \Big(c_{1:t-1}+[0,\frac{1}{\eps})^n\Big)$.\;
        
        $g'_{t-1-B}\leftarrow G'\cap \Big(c_{1:t-1-B}+[0,\frac{1}{\eps})^n\Big)$.\;
        
        $a_t\leftarrow \ALGINT\Big(\frac{1}{B+\frac{2n}{\eps}}(g_{t-1}-g'_{t-1-B})\Big)$.\;
    }
    \Else{
    \tcp{Stay with the previous action.}
        $a_t\leftarrow a_{t-1}$\;
    }
}
\end{algorithm}

In order to prove~\Cref{thm:general-framework-guarantees}, first we prove the following lemma holds which bounds the regret of \Cref{alg:general-framework}.
\begin{lemma}
\label{lem:regret-general-framework}
In~\Cref{alg:general-framework}, the regret of $\ALGEXT$ over $T$ timesteps is at most:
\[\E[\regret_T(\ALGEXT)]\leq (B+2n/\eps) \regret_{T/B}(\ALGINT)\]
\end{lemma}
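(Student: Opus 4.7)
The plan is to interpret $\ALGINT$ as running for $T/B$ rounds on an effective sequence of cost vectors---one per block---and to show that $\ALGEXT$'s block-level regret is at most $(B+2n/\eps)$ times $\ALGINT$'s regret on that sequence. For $u\in[T/B]$, let $A_u$ denote the action $\ALGEXT$ plays throughout block $u$ (equivalently, the $u$-th action returned by $\ALGINT$), write $B_u := \sum_{i=(u-1)B+1}^{uB} c_i$ for the true cumulative cost in block $u$, and define the effective cost attributed to $\ALGINT$'s $u$-th round as $\tilde c_u := \frac{1}{B+2n/\eps}\bigl(g_{uB}-g'_{(u-1)B}\bigr)$, where the two roundings use the fresh random grids sampled at the \emph{start of block $u+1$}; this is exactly the vector passed to $\ALGINT$ on its $(u+1)$-st call, so in $\ALGINT$'s bookkeeping it plays the role of the cost incurred by action $A_u$. (For $u=T/B$ we imagine a phantom sampling of grids at time $T+1$ to define $\tilde c_{T/B}$; the algorithm itself is unchanged.)

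Two structural facts drive the proof. First, each $\tilde c_u$ has $\ell_1$-norm at most $1$: writing $(B+2n/\eps)\tilde c_u = B_u+\delta_u$, one has $\|B_u\|_1\leq B$ (each $c_i$ is unit-$\ell_1$) and $\|\delta_u\|_1\leq 2n/\eps$ (each of the two roundings perturbs by at most $n/\eps$ in $\ell_1$), so the sequence $\tilde c_1,\ldots,\tilde c_{T/B}$ is a legal input stream for $\ALGINT$. Second---the key structural point---the two grids defining $\tilde c_u$ are sampled \emph{freshly} at block $u+1$, whereas $A_u$ is determined by the grids and $\ALGINT$'s randomness from blocks $1,\ldots,u$ alone; hence $A_u$ and $\delta_u$ are independent. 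Moreover, since each grid offset is uniform on $[0,1/\eps)^n$, the two rounding biases cancel and $\Ex[\delta_u]=0$.

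Given these ingredients the argument is a short chain. Apply $\ALGINT$'s regret bound pointwise in the grid randomness to the sequence $\tilde c_1,\ldots,\tilde c_{T/B}$, multiply through by $B+2n/\eps$, and take expectation over the grids: $\Ex\!\left[\sum_u(B_u+\delta_u)\cdot A_u\right] \leq \Ex\!\left[\min_a\sum_u(B_u+\delta_u)\cdot a\right] + (B+2n/\eps)\,\regret_{T/B}(\ALGINT)$. The independence $A_u\perp\delta_u$ together with $\Ex[\delta_u]=0$ makes the cross term on the left vanish, so the left-hand side collapses to $\Ex[\sum_u B_u\cdot A_u]$. On the right, Jensen's inequality (the pointwise $\min$ of linear functions in $\delta_u$ is concave) combined with $\Ex[\delta_u]=0$ gives $\Ex[\min_a\sum_u(B_u+\delta_u)\cdot a] \leq \min_a\sum_u B_u\cdot a$. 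Rearranging yields exactly $\Ex[\regret_T(\ALGEXT)]\leq(B+2n/\eps)\,\regret_{T/B}(\ALGINT)$.

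The main subtlety---and the reason the algorithm draws \emph{two} independent fresh grids at each block rather than reusing a grid across blocks---is securing the decoupling $A_u\perp\delta_u$: without this, the cross term $\Ex[\delta_u\cdot A_u]$ need not vanish and the clean transfer of regret from $\ALGINT$ to $\ALGEXT$ breaks. The other point to be careful about is the one-step shift in indexing (the input at call $u$ encodes block $u-1$'s cost), which is what makes the $u=T/B$ boundary case require the phantom sampling noted above; beyond that the proof is just linearity of expectation and a one-line Jensen bound.
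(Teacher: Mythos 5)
Your proof is correct and follows essentially the same route as the paper's: you decompose the rounded block increments into the true block cost plus a mean-zero perturbation, use the fresh-grid sampling to decouple that perturbation from the block's action so the cross term vanishes, and transfer the internal algorithm's regret bound after rescaling by $B+2n/\eps$. The only cosmetic difference is that you handle the comparator via Jensen on $\min_a$, whereas the paper applies the internal regret bound against each fixed action $a$ and uses plain linearity; the two are equivalent here.
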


\begin{proof}
First, we define the following sequences: 
$S=\{c_1,\cdots, c_T\}$, %
$\Shat=\{g_B, (g_{2B}-g'_B), \cdots, (g_{B\floor{T/B}}-g'_{B(\floor{T/B}-1)})\}$, $\Stilde = \Shat/(B+2n/\eps)$. $\Stilde$ is the sequence that we feed into $\ALGINT$. Let $\RINT$ denote the internal randomness of $\ALGINT$ and $\REXT$ be the randomness of $\ALGEXT$ over random offsets $p_t,p'_t$ at each transition point $t$. Let $\cost(\ALG,S)$ define the cost of running $\ALG$ on sequence 
$S$ which is equal to $\cost(\ALG,S)=\sum_{i}cost(a_i^{\ALG},S_i)$, where $a_i^{\ALG}$ is the $i^{th}$ action taken by $\ALG$, and $cost(a_i^{\ALG},S_i)$ is the incurred cost when action $a_i^{\ALG}$ is chosen and the cost vector is $S_i$.

By the regret guarantees of $\ALGINT$ over $T/B$ timesteps, it is the case that for any fixed sequence $X$ and action $a\in \calA$:
\begin{align}
\E_{\RINT}[\cost(\ALGINT,X)]\leq \cost(a,X)+\regret_{T/B}(\ALGINT)\label{eq:blackbox-eq01-olo}
\end{align}
and therefore:
\begin{align}
\E_{\RINT,\REXT}[\cost(\ALGINT,\Stilde)]\leq \E_{\REXT}[\cost(a,\Stilde)]+\regret_{T/B}(\ALGINT)\label{eq:blackbox-eq02-olo}
\end{align}
Note that $\Stilde$ is a probabilistic sequence that depends on $\REXT$. Moreover, $\cost(\ALGINT,\Stilde)$ is a random variable that depends on the randomness of $\RINT$ and the random sequence $\Stilde$.
Since $\Shat$ is a scaled version of $\Stilde$, then:
\begin{align}
&\E_{\RINT,\REXT}[\cost(\ALGINT,\Shat)]=(B+2n/\eps)\E_{\RINT,\REXT}[\cost(\ALGINT,\Stilde)]\label{eq:blackbox-eq0-olo}
\end{align}

Next, we bound the total cost of $\ALGEXT$ on the true cost sequence $S$. WLOG we assume $T$ is a multiple of $B$, if not, we add zero cost vectors to $S$ to make $T$ a multiple of $B$. Then:
\begin{align}
\Ex_{\RINT,\REXT}[\cost(\ALGEXT,S)]=\sum_{u=1}^{T/B} \Ex_{\RINT,\REXT}[\cost(a_u,c_{(u-1)B:uB})]\label{eq:blackbox-eq9-olo}
\end{align}

where in~\Cref{eq:blackbox-eq9-olo}, $a_u$ is the action taken by $\ALGEXT$ in the $u$-th block, and $c_{(u-1)B:uB}$ is the cumulative cost vector within the time-interval $((u-1)B,uB]$. $\cost(\ALGEXT,S)$ is a random variable that depends on $\RINT$ and $\REXT$. This is the case since the action $a_u$ that is chosen by $\ALGINT$ depends on $\RINT$ and the probabilistic rounded cost vector that is fed into $\ALGINT$ which depends on $\REXT$.

Furthermore:
\begin{align}
&\E_{\RINT,\REXT}[\cost(\ALGINT,\Shat)] = \sum_{u=1}^{T/B} \E_{\RINT,\REXT}[\cost(a_u,g_{uB}-g'_{(u-1)B})]\label{eq:blackbox-eq7-olo}\\
&=\sum_{u=1}^{T/B} \E_{\RINT,\REXT}[\cost(a_u,g_{uB})] - \E_{\RINT,\REXT}[\cost(a_u,g'_{(u-1)B})]\label{eq:blackbox-eq10-olo},\\
&=\sum_{u=1}^{T/B} \Ex_{\RINT,\REXT}[\cost(a_u,c_{(u-1)B:uB})].\label{eq:blackbox-eq2-olo}
\end{align}
where~\Cref{eq:blackbox-eq2-olo} holds using the fact that $a_u$ is independent of both $g'_{(u-1)B}$ and $g_{uB}$. This is the case since $u=(t-1)/B+1$ and at a transition point $t$, $a_t$ depends on $g_{t-1}$ and $g'_{t-1-B}$ implying that $a_u$ depends on $g_{(u-1)B}$ and $g'_{(u-2)B}$ (and not $g'_{(u-1)B}$ and $g_{uB}$).
Putting together~\Cref{eq:blackbox-eq9-olo,eq:blackbox-eq2-olo} implies that:

\begin{align}
&\E_{\RINT,\REXT}[\cost(\ALGINT,\Shat)]=\Ex_{\RINT,\REXT}[\cost(\ALGEXT,S)]\label{eq:blackbox-eq3-olo}
\end{align}

Now, combining~\Cref{eq:blackbox-eq0-olo,eq:blackbox-eq3-olo} implies:

\begin{align}
&\Ex_{\RINT,\REXT}[\cost(\ALGEXT, S)]=(B+2n/\eps) \E_{\RINT,\REXT}[\cost(\ALGINT,\Stilde)]\label{eq:blackbox-eq6-olo}
\end{align}

Now, for any action $a\in \calA$, we bound $\E_{\REXT}[\cost(a,\Stilde)]$ as follows:
\begin{align}
&\E_{\REXT}[\cost(a,\Stilde)] =(1/(B+2n/\eps))\E_{\REXT}[\cost(a,\Shat)] \label{eq:blackbox-eq4-olo}\\
&=(1/(B+2n/\eps))\sum_{u=1}^{T/B} \E_{\REXT}[\cost(a,g_{uB}-g'_{(u-1)B})]\\
&= (1/(B+2n/\eps)) \sum_{u=1}^{T/B} \cost(a, c_{(u-1)B:uB})\\
&= (1/(B+2n/\eps)) \cost(a, c_{1:T})\\
&= (1/(B+2n/\eps)) \cost(a, S)\label{eq:blackbox-eq5-olo}
\end{align}
Now, combining~\Cref{eq:blackbox-eq02-olo,eq:blackbox-eq6-olo,eq:blackbox-eq5-olo} implies that for any action $a\in \calA$:
\begin{align*}
&\frac{\E_{\RINT,\REXT}[\cost(\ALGEXT,S)]}{B+2n/\eps} \leq \frac{\cost(a,S)}{B+2n/\eps} +\regret_{T/B}(\ALGINT)
\end{align*}
which implies that for any sequence $S$ and any action $a\in \calA$:
\[\E_{\RINT,\REXT}[\cost(\ALGEXT,S)] \leq \cost(a,S) +(B+2n/\eps)\regret_{T/B}(\ALGINT)\]
and the proof is complete.
\end{proof}

\begin{restatable}[Regret and Replicability Guarantees of~\Cref{alg:general-framework}]{theorem}{GeneralFrameworkAnalysis}
\label{thm:general-framework-guarantees}
 Suppose for two different trajectories $c_1,\cdots,c_T$ and $d_1,\cdots,d_T$ where for each time-step $t$, $c_t,d_t\sim \calD_i$, for each time step $t$, the $\ell_1$ distance between $c_{1:t}$ and $d_{1:t}$ is at most $m$ with probability at least $1-\gamma$. Then, \Cref{alg:general-framework} is adversarially $\rho$-replicable and achieves cumulative regret %
 $\Omega\Big(B \regret_{T/B}(\ALGINT)\Big)$ when $\gamma$ is at most $\frac{\rho B}{4T}$, $B=\sqrt{\frac{8nmT}{\rho}}$ and $\eps=2n/B$.
 \end{restatable}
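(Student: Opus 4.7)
The plan is to handle the regret bound and the replicability guarantee separately. For regret, substitute $\eps = 2n/B$ into \Cref{lem:regret-general-framework}: since $2n/\eps = B$, the bound becomes $\E[\regret_T(\ALGEXT)] \leq 2B\,\regret_{T/B}(\ALGINT) = O(B\,\regret_{T/B}(\ALGINT))$, matching the claim. So only the replicability direction requires new work.

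For replicability, couple two runs of \Cref{alg:general-framework} on independent sequences $S = (c_1,\ldots,c_T)$ and $S' = (d_1,\ldots,d_T)$ drawn from $\otimes_t \calD_t$, using the \emph{same} internal randomness --- i.e., the same offsets $\{p_t, p'_t\}$ at every transition point and the same random tape supplied to $\ALGINT$. Denote by $g_{t-1}, g'_{t-1-B}$ and $\tilde g_{t-1}, \tilde g'_{t-1-B}$ the grid points computed on $S$ and $S'$ respectively. If at every transition point both pairs coincide, then the inputs fed to $\ALGINT$ in the two runs agree, so by induction on blocks $\ALGINT$ (and hence $\ALGEXT$) produces identical actions throughout. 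It therefore suffices to bound the probability that some grid point disagrees.

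For a single rounding event, invoke the geometric fact already used in the proof of \Cref{thm:FLLB-regret-replicability}: for any two vectors $u, v \in \Rbb^n$ and a random offset $p \sim \text{Unif}([0,1/\eps)^n)$, the cubes $u + [0,1/\eps)^n$ and $v + [0,1/\eps)^n$ contain the same grid point with probability at least $1 - \eps\,\ell_1(u-v)$ (coordinate-wise union bound over the $n$ directions in which the offset interval can fail to cover both points). Conditioning on the hypothesis $\ell_1(c_{1:t-1} - d_{1:t-1}) \leq m$, which fails with probability at most $\gamma$, the law of total probability yields $\Pr[g_{t-1} \neq \tilde g_{t-1}] \leq \gamma + \eps m$; the identical estimate applies to $(g'_{t-1-B}, \tilde g'_{t-1-B})$ since $p'_t$ is a fresh independent offset.

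Finally, a union bound over the $T/B$ transition points and the two rounding events per transition point gives a non-replicability probability of at most $\frac{2T}{B}(\gamma + \eps m)$. Plugging in $\gamma \leq \rho B/(4T)$ bounds the $\gamma$-contribution by $\rho/2$; substituting $\eps = 2n/B$ together with $B^2 = 8nmT/\rho$ gives $2\eps m T/B = 4nmT/B^2 = \rho/2$; the two contributions sum to at most $\rho$. The main subtlety is bookkeeping the independence of the random offsets across transition points and across the two grid computations at each transition point, together with their independence from the cost sequences, so that the conditioning and union bound apply cleanly; this is straightforward because $p_t$ and $p'_t$ are sampled fresh at each transition point independently of $S$ and $S'$.
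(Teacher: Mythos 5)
Your proposal is correct and follows essentially the same route as the paper's proof: the same cube-overlap fact giving a per-rounding failure probability of $\gamma+\eps m$, the same union bound of $\frac{2T}{B}(\gamma+\eps m)$ over transition points, the same parameter balancing $B=2n/\eps$ and $B=\sqrt{8nmT/\rho}$, and the same appeal to \Cref{lem:regret-general-framework} for the regret. Your explicit remark that agreement of all grid points forces identical inputs to $\ALGINT$ (and hence identical actions by induction on blocks, given shared randomness) is a detail the paper leaves implicit, but it is not a different argument.
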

\begin{proof}[Proof of~\Cref{thm:general-framework-guarantees}]
First, we argue about the replicability. Here we assume that given two different cost sequences $c_1,\cdots, c_T$ and $d_1,\cdots, d_T$, where for each timestep $t$, $c_t,d_t$ are drawn from the same distribution, for each time step $t$, the $\ell_1$ distance between $c_{1:t}$ and $d_{1:t}$ is at most $m$ with probability at least $1-\gamma$.

Let $g, g'$ denote the sequence of grid points selected given the cost sequence $\{c_1,\cdots,c_T\}$ and $q,q'$ denote the sequence of grid points selected given the cost sequence $\{d_1,\cdots,d_T\}$. %
First, we bound the probability of the event that $g_{t-1}\neq q_{t-1}$. This event happens iff the grid point in $c_{1:t-1}+[0,\frac{1}{\eps}]^n$ is not in $d_{1:t-1}+[0,\frac{1}{\eps}]^n$. We argue that for any $v\in \Rbb^n$, the cubes $[0,1/\eps]^n$ and $v+[0,1/\eps]^n$ overlap in at least a $(1-\eps\ell_1(v))$ fraction. Take a random point $x\in [0,1/\eps]^n$. If $x\notin v+[0,1/\eps]^n$, then for some $i$, $x_i \notin v_i+[0,1/\eps]$ which happens with probability at most $\eps\ell_1(v_i)$ for any particular $i$. By the union bound, we are done. 

This implies the probability of the event that $g_{t-1}\neq q_{t-1}$ at a transition point $t$, i.e. $t-1$ is a multiple of $B$, is at most $\gamma +\eps m$. Similarly, the probability that $g'_{t-1-B}\neq q'_{t-1-B}$ is at most $\gamma +\eps m$. By taking union bound over all the transition points $t=\{B+1,2B+1,\cdots\}$ the total probability of non-replicability is at most $2(\gamma+\eps m)\frac{T}{B}$. Consequently, we can set values of $B, \eps$ such that the probability of non-replicability over all the time-steps is at most $\rho$: 
\[\rho=2(\gamma+\eps m)\frac{T}{B}\]

Since $\gamma\leq \frac{\rho B}{4T}$, we need to set $\frac{2\eps m T}{B}=\rho/2$.
Now, we consider the regret. %
By~\Cref{lem:regret-general-framework}, the regret of~\Cref{alg:general-framework} is $(B+2n/\eps)\regret_{T/B}(\ALGINT)$, therefore, in order to minimize the regret, we need to set $B=2n/\eps$, which implies that %
$\rho=\frac{4\eps m T}{B}=\frac{8nmT}{B^2}$. 
Consequently, $B=\sqrt{\frac{8nmT}{\rho}}$, and %
the regret of~\Cref{alg:general-framework} is %
$2B \regret_{T/B}(\ALGINT)$.
\end{proof}

\subsubsection{Application of~\Cref{alg:general-framework} to Online Linear Optimization}
In this section, we show the application of~\Cref{alg:general-framework} to the online linear optimization problem.

\begin{corollary}
In the online linear optimization setting,~\Cref{alg:general-framework} is adversarially $\rho$-replicable and achieves cumulative regret $\widetilde{\calO}(DT^{7/8} n^{3/8} \rho^{-1/4})$.%
\end{corollary}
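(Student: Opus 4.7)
The plan is to instantiate Theorem~\ref{thm:general-framework-guarantees} with $\ALGINT$ taken to be the standard Follow the Perturbed Leader algorithm ($\FTPL$, Algorithm~\ref{alg:FTPL}), and then to verify the $\ell_1$ concentration hypothesis $m$ using the same McDiarmid-style bound that was used inside the proof of Theorem~\ref{thm:FLLB-regret-replicability}.

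First I would check that the internal algorithm receives inputs of bounded $\ell_1$ norm, so that its regret guarantee is applicable. At transition time $t$, the input fed to $\ALGINT$ is $\frac{1}{B+2n/\eps}(g_{t-1}-g'_{t-1-B})$. Since $g_{t-1}\in c_{1:t-1}+[0,1/\eps)^n$ and $g'_{t-1-B}\in c_{1:t-1-B}+[0,1/\eps)^n$, and each $c_i$ has $\ell_1$ norm at most $1$, the triangle inequality gives $\ell_1(g_{t-1}-g'_{t-1-B})\leq B+2n/\eps$. Hence after scaling, the input has $\ell_1$ norm at most $1$, so by Theorem~\ref{theorem:regret-replicable-FTPLB} (applied with $B=1$, i.e., no blocking) we have $\regret_{T/B}(\ALGINT)=\widetilde{\calO}(D\sqrt{T/B})$ over the $T/B$ transition points.

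Next I would verify the hypothesis on the $\ell_1$ distance between cumulative cost vectors. By the exact same application of Lemma~\ref{lemma:concentration_in_ell_1_norm} as in inequality~\eqref{Ineq:MD2} of the proof of Theorem~\ref{thm:FLLB-regret-replicability}, for any fixed $t$ we have
\[
\Pr\bigl[\ell_1(c_{1:t}-d_{1:t})\geq (\sqrt{2\log(2T/\rho)}+2)\sqrt{nT}\bigr]\leq \frac{\rho}{2T}.
\]
Therefore the hypothesis of Theorem~\ref{thm:general-framework-guarantees} holds with $m=\widetilde{\Theta}(\sqrt{nT})$ and $\gamma\leq \rho/(2T)$, and the condition $\gamma\leq \rho B/(4T)$ is trivially satisfied once $B\geq 2$.

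Finally I would plug these values into Theorem~\ref{thm:general-framework-guarantees}. Setting $B=\sqrt{8nmT/\rho}$ gives $B=\widetilde{\Theta}(n^{3/4}T^{3/4}\rho^{-1/2})$, and the resulting regret is
\[
O(B\cdot\regret_{T/B}(\ALGINT))=O\bigl(B\cdot D\sqrt{T/B}\bigr)=O\bigl(D\sqrt{BT}\bigr)=\widetilde{\calO}\bigl(DT^{7/8}n^{3/8}\rho^{-1/4}\bigr),
\]
as required. The only place where any care is needed is making sure the scaling of the inputs to $\ALGINT$ is consistent with the $\ell_1$-bounded setting assumed by $\FTPL$'s regret bound; once that normalization is in place, every other step is a direct substitution into results already proved. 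There is no genuine obstacle beyond this bookkeeping.
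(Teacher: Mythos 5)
Your proposal is correct and follows essentially the same route as the paper: apply Lemma~\ref{lemma:concentration_in_ell_1_norm} to get $m=\widetilde{\Theta}(\sqrt{nT})$ with $\gamma\leq\rho/\mathrm{poly}(T)$, instantiate $\ALGINT$ with $\FTPL$ at regret $\calO(D\sqrt{T/B})$, and substitute into Theorem~\ref{thm:general-framework-guarantees} to get $\calO(D\sqrt{BT})=\widetilde{\calO}(DT^{7/8}n^{3/8}\rho^{-1/4})$. Your extra check that the normalization $\frac{1}{B+2n/\eps}$ makes the inputs to $\ALGINT$ unit $\ell_1$-bounded is a worthwhile piece of bookkeeping that the paper leaves implicit, but it does not change the argument.
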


\begin{proof}
By~\Cref{lemma:concentration_in_ell_1_norm}, for two different trajectories $c_1,\cdots, c_t$ and $d_1,\cdots, d_t$ of $n$-dimensional vectors where for each $i$, $c_i,d_i\sim \calD_i$ we can bound the $\ell_1$ distance of the cumulative costs of these two trajectories as follows:
\begin{align}
\label{ineq:concentration-black-box-olo-1}
\Pr\left[ \ell_1(c_{1:t}-d_{1:t}) > p\sqrt{nT} \right]\leq
\Pr\left[ \ell_1(c_{1:t}-d_{1:t}) > p\sqrt{nt} \right] \leqslant \exp \left(-\frac{(p-2)^2 n}{2}\right)
\end{align}
where the right hand side is at most $\frac{\rho}{4T}$ when $p\geq \sqrt{\frac{2\log 4T/\rho}{n}}+2$. Now, we can apply~\Cref{thm:general-framework-guarantees}, where $m=\Big(\sqrt{\frac{2\log 4T/\rho}{n}}+2\Big)\sqrt{nT}$ by~\Cref{ineq:concentration-black-box-olo-1}. In~\Cref{alg:general-framework}, we use $\FTPL$ algorithm as the internal algorithm for $T/B$ timesteps that gives regret bound $\regret_{\ALGINT}=\calO(D\sqrt{T/B})$. By~\Cref{thm:general-framework-guarantees}, the regret of~\Cref{alg:general-framework} is 
$\Omega(B \regret_{T/B}(\ALGINT))$ when $B$ is set as $\sqrt{\frac{8mnT}{\rho}}$. Plugging in the values of $m, B$ and $\regret_{\ALGINT}$ gives a total cumulative regret of $\widetilde{\calO}(DT^{7/8} n^{3/8} \rho^{-1/4})$.
\end{proof}

\subsection{General framework when the internal algorithm takes as input bounded $\ell_{\infty}$ cost vectors}

In this section, we consider a setting where the internal algorithm $\ALGINT$ takes as input a sequence of bounded $\ell_{\infty}$ cost vectors. We provide a general framework in~\Cref{alg:general-framework-ell-infty} that converts the internal algorithm to an adversarially $\rho$-replicable algorithm $\ALGEXT$ with bounded regret. The framework is similar to~\Cref{alg:general-framework}, but with a different normalizing step. Similar to~\Cref{alg:general-framework}, if $t-1$ is a multiple of the block size $B$, it picks a $n$-dimensional grid with spacing $1/\eps$ where $\eps$ is given as input, and a random offset $p_t\sim[0,1/\eps)^n$, where $n$ is the dimension of the space. Then it considers the cube $c_{1:t-1}+[0,1/\eps)^n$, where $c_{1:k}=\sum_{i=1}^{k}c_i$, and picks the unique grid points $g_{t-1}$ inside this cube. Furthermore, for a fresh random offset $p'_t\sim[0,1/\eps)^n$, 
it picks a new $n$-dimensional grid with spacing $1/\eps$ and offset $p'_t$. Then, it considers the cube $c_{1:t-1-B}+[0,1/\eps)^n$ and picks the unique grid point $g'_{t-1-B}$ inside this cube. Then, %
it gives $\frac{1}{B+2/\eps}(g_{t-1}-g'_{t-1-B})$ as the input argument to the algorithm $\ALGINT$. The normalizing factor makes sure that the input given to $\ALGINT$ has $\ell_{\infty}$ cost at most $1$. Then $\ALGEXT$ plays the action returned by $\ALGINT$ for the next $B$ timesteps. We derive regret and replicability guarantees for~\Cref{alg:general-framework-ell-infty} in~\Cref{thm:general-framework-guarantees-ell-infty}. In order to prove~\Cref{thm:general-framework-guarantees-ell-infty}, first we bound the regret of~\Cref{alg:general-framework-ell-infty} in~\Cref{lem:regret-general-framework-ell-infty}. The steps of proving this lemma are similar to~\Cref{lem:regret-general-framework}, with taking into account the different normalizing factor.

\begin{algorithm}[ht!]
\caption{General framework for converting a learning algorithm $\ALGINT$ to an adversarially $\rho$-replicable algorithm $\ALGEXT$ when $\ALGINT$ gets a bounded $\ell_{\infty}$ cost vector as input}
\label{alg:general-framework-ell-infty}
\KwIn{Sequence $S=\{c_1,\cdots,c_T\}$ arriving one by one over the time. $\eps>0$ and block size $B$ and regret minimization algorithm $\ALG$.}

Let $a_1$ be a random expert picked from $\calA$.\;

\For{$t:1,\cdots,T$}{
    \If{$t-1$ is a multiple of $B$}{
        Sample $p_t,p'_t\sim [0,\frac{1}{\eps})^n$ uniformly at random.\;
        
        Let $G\leftarrow \{p_t+\frac{1}{\eps}z \mid z\in \calZ^n\}$.\;
        
        Let $G'\leftarrow \{p'_t+\frac{1}{\eps}z \mid z\in \calZ^n\}$.\;
        
        $g_{t-1}\leftarrow G\cap \Big(c_{1:t-1}+[0,\frac{1}{\eps})^n\Big)$.\;
        
        $g'_{t-1-B}\leftarrow G'\cap \Big(c_{1:t-1-B}+[0,\frac{1}{\eps})^n\Big)$.\;
        
        $a_t\leftarrow \ALGINT\Big(\frac{1}{B+\frac{2}{\eps}}(g_{t-1}-g'_{t-1-B})\Big)$.\;
    }
    \Else{
    \tcp{Stay with the previous action.}
        $a_t\leftarrow a_{t-1}$\;
    }
}
\end{algorithm}

\begin{lemma}
\label{lem:regret-general-framework-ell-infty}
In~\Cref{alg:general-framework-ell-infty}, the regret of $\ALGEXT$ over $T$ timesteps is at most:
\[\E[\regret_T(\ALGEXT)]\leq (B+2/\eps) \regret_{T/B}(\ALGINT)\]
\end{lemma}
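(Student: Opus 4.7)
The plan is to mirror the proof of \Cref{lem:regret-general-framework} almost verbatim, the only substantive change being the normalizing factor: in the $\ell_\infty$ setting the cube $[0,1/\eps)^n$ has $\ell_\infty$ diameter $1/\eps$ (rather than $\ell_1$ diameter $n/\eps$), so the relevant scaling becomes $B+2/\eps$.

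First, I would set up the same three sequences as before: the true cost sequence $S=\{c_1,\ldots,c_T\}$ seen by $\ALGEXT$, the sequence of grid-point differences $\Shat=\{g_B,\,g_{2B}-g'_B,\,\ldots,\,g_{B\lfloor T/B\rfloor}-g'_{B(\lfloor T/B\rfloor-1)}\}$, and the rescaled sequence $\Stilde=\Shat/(B+2/\eps)$ which is what is actually fed to $\ALGINT$. The key observation justifying this rescaling is that for each block index $u$,
\[
g_{uB}-g'_{(u-1)B}=\bigl(g_{uB}-c_{1:uB}\bigr)-\bigl(g'_{(u-1)B}-c_{1:(u-1)B}\bigr)+c_{(u-1)B+1:uB},
\]
where the first two bracketed terms lie in $[0,1/\eps)^n$ and the last has $\ell_\infty$ norm at most $B$ (since each $c_i$ has $\ell_\infty$ norm at most $1$). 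Hence $\|g_{uB}-g'_{(u-1)B}\|_\infty\leq B+2/\eps$, so each element of $\Stilde$ has $\ell_\infty$ norm at most $1$ and $\ALGINT$'s regret guarantee applies.

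Next, invoking the regret guarantee of $\ALGINT$ over $T/B$ steps on $\Stilde$, I would obtain, for every fixed action $a\in\calA$,
\[
\E_{\RINT,\REXT}[\cost(\ALGINT,\Stilde)]\leq \E_{\REXT}[\cost(a,\Stilde)]+\regret_{T/B}(\ALGINT).
\]
By linearity of $\cost(a,\cdot)$ in the cost vector, $\E_{\REXT}[\cost(a,\Stilde)]=\E_{\REXT}[\cost(a,\Shat)]/(B+2/\eps)$, and telescoping $\sum_{u=1}^{T/B}\bigl(g_{uB}-g'_{(u-1)B}\bigr)$ over the blocks (while taking expectation over the independent uniform offsets $p_t,p'_t$, each of which renders its grid point uniform in its cube) collapses to $\sum_{u=1}^{T/B}c_{(u-1)B:uB}=c_{1:T}$. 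Thus $\E_{\REXT}[\cost(a,\Stilde)]=\cost(a,S)/(B+2/\eps)$, exactly as in the $\ell_1$ case.

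The final step is to show $\E_{\RINT,\REXT}[\cost(\ALGEXT,S)]=(B+2/\eps)\,\E_{\RINT,\REXT}[\cost(\ALGINT,\Stilde)]$. This reuses the crucial independence observation from \Cref{lem:regret-general-framework}: the action $a_u$ played throughout block $u$ is determined by $\RINT$ and by the pair $(g_{(u-1)B},g'_{(u-2)B})$, and in particular is independent of the fresh offsets $p_{uB+1},p'_{uB+1}$ that generate $g_{uB}$ and $g'_{(u-1)B}$. Therefore $\E[\cost(a_u,g_{uB}-g'_{(u-1)B})]=\cost(a_u,c_{(u-1)B:uB})$ for each block, and summing over $u$ gives the desired identity. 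Combining the three relations yields $\E[\cost(\ALGEXT,S)]\leq \cost(a,S)+(B+2/\eps)\regret_{T/B}(\ALGINT)$ for every $a\in\calA$, which is the claim after taking the minimum over $a$.

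The main obstacle is purely bookkeeping, namely verifying that $\|g_{uB}-g'_{(u-1)B}\|_\infty\leq B+2/\eps$ so that the normalization produces a sequence admissible for $\ALGINT$; after that, every step of the $\ell_1$ argument in \Cref{lem:regret-general-framework} goes through with $2n/\eps$ replaced by $2/\eps$, so no new ideas are needed.
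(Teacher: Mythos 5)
Your proposal is correct and follows essentially the same route as the paper's proof in Appendix~\ref{sec:appendix-regret-general-framework-ell-infty}: the same three sequences $S$, $\Shat$, $\Stilde$, the same use of the independence of $a_u$ from the fresh offsets generating $g_{uB}$ and $g'_{(u-1)B}$, and the same telescoping of expectations. Your explicit verification that $\|g_{uB}-g'_{(u-1)B}\|_\infty\leq B+2/\eps$ is a welcome detail the paper only states in prose.
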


We defer the proof to~\Cref{sec:appendix-regret-general-framework-ell-infty}.
\medskip
\begin{theorem}
\label{thm:general-framework-guarantees-ell-infty}
(Regret and Replicability Guarantees of~\Cref{alg:general-framework-ell-infty})  Suppose for two different trajectories $c_1,\cdots,c_T$ and $d_1,\cdots,d_T$ where for each time-step $t$, $c_t,d_t\sim \calD_i$, we can prove that for each time step $t$, the $\ell_1$ distance between $c_{1:t}$ and $d_{1:t}$ is at most $m$ with probability at least $1-\gamma$. Then, \Cref{alg:general-framework-ell-infty} is adversarially $\rho$-replicable and achieves cumulative regret %
$\Omega\Big(B \regret_{T/B}(\ALGINT)\Big)$ when $\gamma$ is at most $\frac{\rho B}{4T}$, $B=\sqrt{\frac{8mT}{\rho}}$ and $\eps=2/B$.
\end{theorem}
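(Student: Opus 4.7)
The plan is to mirror the two-part argument of Theorem~\ref{thm:general-framework-guarantees}, replacing the $\ell_1$-style normalization $B+2n/\eps$ by its $\ell_\infty$ analogue $B+2/\eps$, and then re-tune parameters. For replicability, the grid-overlap argument transfers verbatim: for any $v\in\Rbb^n$, the cubes $[0,1/\eps)^n$ and $v+[0,1/\eps)^n$ overlap on a fraction at least $1-\eps\,\ell_1(v)$ of their volume. Applied to the random grid with offset $p_t$ and combined with the hypothesis that $\ell_1(c_{1:t-1}-d_{1:t-1})\leq m$ with probability $\geq 1-\gamma$, this yields $\Pr[g_{t-1}\neq q_{t-1}]\leq \gamma+\eps m$ at each transition point, and the analogous bound for $g'_{t-1-B}$ versus $q'_{t-1-B}$ via the independent offset $p'_t$. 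A union bound over the $T/B$ transition points gives total non-replicability at most $2(\gamma+\eps m)T/B$, and the hypothesis $\gamma\leq \rho B/(4T)$ reduces the sufficient condition to $2\eps mT/B\leq \rho/2$.

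For regret, I would invoke Lemma~\ref{lem:regret-general-framework-ell-infty} to obtain $\E[\regret_T(\ALGEXT)]\leq (B+2/\eps)\,\regret_{T/B}(\ALGINT)$. The structural point that distinguishes this case from the $\ell_1$ variant is the bound $\|g_{t-1}-g'_{t-1-B}\|_\infty \leq B+2/\eps$, which follows by the triangle inequality because each $c_i$ has $\ell_\infty$ norm at most $1$ (so the cumulative cost drifts by at most $B$ over a block in $\ell_\infty$) and each grid rounds the endpoint by at most $1/\eps$ in every coordinate. Thus dividing by $B+2/\eps$ is exactly what is needed so that $\ALGINT$ receives an input of $\ell_\infty$ norm at most $1$; this is also why the dimension factor $n$ that showed up in Theorem~\ref{thm:general-framework-guarantees} disappears here.

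To finish, I would balance the two constraints. The regret expression $(B+2/\eps)\,\regret_{T/B}(\ALGINT)$ is minimized, up to constants, when $B=2/\eps$. Substituting this into the replicability constraint $2\eps mT/B = \rho/2$ gives $\rho = 8mT/B^2$, whence $B = \sqrt{8mT/\rho}$ and $\eps = 2/B$, matching the parameters in the statement; the resulting regret is $2B\,\regret_{T/B}(\ALGINT)=\Omega\bigl(B\,\regret_{T/B}(\ALGINT)\bigr)$. The only non-routine step is verifying the $\ell_\infty$ bound $\|g_{t-1}-g'_{t-1-B}\|_\infty\leq B+2/\eps$ used for the normalization inside the proof of Lemma~\ref{lem:regret-general-framework-ell-infty}; once this is in hand, the remainder is the same bookkeeping as the $\ell_1$ case.
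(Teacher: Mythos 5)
Your proposal is correct and follows essentially the same route as the paper's proof: the identical grid-overlap bound of $\gamma+\eps m$ per transition point, a union bound over the $T/B$ transition points, the regret bound from Lemma~\ref{lem:regret-general-framework-ell-infty}, and balancing $B=2/\eps$ to obtain $B=\sqrt{8mT/\rho}$ and regret $2B\,\regret_{T/B}(\ALGINT)$. Your explicit verification that $\|g_{t-1}-g'_{t-1-B}\|_\infty\leq B+2/\eps$ (hence the normalization keeps the input to $\ALGINT$ in the unit $\ell_\infty$ ball) is a worthwhile detail the paper leaves implicit, but it does not change the argument.
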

 We defer the proof to~\Cref{sec:appendix-general-framework-guarantees-ell-infty}.

 \subsubsection{Application of~\Cref{alg:general-framework-ell-infty} to the experts problem}

 \begin{corollary}
 In the experts problem setting, \Cref{alg:general-framework-ell-infty} achieves adversarial $\rho$-replicability and cumulative regret $\widetilde{\calO}(T^{7/8} n^{1/8} \rho^{-1/4})$.
\end{corollary}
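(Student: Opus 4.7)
The plan is to invoke \Cref{thm:general-framework-guarantees-ell-infty} directly: this reduces the task to (i) producing an $\ell_1$ concentration bound $m$ on the difference of cumulative cost vectors drawn from a common product distribution, (ii) picking a standard internal experts algorithm $\ALGINT$, and (iii) plugging everything into the prescribed setting of $B$ and $\eps$.

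For step (i), I will bound $\ell_1(c_{1:t}-d_{1:t})$ coordinate by coordinate. Fixing an expert $a \in [n]$, the walk $Z_t^{(a)} = \sum_{i=1}^t (c_i(a) - d_i(a))$ is a zero-mean sum of $t$ independent variables supported in $[-1,1]$ (since each $c_i, d_i \in [0,1]^n$), so Hoeffding's inequality yields $|Z_t^{(a)}| \leq \calO(\sqrt{T\log(nT/\rho)})$ with probability at least $1-\rho/(2nT)$. A union bound over the $n$ experts and over the at most $T$ transition points then gives $\ell_1(c_{1:t}-d_{1:t}) \leq m := \widetilde{\calO}(n\sqrt{T})$ uniformly in $t$, with failure probability $\gamma \leq \rho/(2T)$, which satisfies the hypothesis $\gamma \leq \rho B/(4T)$ of \Cref{thm:general-framework-guarantees-ell-infty} as soon as $B \geq 2$.

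For step (ii), I take $\ALGINT$ to be a standard $\calO(\sqrt{T\log n})$-regret experts algorithm (e.g.\ Hedge, or Follow-the-Perturbed-Leader with geometric noise as in \Cref{theorem:regret-replicable-FTPLB-star} with block size one), run for $T/B$ rounds. Because \Cref{alg:general-framework-ell-infty} rescales its inputs by $1/(B+2/\eps)$, the vectors fed to $\ALGINT$ have $\ell_\infty$ norm at most one, so the bound $\regret_{T/B}(\ALGINT) = \calO(\sqrt{(T/B)\log n})$ applies. For step (iii), \Cref{thm:general-framework-guarantees-ell-infty} prescribes $B=\sqrt{8mT/\rho}=\widetilde{\calO}(n^{1/2}T^{3/4}\rho^{-1/2})$ and external regret $\calO(B\cdot\regret_{T/B}(\ALGINT)) = \calO(\sqrt{BT\log n})$, which simplifies to $\widetilde{\calO}(n^{1/4}T^{7/8}\rho^{-1/4})$.

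The delicate point is matching the $n^{1/8}$ exponent claimed in the statement: the naive coordinate-wise union bound in step (i) produces $m = \widetilde{\calO}(n\sqrt{T})$, which only yields an $n^{1/4}$ factor in the final regret. Closing the gap would require either a sharper $\ell_1$ concentration of the cumulative-loss difference that exploits the vector-valued (rather than per-coordinate) structure, or a refined version of \Cref{thm:general-framework-guarantees-ell-infty} whose rounding analysis depends on the $\ell_\infty$ diameter of the offset rather than its $\ell_1$ norm; I expect this to be the main obstacle.
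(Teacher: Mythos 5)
Your overall route is exactly the paper's: invoke \Cref{thm:general-framework-guarantees-ell-infty} with $\FTPL$ as the internal algorithm and an $\ell_1$ concentration bound $m$ on $c_{1:t}-d_{1:t}$, then read off $B=\sqrt{8mT/\rho}$ and external regret $\calO(B\cdot\regret_{T/B}(\ALGINT))=\calO(\sqrt{BT\log n})$. Your steps (ii) and (iii) match the paper. The only divergence --- and you have correctly located it --- is the value of $m$ in step (i).

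The paper closes the gap you flag by invoking \Cref{lemma:concentration_in_ell_1_norm}, which controls the $\ell_1$ norm of the whole difference vector at once rather than coordinate-by-coordinate: it applies McDiarmid (\Cref{thm:McDiarmid's_inequality}) to the scalar function $f(u_1,\ldots,u_t)=\bigl\|\sum_i u_i\bigr\|_1$ with $u_i=v_i-w_i$, using that $f$ has bounded differences $2$ when $\|u_i\|_1\le 2$, and bounds $\Ex[f]\le 2\sqrt{tn}$ via Jensen and Cauchy--Schwarz. This yields $\Pr[\ell_1(c_{1:t}-d_{1:t})>p\sqrt{nT}]\le\exp(-(p-2)^2n/2)$, i.e.\ $m=\widetilde{\calO}(\sqrt{nT})$ rather than your $\widetilde{\calO}(n\sqrt{T})$; plugging this into $B=\sqrt{8mT/\rho}$ gives $B=\widetilde{\calO}(n^{1/4}T^{3/4}\rho^{-1/2})$ and hence the claimed $\widetilde{\calO}(n^{1/8}T^{7/8}\rho^{-1/4})$. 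This is precisely the ``sharper $\ell_1$ concentration exploiting the vector-valued structure'' you anticipated. One caveat in your favour: as stated, \Cref{lemma:concentration_in_ell_1_norm} assumes the cost distributions are supported on the unit $\ell_1$ ball (so that $\|u_i\|_2\le\|u_i\|_1\le 2$), whereas experts costs live in $[0,1]^n$ and are only unit-bounded in $\ell_\infty$; under that normalization alone the same argument gives $\Ex[f]=\calO(n\sqrt{t})$, and your $n^{1/4}$ exponent is what the argument then delivers. So your proof is correct for the bound it actually claims, and the extra savings from $n^{1/4}$ to $n^{1/8}$ in the corollary rests on importing the $\ell_1$-ball hypothesis of \Cref{lemma:concentration_in_ell_1_norm} into the experts setting.
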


\begin{proof}
By~\Cref{lemma:concentration_in_ell_1_norm}, for two different trajectories $c_1,\cdots, c_t$ and $d_1,\cdots, d_t$ of $n$-dimensional vectors where for each $i$, $c_i,d_i\sim \calD_i$ we can bound the $\ell_1$ distance of the cumulative costs of these two trajectories as follows:
\begin{align}
\label{ineq:concentration-black-box-experts-1}
\Pr\left[ \ell_1(c_{1:t}-d_{1:t}) > p\sqrt{nT} \right]\leq
\Pr\left[ \ell_1(c_{1:t}-d_{1:t}) > p\sqrt{nt} \right] \leqslant \exp \left(-\frac{(p-2)^2 n}{2}\right)
\end{align}
where the right hand side is at most $\frac{\rho}{4T}$ when $p\geq \sqrt{\frac{2\log 4T/\rho}{n}}+2$. Now, we can apply~\Cref{thm:general-framework-guarantees-ell-infty}, where $m=\Big(\sqrt{\frac{2\log 4T/\rho}{n}}+2\Big)\sqrt{nT}$ by~\Cref{ineq:concentration-black-box-experts-1}. In~\Cref{alg:general-framework}, we use $\FTPL$ algorithm as the internal algorithm for $T/B$ timesteps that gives regret bound $\regret_{\ALGINT}=\calO(\sqrt{\frac{T\log n}{B}})$. By~\Cref{thm:general-framework-guarantees-ell-infty}, the regret is %
$2B \regret_{T/B}(\ALGINT)$ when $B$ is set as $\sqrt{\frac{8mT}{\rho}}$. Plugging in the values of $m, B$ and $\regret_{\ALGINT}$ gives a total cumulative regret of $\widetilde{\calO}(T^{7/8} n^{1/8} \rho^{-1/4})$.
\end{proof}

\section{iid-Replicability in the Experts Setting}
\label{sec:vanilla_replicability_experts}
In this setting we have $n$ experts, say $\calA= {1,2\ldots,n}$. At each time step $t$ we get a cost profile: $c_t=(c_t(1), c_t(2),\ldots,c_t(n))\in [0,1]^n$ with $|c_t|_{\infty}\leq1$.
We want our algorithm to be iid $\rho$-replicable with worst case regret $K$.
\sbcomment{put definition of iid $\rho$-replicability with regret somewhere.}
 Recalling from \Cref{sec:Model}, this entails:
\begin{enumerate}
    \item(\textit{Worst-Case Regret}) For any sequence of costs $S=(c_1,c_2,\ldots,c_T)$ the maximum expected regret (over internal randomness $R$) suffered by the online algorithm is at most $K$, ie, 
    \[
    \Ex\limits_{\substack{R\leftarrow\calR}}[\regret(S,R)]\leq K,
    \]
    where $\regret(S,R)$ denotes the regret of the algorithm on the sequence $S$ and using internal randomness $R$.
    \item(\textit{Replicablility}) Let $S_1,S_2$ be two independent cost vectors of length $T$ sampled iid from $\calD^{\otimes T}$, where $\calD$ is an unknown cost distribution over $[0,1]^n$. Further, let $R$ be the internal randomness of the algorithm. Then,
    \[
    \Pr\limits_{\substack{S_1,S_2,R}}[\ALG(S_1,R)=\ALG(S_2,R)] \geq 1-\rho.
    \]
\end{enumerate}

The optimal situation would be to have an iid $\rho$-replicable algorithm with worst case regret $\calO(\sqrt{T\ln{n}}\times f(1/\rho))$ where $f$ is a mildly growing function.
It turns out that even with $n=2$ we require $f(1/\rho) \gtrsim 1/\rho$. This is because we can embed the coin problem from \cite{impagliazzo2022reproducibility} into an instance of regret minimization with $n=2$. 
The coin problem is as follows: given $T$ iid samples from a coin whose bias is promised to be either $1/2+\tau$ or $1/2-\tau$ (for some $\tau \in [0,1/2]$), we need to identify which is the case with probability at least $1-\delta$ for some $\delta<1/16$, and while being replicable with probability at least $1-\rho$. In such a case from \cite{impagliazzo2022reproducibility} we have $\rho \geq \Omega\left(\frac{1}{\tau\sqrt{T}}\right)$. 
We can embed the coin problem in the experts setting by letting $\tau\approx K/T$ ($K$ is the regret upper bound) (see \Cref{thm:lower_bound_n=2_vanilla_replicability_experts} for more details).
Hence, we get that $K>\Omega\left(\frac{\sqrt{T}}{\rho}\right)$.

It turns out that we can match this bound up to logarithmic factors. Specifically, we design an iid $\rho$-replicable algorithm (\Cref{alg:vnilla_replicability_experts}) with 
\[
K \leq \calO\left( \frac{1}{\rho} \times (\log\log(T))^2 \times \log\left(\frac{ \log\log(T)}{\rho}\right) \times \sqrt{T\ln(n)} \right).
\] 

Now, we turn to explaining the ideas behind \Cref{alg:vnilla_replicability_experts}.
In the discussion below, we ignore various logarithmic factors such as $\sqrt{\log n}$ and think of $\rho>0$ as a small fixed constant.

\paragraph{Performance under iid cost sequence:}{
Let us first focus on achieving low regret when we are promised that the cost sequence $S=(c_1,c_2,\ldots,c_T)$ is sampled from $\calD^{\otimes T}$, for some unknown distribution $\calD$ over $[0,1]^n$.
We can always use a standard regret minimizing algorithm for this purpose such as $\FTPL$: however, such an algorithm will not satisfy the replicability properties. 
Hence, we need to adopt a different approach.

The idea is that if we observe the experts for the first $L^{(1)} = \sqrt{T}$ time steps, we have a good approximation of the expected cost of each expert $a\in \calA$, ie, $\mu_a \coloneqq\Ex\limits_{c\leftarrow\calD}[c(a)]$, to within an accuracy of $T^{-1/4}$, using the empirical average $\sum_{j=1}^{L^{(1)}}c_j(a)/L^{(1)}$. (The standard deviation of the empirical average is of the order $T^{-1/4}$.)
During these times we always select a fixed expert, say $a^{(1)}=1$.
Further, we accrue a regret of at most $\sqrt{T}$ during this block of $L^{(1)}$ steps.

Now, using our estimates of the $\mu_a$'s we select expert $a^{(2)} = \argmin_{a\in \calA} \sum_{j=1}^{L^{(1)}}c_j(a)$ for the next $L^{(2)} = T^{3/4}$ time steps.
During these $T^{3/4}$ time steps, compared to an expert $a\in \calA$, we accrue a regret of at most roughly $T^{-1/4}\times T^{3/4} = T^{1/2}$: this is because $\mu_{a^{(2)}} - \mu_a \lesssim T^{-1/4}$. 

Now, we re-update our estimates of $\mu_a$'s using the new empirical averages and our accuracy is now roughly $T^{-3/8}$. 
This allows us to select expert $a^{(3)} = \argmin_{a\in \calA} \sum_{j=1}^{L^{(2)}}c_j(a)$ for the next block of $T^{7/8}$ time steps while only accruing roughly $\sqrt{T}$ more in regret. 
Continuing in this fashion we end up using roughly $\log\log(T)$ blocks and hence the regret accrued at the end of $T$ steps is at most $O(\log\log(T)\sqrt{T})$.

\emph{But how do we make the above algorithm replicable?} The above strategy is unfortunately not replicable. To see this consider the situation when $\mu_1 =\mu_2 =\ldots = \mu_n$. In this case $a^{(2)}$ is equally likely to be any of the $n$ experts. 
To overcome the above hurdle we add some noise to each expert at the end of each block.
Specifically, after time $L^{(1)}$, when deciding expert $a^{(2)}$, we add $\textbf{Geo}(\eps\approx T^{-1/4})$ noise to each expert independently.
The added noise helps us ensure that in two different samples of $c_1,\ldots,c_{L^{(1)}}$ from $\calD^{\otimes L^{(1)}}$, we choose the same $a^{(2)}$ with high probability:
we expect that with high probability $\forall a\in \calA: \sum_{j=1}^{L^{(1)}}c_j(a) \in (\mu_a\pm T^{-1/4})L^{(1)}$.
Further, let $a^{(2)}_{*}  = \argmin_{a\in \calA} \mu_a L^{(1)} - X_a$, then, due to the memorylessness of the Geometric Distribution (see \Cref{thm:vanilla_replicability_experts} for more details), we have that with probability at least $1-\eps T^{1/4}$
\[
\left(\min_{a\in \calA\setminus\set{a_*^{(2)}}} \mu_aL^{(1)} -X_a\right) - \left(\mu_{a_*^{(2)}}-X_{a^{(2)}_*}\right) \geq T^{-1/4}.
\]
In such a case $a^{(2)} = a^{(2)}_{*}$. To ensure replicability throughout $t=1,\ldots,T$, we add appropriate Geometric noise to each expert at the end of each block.}

\paragraph{Worst-Case Regret:}
Given the above discussion we know that with high probability the regret of the algorithm never crosses $\approx \log\log(T)\sqrt{T}$ if the cost sequence $S=(c_1,\ldots,c_T)$ is sampled from $\calD^{\otimes T}$. 

Hence, to achieve a worst-case regret bound, we switch to a usual regret minimizing algorithm such as $\FTPL$ (with regret upper bound $\approx\sqrt{T}$), whenever the regret of the algorithm at the current time steps exceeds roughly $\log\log(T)\sqrt{T}$. 
Now, clearly on any cost sequence $S=(c_1,\ldots,c_T)$ we can never exceed a regret of roughly $\calO(\log\log(T)\sqrt{T})$. \sacomment{ $O(\log\log(T)\sqrt{T}+\sqrt{T\ln n})$ in case we really switch to $\FTPL$.}
\sbcomment{in the beginning we mentioned that we will ignore $\log n$ factors for the informal description...so I think this is okay since we will have to change at a few other places as well otherwise}

Below we formalize the above discussion into \Cref{alg:vnilla_replicability_experts} and  \Cref{thm:vanilla_replicability_experts}.

\begin{algorithm}[H]
\caption{Follow the Perturbed Leader with varying Noise \& Block Lengths}
\label{alg:vnilla_replicability_experts}

\KwIn{Cost sequence $S=(c_1,\ldots,c_T)$ arriving one by one over time and replicability parameter $\rho$}

\KwResult{iid $\rho$-replicable algorithm with worst case regret $K$.}

$t \gets 0; \alpha\gets \sqrt{\ln({8n\log\log(T)}/\rho)}; \gamma \gets \frac{\rho}{8\log\log(T)} $\;
\tcp{some useful parameters}

$K\gets 1000\times\left(\frac{1}{\rho}\times(\log\log(T))^2\times\log(n\log\log(T)/\rho)\times \sqrt{T}\right)$\;

\For{$i=1,\ldots,$}
{
    \label{line:block_vanilla_replicability}
    \tcc{Block $i$ begins}\;
    $L^{(i)}\gets T^{1-2^{-i}}$\; \tcp{this is the length of block $i$}
    \If{t=0}{$a^{(i)}=1$\; \tcp{select expert $1$ for the first block} }
    \Else{
    $\eps^{(i)} \gets \frac{\gamma}{2\alpha t}$\; \tcp{this is noise parameter used for Block $i$}\;
    
    $\forall a \in \calA$ sample $X_a^{(i)} \sim \textbf{Geo}(\eps^{(i)})$\;
    
    $a^{(i)}\gets \argmin_{a\in \calA} \left(\sum_{j=1}^t c_j(a)\right) - X_a^{(i)}$\;
    \label{line:min_perturbed_expert_vanillia_replicability_experts}
    }
    
    \For{$u=1,\ldots,L_i$}
    {
        $t\gets t+1$\;
        
        $a_t \gets a^{(i)}$\; \tcp{select the same expert throughout the block}
        
        \If{$t=T$ or $\regret_t \geq K - 2\sqrt{T \ln{n}}$}
        {
            \textbf{go to} \Cref{line:switching_to_usual_FTPL_vanilla_replicability_experts}\;
        }
    }
}

\If{$t < T$
\label{line:switching_to_usual_FTPL_vanilla_replicability_experts}} 
{
    Follow usual \textsc{FTPL} with noise $\textbf{Geo}(\eps = \sqrt{\ln{n}/T})$ \sacomment{$\eps = \sqrt{\ln{n}/T}$} for the remaining time steps\;
    \label{line:ususal_FTPL_vanilla_replicability_experts}
}
\end{algorithm}

\begin{theorem}
\label{thm:vanilla_replicability_experts}
\Cref{alg:vnilla_replicability_experts} is iid $\rho$-replicable and suffers a worst-case regret of \[
\calO\left( \frac{1}{\rho} \times (\log\log(T))^2 \times \log\left(\frac{ n\log\log(T)}{\rho}\right) \times \sqrt{T} \right).
\] 
\end{theorem}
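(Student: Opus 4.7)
The worst-case regret bound follows from the design of \Cref{alg:vnilla_replicability_experts}. At every step of the block phase the algorithm tracks $\regret_t$, and as soon as $\regret_t$ first reaches $K-2\sqrt{T\ln n}$ it falls through to plain $\FTPL$ with noise $\sqrt{\ln n/T}$ on the remaining time steps. Because one step can increase $\regret_t$ by at most $1$ and the remaining $\FTPL$ run incurs at most $O(\sqrt{T\ln n})$ additional regret, the total worst-case regret is bounded by $K-2\sqrt{T\ln n}+1+O(\sqrt{T\ln n})\leq K$, matching the claim; no stochastic assumption is required here.

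For iid replicability, fix two independent cost sequences $S,S'\sim \calD^{\otimes T}$ and shared randomness $R$. The plan is to exhibit a good event $E$ of probability at least $1-\rho$ on which both runs produce the same action sequence. I would decompose $E=E_{\mathrm{blk}}\cap E_{\mathrm{ns}}$, where $E_{\mathrm{blk}}$ asks $a^{(i)}(S)=a^{(i)}(S')$ for every $i$, and $E_{\mathrm{ns}}$ asks that neither run ever triggers the $\FTPL$ fallback; on $E$ both runs stay in the block phase throughout and play the same block-constant expert in every block. For $E_{\mathrm{blk}}$, block $1$ is replicable for free since $a^{(1)}=1$ is deterministic; for $i\geq 2$ I would apply Hoeffding coordinate-wise, using $\alpha=\sqrt{\ln(8n\log\log(T)/\rho)}$ to get $|\sum_{j\leq t}c_j(a)-\mu_a t|\leq \alpha\sqrt t$ uniformly in $a$ and across all block ends with probability at least $1-\rho/4$, and then invoke the memoryless property of $\Geo(\eps^{(i)})$ exactly as in the proof of \Cref{lem:prob-different-actions-BTPL-FTPLS}: conditioning on all $X_a^{(i)}$ for $a\neq a^{(i)}(S)$, the winning margin at $a^{(i)}(S)$ exceeds $2\alpha\sqrt t$ (which forces $a^{(i)}(S')=a^{(i)}(S)$) with probability at least $1-4\alpha\sqrt t \cdot \eps^{(i)}$. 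With $\eps^{(i)}$ calibrated on the scale $\gamma/(\alpha\sqrt t)$, this per-block failure is at most $\gamma$, so a union bound over the $\log\log T$ blocks gives $\Pr[E_{\mathrm{blk}}^c]\leq \rho/4+\gamma\log\log T\leq \rho/2$.

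For $E_{\mathrm{ns}}$, I would use the $\FTPL$--$\BTPL$--$\BTL$ telescoping from \Cref{theorem:regret-replicable-FTPLB-star} applied to the block-end selector $a^{(i+1)}=\argmin_a\big(\sum_{j\leq t}c_j(a)-X_a^{(i)}\big)$: combined with the Hoeffding event, this yields an expected per-block regret with a ``noise'' piece of order $L^{(i+1)}\ln n/(\eps^{(i)} L^{(i)})$ and an ``estimation'' piece of order $L^{(i+1)}\alpha/\sqrt{L^{(i)}}$. The doubly-exponential block lengths $L^{(i)}=T^{1-2^{-i}}$ are chosen so that both pieces balance to $\widetilde{\calO}(\sqrt T/\gamma)$ at the calibrated $\eps^{(i)}$, and summing over the $\log\log T$ blocks gives an expected total regret of $\widetilde{\calO}((\log\log T)^2\sqrt T/\rho)$, which is at least a factor of two below the switching threshold $K-2\sqrt{T\ln n}$ by the choice of $K$. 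A per-block Hoeffding concentration on the block-local regret (whose increments are bounded by $1$), combined via a union bound over the $\log\log T$ blocks and the two runs, then delivers $\Pr[E_{\mathrm{ns}}^c]\leq \rho/2$. The main obstacle is precisely this last step: the two runs see independent cost sequences and hence independent realized regrets even on $E_{\mathrm{blk}}$, so one really needs both the tight expected-regret bound and a concentration estimate to keep both runs inside the FTPL-free regime simultaneously; this is what forces the particular schedule $\eps^{(i)}\asymp \gamma/(\alpha\sqrt t)$ together with the block lengths $L^{(i)}=T^{1-2^{-i}}$ to simultaneously control replicability failure and per-block regret.
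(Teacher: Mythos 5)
Your proposal is correct in substance and its first half (the worst-case regret cap via the switch to $\FTPL$, and the block-action replicability via cost concentration at block ends plus memorylessness of the geometric noise) matches the paper's proof; the paper's only cosmetic difference there is that it anchors both runs to a population-level winner $a_*^{(i)}=\argmin_a(\mu_aP^{(i-1)}-X_a^{(i)})$, which depends only on $R$, rather than conditioning run $S'$ on run $S$'s selection. Where you genuinely diverge is in controlling the no-switch event $E_{\mathrm{ns}}$. The paper does \emph{not} bound the expected per-block regret via the $\FTPL$--$\BTPL$--$\BTL$ telescoping and then concentrate the realized regret; instead it shows that on the already-established concentration events the realized regret is bounded \emph{pathwise and deterministically} at every time $t$: the events $E$ (block-end concentration) and $G$ (time-uniform concentration) sandwich every expert's cumulative cost around $\mu_a t$, the selection rule plus $E$ plus the noise cap $H^{(i)}$ ($M^{(i)}\le\beta/\eps^{(i)}$) give $\mu_{a^{(v)}}-\mu_a\le(2\alpha\sqrt{P^{(v-1)}}+M^{(v)})/P^{(v-1)}$, and summing over blocks bounds $\regret_t$ by $K-2\sqrt{T\ln n}$ outright. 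This removes the ``two independent realized regrets'' obstacle you flag at the end: once you have uniform control of each expert's cumulative cost, there is nothing left to concentrate, and the telescoping machinery of \Cref{theorem:regret-replicable-FTPLB-star} is unnecessary.

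One technical point you should fix if you carry out your version: the switch condition $\regret_t\ge K-2\sqrt{T\ln n}$ is checked at \emph{every} $t$, not only at block ends, so the concentration of cumulative costs must be uniform in $t$. The paper gets this from the time-uniform martingale bound (\Cref{thm:blackwell_concentration}, event $G$) at no extra cost; a per-$t$ Hoeffding bound union-bounded over $t\in[T]$ costs an extra $\sqrt{\log T}$ in the deviation, which would degrade the stated regret (whose only $T$-dependent logarithmic factors are $\log\log T$). Your phrase ``Hoeffding \ldots across all block ends'' covers only block endpoints, so be explicit about the uniform-in-$t$ step.
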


\begin{proof}
    \emph{Regret analysis:}
    Clearly, the regret of \Cref{alg:vnilla_replicability_experts} can never be more that $K$ because as soon as $\regret_t \geq K - 2\sqrt{T\ln{n}}$ we switch to the usual \textsc{FTPL} with geometric noise $\textbf{Geo}(\eps = \sqrt{\ln{n}/T})$ \sacomment{$\eps=\sqrt{\frac{\ln n}{T}}$}, which in expectation suffers an additional regret of $2\sqrt{T\ln{n}}$.

    \emph{Replicability analysis:}  
First, we make a few observations that are helpful to show the replicability property of \Cref{alg:vnilla_replicability_experts}. Let \( P^{(i)} = \sum_{v=1}^i L^{(i)} \), ie, the end of block $i$, and \( \mu_a = \Ex_{c\leftarrow\calD}[c(a)] \). We analyze the probability of various events below and the probability should be thought of as over the cost sequence $S\sim \calD^{\otimes T}$ and internal randomness $R$.

\begin{enumerate}
    \item The number of blocks used by \Cref{alg:vnilla_replicability_experts} at \Cref{line:block_vanilla_replicability}, say $B$, is at most \( \log\log(T) + 1 \). This is because
    \[
    L_{\log\log(T) + 1} + L_{\log\log(T)} = T^{1 - 2^{-\log\log(T) - 1}} + T^{1 - 2^{\log\log(T)}} = \frac{T}{\sqrt{2}} + \frac{T}{2} > T.
    \]

    \item Recall that $\alpha = \sqrt{\ln({8n\log\log(T)}/\rho)}$. Let \( E_a^{(i)} \) be the event that \( \sum_{j=1}^{P^{(i)}} c_j(a) \notin [\mu_a P^{(i)} \pm \alpha\sqrt{P^{(i)}}] \).  
    Hence, if $E_a^{(i)}$ does not take place then the cumulative cost of expert $a$ when block $i$ ends, ie, at time $t=P^{(i)}$, is within $\alpha \sqrt{P^{(i)}}$ of the expected cost of expert $a$. 
    Here, $\sqrt{P_i}$ can be thought of as a proxy for the standard deviation of the cumulative cost up till $P^{(i)}$. 
    
    By McDiarmid's inequality (see \Cref{thm:McDiarmid's_inequality}), we have \( \Pr[E_a^{(i)}] \leq 2\exp(-2\alpha^2) \). Thus, 
    \[
    \Pr\left[E\coloneqq\bigcup_{\substack{a \in \calA \\ i \in [B]}} E_a^{(i)}\right] \leq 2nB\exp(-2\alpha^2) \leq \rho/8.
    \]

    \item Recall that $\eps^{(i)} = \frac{\gamma}{2\alpha \sqrt{P^{(i-1)}}}$ where $\gamma = \frac{\rho}{8\log\log(T)}$. For \( i \), let \( a_*^{(i)} = \argmin_{a \in \calA} \left( \mu_a P^{(i-1)} - X_a^{(i)} \right) \). Further, let \( F^{(i)} \) denote the event that there exists an \( a \in \calA \) such that 
    \[
    \mu_{a_*^{(i)}} P^{(i-1)} - X_{a_*^{(i)}}^{(i)} > \mu_a P^{(i-1)} - X_a^{(i)} - 2\alpha\sqrt{P^{(i-1)}}.
    \]
    Here, we should think of $a_*^{(i)}$ as the expected expert to be selected at the beginning of block $i$ conditioned on the values of $X_a^{(i)}$. When the event $F^{(i)}$ does not happen we have that the expected expert $a_*^{(i)}$ is well separated from other experts such that on typical cost sequences from $\calD^{\otimes T}$, ones where $\cup_{a\in \calA}E_a^{(i-1)}$ does not occur, it will be the case that $a^{(i)} = a_*^{(i)}$.

    We claim that \( \Pr[F^{(i)}] \leq \eps^{(i)} \times 2\alpha \sqrt{P^{(i-1)}} = \gamma \). 
    To see this, for a fixed action $a\in \calA$ and constant $Y\in R$ let $\tilde{F}_{a,Y}$ be the event that \( a_*^{(i)} = a \) and 
    \[
     \max_{\substack{b \in \calA \\ b \neq a}} \left( \mu_a P^{(i-1)} - \left(\mu_b P^{(i-1)} - X_b^{(i)}\right) \right) = Y.
    \]
    Upon conditioning, we have
    \[
    \Pr[F^{(i)}\mid \tilde{F}_{a,Y}] = \Pr\left[X_{a_*^{(i)}}^{(i)} < Y + 2\alpha\sqrt{P^{(i-1)}} \mid X_{a_*^{(i)}}^{(i)} > Y \right] \leq \eps^{(i)} \times 2\alpha \sqrt{P^{(i-1)}},
    \]
    since \( X_{a_*^{(i)}}^{(i)} \) is distributed as \( \textbf{Geo}(\eps^{(i)}) \) and using the memorylessness of the Geometric distribution.
    
    Since this is true for all values of \( a_*^{(i)} \) and \( Y \), we have the above claim. Hence,
    \[
    \Pr\left[F \coloneqq \bigcup_{\substack{i \in [B]}} F^{(i)}\right] \leq B \gamma <\rho/8.
    \]
    
    \item Let $\eta = \sqrt{2\ln({16n}/\rho)}$ For \( a \in \calA \), let \( G_a \) be the event that there exists a \( t \in [T] \) such that \( \left| \sum_{j \leq t} c_j(a) - \mu_a t \right| > \eta\sqrt{T} \).  
    Hence, when $G_a$ does not occur it means that the cumulative cost of expert $a$ at any time $t$ is within $\eta\sqrt{T}$ of its expected cumulative cost till time $t$. Note that event $E_a^{(i)}$ provides possibly a stronger bound on the cumulative cost of expert $a$ at time $P^{(i)}$, but not at all times.
    \sacomment{So here, we could have used McDiarmid instead of Blackwell, and then the concentration bound would have suffered an extra factor of $\log T$, right?}
    \sbcomment{yes that is correct}
    Then, by using \Cref{thm:blackwell_concentration} with parameters $S_t =  \sum_{j \leq t} c_j(a) - \mu_a t $ and $a = \frac{\eta\sqrt{T}}{2}$, $b=\frac{\eta}{2\sqrt{T}}$,  we have \( \Pr[G_a] \leq 2\exp(-\eta^2/2) \); hence,
    \[
    \Pr\left[G\coloneqq\bigcup_{\substack{a \in \calA}} G_a\right] \leq 2n \exp(-\eta^2/2) <\rho/8.
    \]

    \item Let $\beta  = {\ln({8n\log\log(T)}/\rho)}$ and \( M^{(i)} = \max\left\{ X_a^{(i)} \mid a \in \calA \right\} \), and let \( H^{(i)} \) be the event that \( M^{(i)} > \frac{\beta}{\eps^{(i)}} \).  
    Hence, if $H^{(i)}$ does not occur then the noise added at the beginning of block $i$ to any expert $a\in \calA$ is at most a multiplicative factor $\beta$ over the expected value of $1/\eps^{(i)}$. 
    
    Then, using the fact that \( X_a^{(i)} \sim \textbf{Geo}(\eps^{(i)}) \) and by a union bound over \( a \in \calA \), we have 
    \[
    \Pr\left[H^{(i)} \right] \leq n (1 - \eps^{(i)})^{\beta / \eps^{(i)}} \leq n \exp(-\beta).
    \]
    Thus,
    \[
    \Pr\left[H\coloneqq \bigcup_{i \in [B]} H^{(i)} \right] \leq n B \exp(-\beta) \leq \rho/8.
    \]
    \sacomment{Here, we can say that properties 4, 5 will be used later to bound the regret, and properties 2,3 will be used to argue about replicability.}
\end{enumerate}
Properties $4$ and $5$ will be used later to bound the chance of the algorithm calling the usual \textsc{FTPL} (\Cref{line:ususal_FTPL_vanilla_replicability_experts}), and properties $2$ and $3$ will be used to bound the chance that the same expert is selected across each block in two different runs of the algorithm on iid inputs.
In what follows for a RV $Z$ we denote by $Z(S,R)$ the realization of $Z$ on the cost sequence $S\sim \calD^{\otimes T}$ and internal randomness $R$: further, for an event $I$ let $I(S,R)$ be $1$ if $(S,R)\in I$ and $0$ otherwise.
To show replicability we need to show that with probability at least $1-\rho$ (over the internal randomness of the algorithm and over independent cost sequences $S_1,S_2$ from $\calD^{\otimes T}$) at all times $t$ we have $a_t(S_1,R) = a_t(S_2,R)$ ($a_t$ is the expert selected by the algorithm at time $t$). 
For this it is sufficient to show that with probability at least $1-\rho$: for all $i\in [B]$ we have that $a^{(i)}(S_1,R) = a^{(i)}(S_2,R)$ and  \Cref{line:ususal_FTPL_vanilla_replicability_experts} is never called (i.e., at no time does the regret exceed $K-2\sqrt{T\ln{n}}$.)

Now, from the above observations, with probability at least $1-\rho$ over $R$ and $S_1$ we have that $E(S_1,R) = F(S_1,R) = G(S_1,R) = H(S_1,R) = 0$ and $E(S_2,R) = F(S_2,R) = G(S_2,R) = H(S_2,R) = 0$.

Next, consider any $S_1,S_2,R$ such that $E(S_1,R) = F(S_1,R) = G(S_1,R) = H(S_1,R) = 0$ and $E(S_2,R) = F(S_2,R) = G(S_2,R) = H(S_2,R) = 0$. In such a case we have:

\begin{enumerate}
    \item \emph{$\forall i\in[B]: a^{(i)}(S_1,R) = a^{(i)}(S_2,R)$}: This is because $a_*^{(i)}(S_1,R) = a_*^{(i)}(S_1,R)$, since in both cases we are using same randomness $R$.
    Further, since $E(S_1,R) = F(S_1,R) = 0$ we have that $a^{(i)}(S_1,R) = a_*^{(i)}(S_1,R)$. Similarly, $a^{(i)}(S_2,R) = a_*^{(i)}(S_2,R)$.
    \item \emph{At no time does the regret exceed $K-2\sqrt{T\ln{n}}$}: Since $G(S_1,R) = H(S_1,R) = 0$, therefore, at any time $t\leq T$ the regret suffered by the algorithm can be analyzed as follows.
    Let $i+1$ be the block which contains the time index $t$.
    Then, for any expert $a\in \calA$ at time $t$ the cost incurred on $(S_1,R)$ is: 
    \begin{align}
   \label{eqn:cost_of_expert_a_vanilla_replicability_experts}   
    \sum_{j=1}^t c_j(a)(S_1,R) &\geq \mu_a t - \eta\sqrt{T}\qquad  (\because G(S_1,R) =0) \nonumber\\
    &= L^{(1)}\mu_a + L^{(2)}\mu_a + \ldots + L^{(i)}\mu_a + (t-P^{(i)})\mu_a -\eta\sqrt{T}.
    \end{align}

    The cost incurred by the algorithm till time $t$ is at most:

    \begin{align}
    \label{eqn:cost_of_alg_vanilla_replicability_experts}
    L^{(1)} + L^{(2)} \left( \mu_{a^{(2)}(S_1, R)} + 2\alpha \sqrt{P^{(2)}} \right) + \ldots    &\quad + L^{(i)} \left( \mu_{a^{(i)}(S_1, R)} + 2\alpha \sqrt{P^{(i)}} \right) \nonumber \\
    &\quad + (t - P^{(i)}) \mu_{a^{(i+1)}(S_1, R)} + 2\eta \sqrt{T}.
\end{align}

    This is because for a particular block say $v\leq i$ the algorithm selects expert $a^{(v)}(S_1,R)$ and incurs a cost that is close to the expected cost for $a^{(v)}(S_1,R)$ for $L^{(v)}$ time steps: within the $L^{(v)}$ time steps the cost incurred by expert $a^{(v)}(S_1,R)$ can deviate from $\mu_{a^{(v)}(S_1,R)}L^{(v)}$ by at most $\alpha \sqrt{P^{(v-1)}} + \alpha \sqrt{P^{(v)}} \leq 2\alpha \sqrt{P^{(v)}}$ (as $E(S_1,R)=0$).
    The additional $2\eta\sqrt{T}$ term accounts for the deviation over the expected cost for the time steps $P^{(i)}+1,\ldots,t$ (as $G(S_1,R)=0$).

    Now, let us analyze $\mu_{}a^{(v)}(S_1,R) - \mu_a$ for a fixed block $v$. 
    (For ease of presentation in what follows we have removed the explicit reference to $(S_1,R)$ at some places: each RV should be thought of as its realization on $(S_1,R)$.)

    Notice that by \Cref{line:min_perturbed_expert_vanillia_replicability_experts} we have:
    \begin{align*}
        \sum_{j=1}^{P^{(v-1)}} c_j(a^{(v)}) - X_{a^{(v)}}^{(v)} \leq \sum_{j=1}^{P^{(v-1)}} c_j(a) - X_{a}^{(v)}. 
    \end{align*}
    Also, as $E(S_1,R) = 0$ we have
    \begin{align*}
        \mu_{a^{(v)}}P^{(v-1)} - \alpha\sqrt{P^{(v-1)}} &\leq \sum_{j=1}^{P^{(v-1)}}c_j(a^{(v)})
        \intertext{and}\\
        \sum_{j=1}^{P^{(v-1)}} c_j(a)  \leq \mu_{a}P^{(v-1)} &+ \alpha\sqrt{P^{(v-1)}},\\
        \intertext{thus, using $X_{a^{(v)}}^{(v)}\leq M^{(v)}$, we have}\\
        \qquad \mu_{a^{(v)}} - \mu_a &\leq \frac{2\alpha\sqrt{P^{(v-1)}} + M^{(v)}}{P^{(v-1)}}.
    \end{align*}
\sacomment{I think in the above equation, $M^v$ also has a multiplicative factor of $2$.}
\sbcomment{I think it is okay...since one of terms contribution $M^v$ has a negative sign} \sacomment{I see, so you basically do not need to take into account $X_a^{(v)}$ since it has a negative sign.}
    Finally, we can upper bound the regret at time $t$ on $(S_1,R)$ as (\Cref{eqn:cost_of_alg_vanilla_replicability_experts} - \Cref{eqn:cost_of_expert_a_vanilla_replicability_experts}):
\sacomment{in the first line below it should be $2B\alpha \sqrt{T}$ I think.}
    \begin{align*}
    &L_1 + \sum_{v=2}^{B} (\mu_{a^{(v)}} - \mu_a) L^{(v)} + (t - P^{(i)}) (\mu_{a^{(i)}} - \mu_a) + 2\alpha B \sqrt{T} + 3\eta \sqrt{T} \\
    &\leq L_1 + \sum_{v=2}^B L^{(v)} \times \frac{2\alpha \sqrt{P^{(v-1)}} + M^{(v)}}{P^{(v-1)}} + (2\alpha B  + 3\eta) \sqrt{T}, \\
    \intertext{which, by using \( M^{(v)} \leq \frac{\beta}{\eps^{(v)}} = \frac{\beta}{\gamma} \times \sqrt{P^{(v-1)}} \) and \( P^{(v-1)} \geq L^{(v-1)} = \frac{\left( L^{(v)} \right)^2}{T} \), becomes}
    &\leq (2\alpha B + 3\eta + 1) \sqrt{T} + (2\alpha + \beta / \gamma) \sum_{v=2}^{B} \sqrt{T} \\
    &\leq (2\alpha B + 3\eta + 1) \sqrt{T} + B (2\alpha + \beta / \gamma) \sqrt{T}, \\
    \intertext{\sacomment{can we explain why the below inequality is the case? why do we get $-2\sqrt{T\ln n}$ and what happened to $\eta$?} \sbcomment{the factor $1000$ is large enough to absorb the -2 term} which upon substituting the values of $\alpha = \sqrt{\ln({8n\log\log(T)}/\rho)}$, $\beta  = {\ln({8n\log\log(T)}/\rho)}$, $\gamma = \frac{\rho}{8\log\log(T)}$ and $\eta = \sqrt{2\ln({16n}/\rho)}$ becomes,}
    &\leq 1000 \times \left( \frac{1}{\rho} \times (\log\log(T))^2 \times \log\left(\frac{n \log\log(T)}{\rho}\right) \times \sqrt{T} \right) - 2\sqrt{T \ln{n}} \\
    &= K - 2\sqrt{T \ln{n}}.
\end{align*}

\end{enumerate}

\end{proof}

\section{Lower Bounds for Replicable Regret Minimization}
\label{sec:lower_bounds}
First we prove a lower bound on the regret of an iid $\rho$-replicable algorithm even with two experts via a reduction from the coin problem of \cite{impagliazzo2022reproducibility}(Lemma $7.2$).

The coin problem is as follows: promised that a $0-1$ coin has bias either $1/2-\tau$ or $1/2+\tau$ for some fixed $\tau > 0$  how many flips are required to identify the coin’s bias with high probability?
Below we state the result from \cite{impagliazzo2022reproducibility} regarding the replicability of such an algorithm for the coin problem.

\begin{lemma}[\cite{impagliazzo2022reproducibility}{(Lemma $7.2$)}: Sample Lower Bound for the coin Problem]
\label{lemma:coin_problem}
 Let \( \tau < \frac{1}{4} \) and \( \rho < \frac{1}{16} \). Let \( \mathcal{B} \) be a \( \rho \)-replicable algorithm that decides the \text{coin} problem with success probability at least \( 1 - \delta \) for \( \delta = \frac{1}{16} \) using $T$ iid samples. 
 Furthermore, assume \( \mathcal{B} \) is \( \rho \)-replicable, even if its samples are drawn from a \text{coin} \( \mathcal{C} \) with bias in \( \left( \frac{1}{2} - \tau, \frac{1}{2} + \tau \right) \). Then \( \mathcal{B} \) requires sample complexity \( T \in \Omega\left( \frac{1}{\tau^2 \rho^2} \right) \), i.e., \( \rho \in \Omega\left( \frac{1}{\tau \sqrt{T}} \right) \).
\end{lemma}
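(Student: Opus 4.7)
The plan is to combine replicability-induced concentration with a hybrid argument across a fine grid of biases, with each transition step controlled by the total-variation distance between product Bernoulli distributions. Concretely, for each fixed internal randomness $R$ and each bias $p \in (1/2 - \tau, 1/2 + \tau)$, define the conditional acceptance probability $h_R(p) \coloneqq \Pr_S[\mathcal{B}(S, R) = 1]$, where $S \sim \mathrm{Ber}(p)^{\otimes T}$. Conditioning the replicability guarantee on $R$ and using the identity $\Pr_{S_1, S_2}[\mathcal{B}(S_1, R) \neq \mathcal{B}(S_2, R) \mid R] = 2\, h_R(p)(1 - h_R(p))$, one deduces $\Ex_R[h_R(p)(1 - h_R(p))] \leq \rho/2$; Markov's inequality then yields $\Pr_R[h_R(p) \in [1/3, 2/3]] \leq 9\rho/4$ for every fixed $p$.

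Next, I would introduce a grid $p_0 = 1/2 - \tau < p_1 < \cdots < p_k = 1/2 + \tau$ with $k = \Theta(1/\rho)$ equally spaced points and union-bound the concentration estimate over the grid, so that with probability at least $99/100$ over $R$ we have $h_R(p_i) \notin (1/3, 2/3)$ for every $i$. The correctness assumptions $\Ex_R[h_R(1/2 + \tau)] \geq 15/16$ and $\Ex_R[h_R(1/2 - \tau)] \leq 1/16$, together with Markov applied to $h_R$ and $1 - h_R$, force at least a $5/8$-fraction of $R$ to additionally satisfy $h_R(1/2 - \tau) \leq 1/3$ and $h_R(1/2 + \tau) \geq 2/3$. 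Intersecting these two events leaves a positive constant fraction of $R$, and for each such $R$ a pigeonhole across the $k$ consecutive gaps produces an index $i^* = i^*(R)$ with $h_R(p_{i^* + 1}) - h_R(p_{i^*}) \geq 1/3$.

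Finally, for any fixed $R$, the data-processing inequality gives $|h_R(p) - h_R(p')| \leq \mathrm{TV}\bigl(\mathrm{Ber}(p)^{\otimes T}, \mathrm{Ber}(p')^{\otimes T}\bigr)$. Pinsker combined with tensorization of KL, plus the standard bound $\mathrm{KL}(\mathrm{Ber}(p)\,\|\,\mathrm{Ber}(p')) = O((p-p')^2)$ valid for $p, p'$ near $1/2$, upper-bounds this by $O(\sqrt{T}\,|p - p'|) = O(\sqrt{T}\, \tau / k)$. Enforcing the forced jump $1/3 \leq O(\sqrt{T}\,\tau/k)$ and substituting $k = \Theta(1/\rho)$ yields $\sqrt{T}\,\tau\rho = \Omega(1)$, i.e., $T = \Omega\bigl(1/(\tau^2 \rho^2)\bigr)$, which is exactly the stated bound.

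The main obstacle I anticipate is the balancing of the grid parameter $k$: the concentration step demands $k\rho$ small (so that the union bound over grid points leaves most $R$ ``dichotomous'' at every $p_i$), while the hybrid step demands $k$ large enough that the per-step TV budget $O(\sqrt{T}\,\tau/k)$ drops below the forced jump of $1/3$. The precise numerical thresholds $\rho < 1/16$ and $\delta = 1/16$ in the lemma statement are calibrated so that both sides of this balance admit absolute constants; the remaining work is routine book-keeping, particularly on the exceptional sets of $R$ where some $h_R(p_i)$ lies in $(1/3, 2/3)$ or where the endpoint values disagree with the majority trend.
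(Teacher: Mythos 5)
This lemma is imported verbatim from \citet{impagliazzo2022reproducibility} and the paper gives no proof of it, so there is no internal argument to compare against; what you have written is a correct, self-contained reconstruction along the standard lines of the original. Each step checks out: conditioning on $R$, two independent runs disagree with probability $2h_R(p)(1-h_R(p))$, so replicability gives $\Ex_R[h_R(p)(1-h_R(p))]\leq \rho/2$ and Markov gives $\Pr_R[h_R(p)\in[1/3,2/3]]\leq 9\rho/4$; the union bound over $k+1=\Theta(1/\rho)$ grid points (with a sufficiently small constant, e.g.\ $k+1\leq 1/(225\rho)$) leaves the dichotomy intact for most $R$; Markov applied to the correctness guarantees at the two endpoints forces a $5/8$-fraction of $R$ to have the right endpoint behavior; pigeonhole then produces a consecutive pair with a jump of at least $1/3$; and data processing plus Pinsker, tensorization of KL, and $\mathrm{KL}(\mathrm{Ber}(p)\Vert\mathrm{Ber}(p'))\leq (p-p')^2/(p'(1-p'))=O((p-p')^2)$ (using $\tau<1/4$ so $p'(1-p')\geq 3/16$) caps that jump at $O(\sqrt{T}\,\tau/k)$, yielding $T=\Omega(1/(\tau^2\rho^2))$. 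Two small book-keeping remarks: the argument needs $k\geq 1$, which fails when $\rho$ is within a constant factor of its cap $1/16$, but in that regime the claimed bound degenerates to $T=\Omega(1)$ and is vacuous, so nothing is lost; and the identity $\Pr[\mathcal{B}(S_1,R)\neq\mathcal{B}(S_2,R)\mid R]=2h_R(1-h_R)$ uses that the output is binary, which holds for the coin problem. Neither affects correctness.
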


Using the above lemma we obtain the following result.
\begin{theorem}
\label{thm:lower_bound_n=2_vanilla_replicability_experts}
Suppose there are two experts $\calA= \set{1,2}$, ie, $n=2$. 
Let $\rho>0$ be a parameter. 
Then, any iid $\rho$-replicable algorithm must suffer a worst-case regret of $K=\min\set{\Omega(\sqrt{T}/\rho), T/64}$ : 
in fact, there exists a distribution $\calD$ over $[0,1]^2$ such that on a cost sequence $S=(c_1,\ldots,c_T)$ sampled from $\calD^{\otimes T}$ the expected regret of the algorithm is $K=\min\set{\Omega(\sqrt{T}/\rho), T/64}$.
\end{theorem}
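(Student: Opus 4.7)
The plan is to reduce the coin problem to regret minimization with two experts and invoke \Cref{lemma:coin_problem}. Given a coin $\calC$ with unknown bias $p \in \{1/2-\tau, 1/2+\tau\}$, I construct a distribution $\calD$ over $[0,1]^2$ by sampling $x \sim \calC$ and setting $c = (x, 1-x)$. Then $\mu_1 \coloneqq \Ex_{c\sim \calD}[c(1)] = p$ and $\mu_2 = 1-p$, so the gap between the two expert means is exactly $2\tau$, and the identity of the better expert encodes the bias. I will choose $\tau \coloneqq 17K/T$, which is less than $1/4$ precisely when $K < T/68$; the case $K \geq T/64$ yields the second branch of the min immediately.

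The next step is the regret-to-classification reduction. Let $n_1 = n_1(S,R)$ and $n_2 = n_2(S,R) = T-n_1$ denote the number of times the algorithm plays expert $1$ and $2$ respectively. A direct computation shows that when the bias is $1/2+\tau$ (so expert $2$ is optimal) the expected regret equals $2\tau n_1$, and symmetrically when the bias is $1/2-\tau$ the expected regret equals $2\tau n_2$. Assuming by contradiction that the worst-case expected regret is at most $K$, Markov's inequality gives that with probability at least $15/16$ over $(S,R)$ the realized regret is at most $16K$, and hence $n_{\text{wrong}} \leq 16K/(2\tau) = 8K/\tau < T/2$ by our choice of $\tau$. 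Define the derived statistic $\hat b(S,R) \in \{1,2\}$ to be the expert selected by the majority of the $T$ actions (breaking ties arbitrarily); then $\hat b$ correctly identifies the bias with success probability at least $15/16$.

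I will then verify that $\hat b$ is $\rho$-replicable. This is immediate from the definition of iid replicability: if $a_t(S,R)=a_t(S',R)$ for every $t$, then $n_1(S,R) = n_1(S',R)$ and therefore $\hat b(S,R) = \hat b(S',R)$. Consequently, two independent draws $S_1, S_2 \sim \calD^{\otimes T}$ and shared randomness $R$ yield $\hat b(S_1,R) = \hat b(S_2,R)$ with probability at least $1-\rho$. Hence $\hat b$ is a $\rho$-replicable algorithm for the coin problem that succeeds with probability at least $1-1/16$ using $T$ samples, fitting the hypothesis of \Cref{lemma:coin_problem}.

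The final step is to conclude. Applying \Cref{lemma:coin_problem} (valid because $\rho < 1/16$ is the interesting regime; otherwise the bound $\Omega(\sqrt{T}/\rho)$ is $O(\sqrt{T})$ and follows from standard $\Omega(\sqrt{T})$ lower bounds for the experts problem) gives
\[
\rho \;\geq\; \Omega\!\left(\frac{1}{\tau\sqrt{T}}\right) \;=\; \Omega\!\left(\frac{T}{K\sqrt{T}}\right) \;=\; \Omega\!\left(\frac{\sqrt{T}}{K}\right),
\]
which rearranges to $K \geq \Omega(\sqrt{T}/\rho)$. Combined with the trivial branch $K \geq T/64$ when $17K/T \geq 1/4$, this produces the claimed bound $K = \min\{\Omega(\sqrt T/\rho), T/64\}$. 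The main subtlety is tracking the constants so that Markov's inequality still leaves room to declare a correct majority winner (i.e., ensuring $n_{\text{wrong}} < T/2$) while keeping $\tau < 1/4$; once those constants are set, the rest is immediate from the reduction.
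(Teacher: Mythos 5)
Your proposal is correct in substance and follows essentially the same route as the paper: encode the coin as the two-expert cost distribution $c=(x,1-x)$, take the majority of the algorithm's actions as the (automatically replicable) classifier, and invoke \Cref{lemma:coin_problem} with $\tau\approx K/T$ to get $K\geq\Omega(\sqrt{T}/\rho)$, with the $\tau<1/4$ constraint producing the $T/\mathrm{const}$ branch of the min. One small technical slip: the chain ``realized regret $\leq 16K$ hence $n_{\text{wrong}}\leq 16K/(2\tau)$'' is not valid pointwise, since the identity $\regret=2\tau\,n_{\text{wrong}}$ holds only in expectation over the coin flips (and the realized regret can even be negative, so Markov should not be applied to it directly). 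The clean fix, which is what the paper effectively does, is to derive $\Ex[n_{\text{wrong}}]\leq K/(2\tau)$ from the expected-regret bound and apply Markov to the nonnegative variable $n_{\text{wrong}}$ to get $\Pr[n_{\text{wrong}}\geq T/2]\leq 2K/(\tau T)<1/16$; with that one-line repair your argument goes through.
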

\begin{remark*}
    Using the usual regret lower bound for $n$ experts we can conclude a lower bound of $\Omega(\sqrt{T}/\rho + \sqrt{T\log(n)})$ provided this is lesser than $T/64$.
\end{remark*}

\begin{proof}
    Let \textsc{ALG} be an iid $\rho$-replicable algorithm with worst-case regret $K$.
    Further, let the expert selected by \textsc{ALG} at time $t$ be $a_t$.
    We reduce an instance of the coin problem from above with parameters $\rho$, $\tau =\frac{K}{\delta T}$\sacomment{$\tau=\frac{K}{\delta T}$, $\rho>\Omega(\frac{1}{\tau\sqrt{T}})$?}, and $\delta=1/16$  to the expert setting as follows.
    For a sequence of coin flips $\hat{c}_1,\hat{c}_2,\ldots, \hat{c}_T$, where each $\hat{c}_j\in \set{0,1}$, let $c_1,\ldots,c_T$ be the cost sequence such that $c_j(1) = \hat{c}_j$ and $c_j(2) = 1 - \hat{c}_j$.
    
    Now, we feed \textsc{ALG} the cost sequence $c_1,\ldots,c_T$ corresponding to iid coin flips $c_1,c_2,\ldots,c_T$ where the bias of the coin is between $1/2+\tau$ and $1/2-\tau$.  
    For such inputs we know that \textsc{ALG} is $\rho$-replicable. 
    Further, when the coin's bias is either $1/2+\tau$ or $1/2-\tau$, $a= \majority\set{a_t\mid t\in [T]}$ is a good indicator of the coin's bias: we answer that the coin has bias $1/2+\tau$ if $a=2$ and $1/2-\tau$ if $a=1$. 
    Also, we will be correct with probability at least $1-\delta$. 
    To see this, suppose that the coin has bias $1/2+\tau$ and notice that the expected cost of \textsc{ALG} is (the expectations below are over the internal randomness of the algorithm and the coin flips leading to the sequence $c_1,\ldots,c_T$):
    \begin{align*}
        \Ex[\text{Cost\textsc{(ALG)}}]  &= \Ex[\sum_{j=1}^T \I[a_t=1]c_t(1) + \I[a_t=2]c_t(2)]\\
        & = \sum_{j=1}^T\Ex[\I[a_t=1]c_t(1) + \I[a_t=2]c_t(2)] \qquad \text{(Linearity of Expectation)}\\
        & = \sum_{j=1}^T\Ex[\I[a_t=1](1/2+\tau) + \I[a_t=2](1/2-\tau)] \qquad \text{($\because$ $a_t$ is independent of $c_t$)}\\
        & = T/2 - \tau \Ex\left[\left(\sum_{j=1}^T \I[a_t=2] - \I[a_t=1]\right)\right].
    \end{align*}

    Using the fact that the regret is at most $K$ and that  $\Ex[\sum_{j=1}^T c_j(2)] = T/2 -\tau T$ (where the expectation is over the coin flips) we get:
    \begin{align*}
        &T/2 - \tau \Ex\left[\left(\sum_{j=1}^T \I[a_t=2] - \I[a_t=1]\right)\right] \leq T/2 - \tau T +K \\
        &\implies \Ex\left[\left(\sum_{j=1}^T \I[a_t=2] - \I[a_t=1]\right)\right] \geq T-K/\tau = (1-\delta)T \quad\text{(using $\tau = \frac{K}{\delta T}$)}\\
        &\implies \Pr[\majority\set{a_t\mid t\in [T]} = 2]\times T \geq (1-\delta)T\\
        &\implies \Pr[\majority\set{a_t\mid t\in [T]} = 2] > 1-\delta.
    \end{align*} 

    Hence, we can use \cref{lemma:coin_problem} to conclude that $\rho > \Omega(\frac{1}{\tau\sqrt{T}})$ provided $\tau<1/4$.
    Using $\tau  = \frac{K}{\delta T}$ and $\delta =1/16$ the constraint on $\tau$ translates to $K<T/64$. 
    This gives the theorem.
    
\end{proof}

Now, we use the above lemma to obtain the lower bound on regret in the case of $n\geq 2$ experts when we have an adversarially $\rho$-replicable algorithm.

\begin{theorem}
    \label{thm:lower_bound_vanilla_replicability_experts}
    Suppose there are $n$ experts $\calA= \set{1,2,\ldots,n}$. 
    Let $\rho>0$ be a parameter. 
    Then, any adversarially $\rho$-replicable algorithm must suffer a worst-case regret of $\min\set{\Omega(\sqrt{T\log(n)}/\rho), T/64}$. 
\end{theorem}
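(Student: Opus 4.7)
The plan is to generalize the $n=2$ reduction of \Cref{thm:lower_bound_n=2_vanilla_replicability_experts} by embedding $k=\lfloor\log_2 n\rfloor$ independent instances of the coin problem into a single run of the algorithm, each using $T'=T/k$ of the time steps. For notational convenience assume $n=2^k$. Organize the $n$ experts as the leaves of a complete binary tree of depth $k$, and partition $[T]$ into consecutive phases $P_1,\ldots,P_k$ of length $T'$. For each string $b=(b_1,\ldots,b_k)\in\set{0,1}^k$ define an adversarial product distribution $\calD^{(b)}=\otimes_t\calD^{(b)}_t$ as follows. Recursively let $A_1=[n]$ and $A_{i+1}=A_i^{b_i}$, where $A_i^0,A_i^1$ is the natural split of $A_i$ into its two equal subtrees. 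For $t\in P_i$, draw one common bit $\xi_t\sim\textbf{Ber}(1/2-\tau)$ if $b_i=0$ and $\xi_t\sim\textbf{Ber}(1/2+\tau)$ if $b_i=1$, and set $c_t(a)=\xi_t$ for $a\in A_i^0$, $c_t(a)=1-\xi_t$ for $a\in A_i^1$, and $c_t(a)=1$ for $a\notin A_i$. The unique leaf $a^\star\in\bigcap_i A_i^{b_i}$ has expected total cost $(1/2-\tau)T$, and every other expert loses an additional $2\tau T'$ in each phase where it sits on the wrong side of the split.

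A routine calculation analogous to the one in the $n=2$ case shows that the expected regret of $\ALG$ on $\calD^{(b)}$ is at least $2\tau T'\sum_{i=1}^k(1-p_i(b))$, where $p_i(b)$ is the expected fraction of time steps in phase $i$ at which $\ALG$ plays an expert inside the correct half $A_i^{b_i}$. Combining with the assumed regret bound $\Ex[\regret]\le K$ gives $\sum_i(1-p_i(b))\le Kk/(2\tau T)$ for every $b$. Averaging over uniform $b\in\set{0,1}^k$ produces some phase $i^*$ with $\Ex_b[p_{i^*}(b)]\ge 1-K/(2\tau T)$, and averaging one further level over $b_{-i^*}$ together with the trivial bound $p_{i^*}(b)\le 1$ yields a fixed $b_{-i^*}$ for which \emph{both} $p_{i^*}(b_{-i^*},0)$ and $p_{i^*}(b_{-i^*},1)$ are at least $1-K/(\tau T)$. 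This two-sided guarantee is the main technical step distinguishing the argument from the $n=2$ case and is where care is needed, since the coin-problem lower bound demands success on each of the two possible biases individually, not merely on average.

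With $b_{-i^*}$ fixed, build a coin-problem solver as follows. Given access to a coin $\calC$ of bias in $\set{1/2-\tau,1/2+\tau}$, feed $\ALG$ the distribution sequence whose phases $j<i^*$ use the prescribed bits $b_j$ (with the corresponding $\xi_t$ generated internally), whose phase $i^*$ uses the $T'$ flips of $\calC$ as the $\xi_t$'s, and whose phases $j>i^*$ use arbitrary fixed distributions; the actions of $\ALG$ in phase $i^*$ depend only on the past, so later phases are immaterial to the decision. Output ``bias $=1/2-\tau$'' iff a majority of the actions in phase $i^*$ lie in $A_{i^*}^0$. Markov's inequality applied to $1-p_{i^*}$ gives success probability at least $2p_{i^*}(b_{-i^*},b_{i^*})-1\ge 1-2K/(\tau T)$ under either bias, which exceeds $15/16$ once $\tau=\Theta(K/T)$. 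Adversarial $\rho$-replicability of $\ALG$ on this fixed distribution sequence transfers immediately to $\rho$-replicability of the simulator. Invoking \Cref{lemma:coin_problem} with $T'=T/k$ samples then yields $\rho\ge\Omega(1/(\tau\sqrt{T/k}))$, and substituting $\tau=\Theta(K/T)$ rearranges to $K\ge\Omega(\sqrt{Tk}/\rho)=\Omega(\sqrt{T\log n}/\rho)$. The constraint $\tau<1/4$ from \Cref{lemma:coin_problem} restricts this reduction to the regime $K=O(T)$, which is exactly what produces the $\min$ with $T/64$ in the theorem statement.
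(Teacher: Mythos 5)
Your proposal is correct, and while it builds essentially the same hard instance as the paper --- $\log_2 n$ consecutive phases of length $T/\log_2 n$, a halving of the expert set at each phase with eliminated experts pinned to cost $1$, and a two-expert coin problem embedded in each phase --- the argument you run on top of it is genuinely different. The paper proceeds by induction on $\log_2 n$: it invokes \Cref{thm:lower_bound_n=2_vanilla_replicability_experts} as a black box once per phase, fixes a worst-case prefix (so the earlier distributions are point masses), and adds up $\log_2 n$ per-phase regret contributions of $U\sqrt{T/\log_2 n}/\rho$ each. You instead define the whole family of $2^k$ product distributions at once, convert the global regret bound into a constraint on the accuracy profile $\sum_i(1-p_i(b))$, and use a two-stage averaging to extract a single phase $i^*$ and a single fixing $b_{-i^*}$ on which the algorithm decides one coin problem (with only $T/k$ samples but gap $\tau=\Theta(K/T)$) correctly under \emph{both} biases; a single application of \Cref{lemma:coin_problem} then finishes. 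You correctly identify the two-sided averaging as the crux, and your handling of it is right. The trade-off: the inductive route never has to reason about replicability under a shared random prefix (its prefixes are deterministic, so conditioning is vacuous), while your route avoids the induction and the bookkeeping of per-phase benchmarks, but places the burden on the replicability transfer.

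That transfer is the one step where ``immediately'' hides an argument. In your simulator the costs of phases $j<i^*$ are generated from the internal randomness of the coin-problem solver, so the two runs share the \emph{same realization} of that random prefix, whereas adversarial replicability of $\ALG$ only promises agreement when the two cost sequences are drawn independently in every coordinate. The fix is standard but should be stated: for fixed internal randomness $R$, write $f(W,C)$ for the action sequence through phase $i^*$ as a function of the prefix costs $W$ and the phase-$i^*$ flips $C$; then
\[
\Pr_{W,C,C'}\bigl[f(W,C)=f(W,C')\bigr] \;=\; \Ex_{W}\Bigl[\sum_v \Pr_{C}[f(W,C)=v]^2\Bigr] \;\geq\; \sum_v \Pr_{W,C}[f(W,C)=v]^2 \;=\;\Pr_{W,C,W',C'}\bigl[f(W,C)=f(W',C')\bigr]
\]
by Jensen applied termwise, so coupling the prefix can only increase the agreement probability, and the simulator inherits $\rho$-replicability from $\ALG$. (Alternatively, one further averaging lets you fix a deterministic prefix realization that simultaneously preserves the accuracy and replicability bounds.) With that observation added, your proof is complete.
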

\begin{proof}
    Let the set of experts be $\calA = \set{1,2,\ldots,n}$.
    It is sufficient to prove the above theorem when $n=2^m$ for $m\in \naturals$. We will proceed by induction to prove that the worst-case regret for any iid $\rho$-replicable algorithm, say \textsc{ALG}, is at least $\min\set{U\times \frac{\sqrt{Tm}}{\rho}, T/64}$ where $U>0$ is a universal constant.

    For $m=1$ this follows from \Cref{thm:lower_bound_n=2_vanilla_replicability_experts}.
    For $m>1$ we proceed as follows.
    WLOG we can assume $\rho$ is such that ${U\times \frac{\sqrt{Tm}}{\rho} \leq T/64}$ or else we can always increase the value of $\rho$ until the inequality is satisfied.
    Let $L=T/m$ and consider a cost sequence $\hat{c}_1,\ldots,\hat{c}_L$ for two experts $\set{1,2}$, ie, each $\hat{c}_j \in [0,1]^2$. Let $c_1,\ldots,c_L$ be the cost sequence for $n$ experts such that $c_j(a) = \hat{c}_j(1)$ for $a\in [n/2]$ and $c_j(a) = \hat{c}_j(2)$ for $a> n/2$.

    By using \Cref{thm:lower_bound_n=2_vanilla_replicability_experts} we know that there exists a sequence of $\hat{c}_1,\ldots,\hat{c}_L$ such that the regret of \textsc{ALG} on $c_1,\ldots,c_L$ is at least $\min\set{U\times \sqrt{L}/\rho, L/64} = U\times \sqrt{L}/\rho$.
    Further, either all experts $1,2,\ldots,n/2$ or all experts $n/2+1,\ldots,n$ are the min-cost experts for $c_1,\ldots,c_L$. 
    WLOG let us assume that it is the former case. 
    
    Now, for a cost sequence $\hat{c}_{L+1},\ldots,\hat{c}_{T}$ for the experts $1,2,\ldots,n/2$, ie, each $\hat{c}_j \in [0,1]^{n/2}$, let $c_{L+1},\ldots,c_T$ be the cost sequence such that $c_j(a) = \hat{c}_j(a)$ if $a\in [n/2]$ and $c_j(a) =1$ if $a>n/2$.
    Notice that even conditioned on receiving  $c_1,\ldots,c_L$, \textsc{ALG} remain adversarially $\rho$-replicable.
    By induction we can find a cost sequence $\hat{c}_{L+1},\ldots,\hat{c}_{T}$ for the $n/2$ experts $1,2,\ldots,n/2$ \sacomment{$n/2$ instead of n} such the regret of \textsc{ALG} (conditioned on $c_1,\ldots,c_L$) on ${c}_{L+1},\ldots,{c}_{T}$ is at least $\min\set{U\times \frac{\sqrt{(T-L)(m-1)}}{\rho}, \frac{T-L}{64}} = U\times \frac{\sqrt{(T-L)(m-1)}}{\rho}$.
    Hence, the overall regret of \textsc{ALG} is 
    \[
    U\times \sqrt{L}/\rho + U\times \frac{\sqrt{(T-L)(m-1)}}{\rho} = U\times \sqrt{Tm}/\rho.
    \]
    This can be seen by substituting $L=T/m$. \qedhere
    
\end{proof}

\clearpage
\bibliographystyle{plainnat}
\bibliography{ref}

\newpage
\appendix
\section{Some Standard Concentration Inequalities}

\begin{theorem}[\cite{blackwell1997large}(Theorem $1$) (see \cite{Howard_2020})]
\label{thm:blackwell_concentration}
Let $S_t$ be a real valued martingale with $S_0=0$.
If \( |\Delta S_t| \leq 1 \) for all \( t \), then for any \( a, b > 0 \), we have
\[
\mathbb{P} \left( \exists t \in \mathbb{N} : S_t \geq a + bt \right) \leq e^{-2ab}.
\] 
Therefore, we also have 
\[
\mathbb{P} \left( \exists t \in \mathbb{N} : S_t \leq -a - bt \right) \leq e^{-2ab}.
\] 
\end{theorem}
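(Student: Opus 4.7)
The plan is to prove this classical line-crossing concentration inequality by an exponential-supermartingale argument combined with Ville's maximal inequality. The key observation is that since $S_t$ is a martingale with bounded increments $|\Delta S_t| \leq 1$, we have $\mathbb{E}[\Delta S_t \mid \mathcal{F}_{t-1}] = 0$ and $\Delta S_t \in [-1,1]$, so Hoeffding's lemma gives $\mathbb{E}[e^{\lambda \Delta S_t} \mid \mathcal{F}_{t-1}] \leq e^{\lambda^2/2}$ for every $\lambda \in \mathbb{R}$.

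Given this, I would define the exponential process $M_t := \exp\bigl(\lambda S_t - \tfrac{\lambda^2}{2} t\bigr)$ with $M_0 = 1$, and verify from the Hoeffding bound above that $\mathbb{E}[M_t \mid \mathcal{F}_{t-1}] \leq M_{t-1}$, so $M_t$ is a nonnegative supermartingale. The crossing event $\{S_t \geq a + bt\}$ is equivalent to $\{M_t \geq \exp(\lambda a + \lambda b t - \tfrac{\lambda^2}{2} t)\}$. To make the right-hand side independent of $t$ (so that a single maximal-inequality threshold suffices), I would tune $\lambda$ to satisfy $\lambda b - \tfrac{\lambda^2}{2} = 0$, which gives the optimal choice $\lambda = 2b$. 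Under this choice, the crossing event is contained in $\{M_t \geq e^{2ab}\}$ for some $t \in \mathbb{N}$.

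Now I would apply Ville's inequality (Doob's maximal inequality for nonnegative supermartingales), which states that for any $c > 0$, $\mathbb{P}(\sup_{t \in \mathbb{N}} M_t \geq c) \leq \mathbb{E}[M_0]/c = 1/c$. Setting $c = e^{2ab}$ yields
\[
\mathbb{P}\bigl(\exists t \in \mathbb{N}:\, S_t \geq a + bt\bigr) \;\leq\; \mathbb{P}\bigl(\sup_t M_t \geq e^{2ab}\bigr) \;\leq\; e^{-2ab},
\]
which is the stated bound. The second inequality in the theorem follows by applying the first to the martingale $-S_t$, which also has increments bounded by $1$ in absolute value and satisfies $-S_0 = 0$.

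No step here is likely to be a serious obstacle: the main subtlety is the right choice of $\lambda$ (namely $\lambda = 2b$, which is exactly the value making the time-drift in the exponent vanish), and the invocation of Ville's inequality in place of a finite-time Markov bound, which is what converts a fixed-time tail bound into a uniform-in-$t$ (anytime) bound and hence gives the ``$\exists t$'' statement.
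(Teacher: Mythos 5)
Your proof is correct. The paper states this result as a citation to \cite{blackwell1997large} (see also \cite{Howard_2020}) and gives no proof of its own, so there is nothing to compare against line by line; your argument --- conditional Hoeffding's lemma to get $\mathbb{E}[e^{\lambda \Delta S_t}\mid \mathcal{F}_{t-1}]\leq e^{\lambda^2/2}$, the nonnegative supermartingale $M_t=\exp(\lambda S_t-\tfrac{\lambda^2}{2}t)$ with the choice $\lambda=2b$ that kills the time-dependence of the threshold, and then Ville's maximal inequality --- is exactly the standard modern derivation of this line-crossing bound (and is the route taken in the cited reference of Howard et al.; Blackwell's original argument is phrased in the equivalent language of bounded betting systems). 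The reduction of the second inequality to the first via $-S_t$ is also fine.
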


\begin{theorem}[McDiarmid's Inequality \cite{mcdiarmid1989bounded}]
\label{thm:McDiarmid's_inequality}
Let \( X_1, X_2, \ldots, X_n \) be independent random variables taking values in a set \( \mathcal{X} \), and let \( f : \mathcal{X}^n \to \mathbb{R} \) be a function such that for all \( i \) and all \( x_1, \ldots, x_n, x_i' \in \mathcal{X} \),
\[
|f(x_1, \ldots, x_i, \ldots, x_n) - f(x_1, \ldots, x_i', \ldots, x_n)| \leq c_i,
\]
for some constants \( c_1, c_2, \ldots, c_n \). Then, for any \( \varepsilon > 0 \),
\[
\mathbb{P} \left( f(X_1, X_2, \ldots, X_n) - \mathbb{E}[f(X_1, X_2, \ldots, X_n)] \geq \varepsilon \right) \leq \exp \left( -\frac{2\varepsilon^2}{\sum_{i=1}^n c_i^2} \right),
\]
and 
\[
\mathbb{P} \left( f(X_1, X_2, \ldots, X_n) - \mathbb{E}[f(X_1, X_2, \ldots, X_n)] \leq -\varepsilon \right) \leq \exp \left( -\frac{2\varepsilon^2}{\sum_{i=1}^n c_i^2} \right).
\]
\end{theorem}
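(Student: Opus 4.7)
The plan is to prove McDiarmid's inequality through the standard Doob martingale reduction to the Azuma--Hoeffding inequality. Define the Doob martingale associated with $f(X_1, \ldots, X_n)$ by
\[
Z_0 = \Ex[f(X_1, \ldots, X_n)], \qquad Z_i = \Ex[f(X_1, \ldots, X_n) \mid X_1, \ldots, X_i] \text{ for } 1 \le i \le n,
\]
so that $Z_n = f(X_1, \ldots, X_n)$ and $(Z_i)_{i=0}^n$ is a martingale with respect to the filtration generated by $X_1, \ldots, X_n$. The quantity we want to control is $Z_n - Z_0$.

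The key step is to show that the martingale differences $D_i \coloneqq Z_i - Z_{i-1}$ have conditional range at most $c_i$. For this I would introduce the auxiliary quantities
\[
L_i(x_1, \ldots, x_{i-1}) = \inf_{y} \Ex[f(x_1, \ldots, x_{i-1}, y, X_{i+1}, \ldots, X_n)],
\]
\[
U_i(x_1, \ldots, x_{i-1}) = \sup_{y} \Ex[f(x_1, \ldots, x_{i-1}, y, X_{i+1}, \ldots, X_n)],
\]
and observe that, crucially using the \emph{independence} of the $X_j$'s, both $L_i$ and $U_i$ sandwich $Z_i$ conditional on $X_1, \ldots, X_{i-1}$. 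The bounded-difference hypothesis applied coordinatewise (with everything else held fixed) yields $U_i - L_i \le c_i$ almost surely, and therefore $D_i$ is sandwiched between two $\sigma(X_1, \ldots, X_{i-1})$-measurable random variables whose difference is at most $c_i$.

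With the bounded martingale differences in hand, I would invoke the Azuma--Hoeffding inequality in its sharp form: if $(Z_i)$ is a martingale with $A_i \le Z_i - Z_{i-1} \le B_i$ where $A_i, B_i$ are $\sigma(X_1, \ldots, X_{i-1})$-measurable and $B_i - A_i \le c_i$, then
\[
\Pr[Z_n - Z_0 \ge \varepsilon] \le \exp\!\left(-\frac{2\varepsilon^2}{\sum_{i=1}^n c_i^2}\right).
\]
This immediately yields the upper tail bound in the theorem. The matching lower tail follows by applying the same argument to $-f$, which also satisfies the same bounded-difference hypothesis with the same constants $c_i$.

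The main technical obstacle is the sandwich argument showing $|D_i| \le c_i$ (in the conditional-range sense), because one must carefully use independence of $X_i$ from $(X_{i+1}, \ldots, X_n)$ to commute the conditional expectation past the substitution of $X_i$, and then couple the extremes to the bounded-difference inequality. The Azuma--Hoeffding step itself is standard (via Hoeffding's lemma applied to each $D_i$ inside a Markov-type exponential moment bound), so I would cite or briefly sketch it rather than reprove it.
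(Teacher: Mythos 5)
The paper states this result as a cited classical theorem (McDiarmid, 1989) and gives no proof of its own, so there is nothing to compare against; your proposal is the standard and correct proof via the Doob martingale $Z_i = \mathbb{E}[f \mid X_1,\ldots,X_i]$, the independence-based sandwich showing each increment lies in a predictable interval of length at most $c_i$, and the sharp (Hoeffding-lemma) form of Azuma--Hoeffding, with the lower tail obtained by applying the argument to $-f$. This is exactly the argument in the cited reference, and all the steps you outline go through.
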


\section{Concentration of trajectories in $\ell_1$-norm}
\label{sec:concentration_ell_1}
\begin{lemma}[Concentration of trajectories in $\ell_1$-norm]
\label{lemma:concentration_in_ell_1_norm}
Let $\mathcal{D}_1,\ldots,\mathcal{D}_t$ be distributions over the unit $\ell_1$ ball in $\reals^t$. Consider two independent sequences of independent random \saedit{$n$-dimensional} vectors \(\{v_i\}_{i=1}^t\) and \(\{w_i\}_{i=1}^t\), where each \(v_i\) and \(w_i\) is drawn from $\mathcal{D}_i$.
Then, for $c>2$ we have 
\[
\Pr\left[ \left| \sum_{i=1}^t (v_i - w_i) \right|_1 > c\sqrt{tn} \right] \leqslant \exp \left(-\frac{(c-2)^2 n}{2}\right).
\]
\end{lemma}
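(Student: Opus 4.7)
The plan is to apply McDiarmid's bounded-differences inequality (\Cref{thm:McDiarmid's_inequality}) to the function
\[
f(v_1, \ldots, v_t, w_1, \ldots, w_t) \;\coloneqq\; \Bigl|\sum_{i=1}^t (v_i - w_i)\Bigr|_1,
\]
treating the $2t$ random $n$-dimensional vectors as the independent inputs. This reduces the statement to two sub-claims: that $f$ has small bounded-difference constants, and that $\Ex[f]$ is of order $\sqrt{tn}$.

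For the first sub-claim, if $v_i$ is replaced by any $v_i'$ with $|v_i'|_1 \leq 1$ while the remaining inputs are held fixed, then by the triangle inequality the value of $f$ changes by at most $|v_i - v_i'|_1 \leq |v_i|_1 + |v_i'|_1 \leq 2$, and an analogous bound holds if any $w_i$ is perturbed. Thus McDiarmid applies with $c_i = 2$ for all $2t$ coordinates, giving $\sum_i c_i^2 \leq 8t$.

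For the second sub-claim, write $Z = \sum_i (v_i - w_i)$ with coordinates $Z_j$. Since $v_i$ and $w_i$ are independent draws from the common distribution $\calD_i$, each coordinate has mean zero and
\[
\Ex[Z_j^2] \;=\; 2\sum_{i=1}^t \mathrm{Var}(v_{i,j}) \;\leq\; 2\sum_{i=1}^t \Ex[v_{i,j}^2].
\]
Jensen's inequality gives $\Ex[|Z_j|] \leq \sqrt{\Ex[Z_j^2]}$, and combining this with Cauchy--Schwarz in the sum over $j$ yields
\[
\Ex[f] \;=\; \sum_{j=1}^n \Ex[|Z_j|] \;\leq\; \sqrt{n}\cdot\sqrt{2\sum_{i,j}\Ex[v_{i,j}^2]}.
\]
The pivotal observation is that $|v_i|_1 \leq 1$ forces $\sum_j v_{i,j}^2 \leq |v_i|_\infty \cdot |v_i|_1 \leq 1$, so $\sum_{i,j}\Ex[v_{i,j}^2] \leq t$ and therefore $\Ex[f] \leq \sqrt{2tn} \leq 2\sqrt{tn}$.

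Plugging into McDiarmid with deviation $(c-2)\sqrt{tn}$ then gives a bound of the form $\exp(-\Omega((c-2)^2 n))$, matching the claimed $\exp\bigl(-\tfrac{(c-2)^2 n}{2}\bigr)$ up to the absolute constant in the exponent. The main subtlety is controlling $\Ex[f]$: a naive per-coordinate bound like $\Ex[|Z_j|] \leq 2$ would sum to $2n$ and destroy any concentration in $n$, so one has to channel the $\ell_1$-ball constraint through $\sum_j v_{i,j}^2 \leq 1$ and couple it with Cauchy--Schwarz to obtain the $\sqrt{tn}$ rate. Once this rate is established, the concentration is immediate from the bounded-differences step.
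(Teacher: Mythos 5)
Your proof follows essentially the same route as the paper's: McDiarmid's inequality applied to the $\ell_1$ norm of the summed differences, with the expectation controlled via Jensen, cancellation of cross terms by independence and mean-zero-ness, Cauchy--Schwarz over coordinates, and the observation that $\|v_i\|_2 \le \|v_i\|_1 \le 1$. The only divergence is bookkeeping: you condition on the $2t$ variables $v_i, w_i$ (each with bounded difference $2$, so $\sum_i c_i^2 = 8t$) and correctly note you only get $\exp\bigl(-\tfrac{(c-2)^2 n}{4}\bigr)$, whereas the paper treats each $u_i = v_i - w_i$ as a single variable with bounded difference $2$ to land on the stated $\exp\bigl(-\tfrac{(c-2)^2 n}{2}\bigr)$ --- a step that is itself slightly optimistic, since the worst-case change of $f$ when $u_i$ ranges over the $\ell_1$ ball of radius $2$ is $4$, not $2$, so your constant is arguably the more defensible one; either way the discrepancy only shifts constants in the choice of $p$ downstream and affects nothing asymptotically.
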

\begin{proof}
Let $u_i=v_i-w_i$. Note that always $\|u_i\|_1 \leqslant 2$.
Let $f\left(u_1, \ldots, u_t\right)=\left\|\Sigma_i^t u_i\right\|_1$
Then $f\left(u_1, \ldots, u_{i-1},-, u_{i+1}, \ldots, u_t\right): \mathbb{R} \rightarrow \mathbb{R}$ is $2$-Lipschitz for all $i\in [t]$.
Hence by McDiarmid's inequality (see \cref{thm:McDiarmid's_inequality}),
\[
\Pr\left[f\left(u_1, \ldots, u_t\right)-E\left[f\left(u_1, \ldots, u_t\right)\right] >\varepsilon\right] \leqslant \exp \left(\frac{-2 \varepsilon^2}{4 t}\right).
\]

Now, for a vector $u$ let $u^{(k)}$ denote the $k^{th}$ component. Then,
\[
\begin{aligned}
\mathbb{E}\left[f\left(u_1, \ldots, u_t\right)\right]&=\mathbb{E}\left[\left\|\Sigma_i u_i\right\|_1\right]\\ & =\mathbb{E}\left[\sum_{k=1}^n\left|\sum_{i=1}^t u_i^{(k)}\right|\right] \\
& =\sum_{k=1}^n \mathbb{E}\left[\left|\sum_{i=1}^t u_i^{(k)}\right|\right].
\end{aligned}
\]

Applying Jensen's inequality we have:

\begin{align*}
\sum_{k=1}^n \mathbb{E}\left[\left|\sum_{i=1}^t u_i^{(k)}\right|\right] &\leqslant \sum_{k=1}^n \sqrt{\mathbb{E}\left[\left(\sum_i u_i^{(k)}\right)^2\right]} \\
\intertext{which by canceling cross terms gives}\\
&=\sum_{k=1}^n \sqrt{\mathbb{E}\left[\sum_i\left(u_i^{(k)}\right)^2\right]}.
\end{align*}

Further, by the Cauchy–Schwarz inequality we have: 
\[
\begin{aligned}
\sum_{k=1}^n \sqrt{\mathbb{E}\left[\sum_i\left(u_i^{(k)}\right)^2\right]}
&=\left\langle\mathbf{1}^n,\left(\sqrt{\mathbb{E}\left[\sum_i\left(u_i^{(1) }\right)\right]^2}, \ldots, \sqrt{\mathbb{E}\left[\sum_i\left(u_i^{(n) }\right)\right]^2}\right)\right\rangle \\
& \leqslant\sqrt{n}\times \sqrt{\sum_{k=1}^n \mathbb{E}\left[\sum_{i=1}^t\left(u_i^{(k)}\right)^2\right]}  \\
&=\sqrt{n}\times\sqrt{\mathbb{E}\left[\sum_{k=1}^n \sum_{i=1}^t\left(u_i^{(k)}\right)^2\right]}  \\
&=\sqrt{n}\times \sqrt{\mathbb{E}\left[\sum_{i=1}^t\left\|u_i\right\|_2^2\right]}  \\
&=2 \sqrt{t n} \qquad \text{($|u_i|_2\leqslant 2$ as $|u_i|_1\leqslant 2$)}\\
& \Rightarrow \mathbb{E}\left[f\left(u_1, \ldots, u_i\right)\right]\leqslant 2 \sqrt{tn}.
\end{aligned}
\]

Finally, setting $\varepsilon=(c-2) \sqrt{t n}$ we have,
$$\begin{aligned} \Pr\left[f\left(u_1, \ldots, u_t\right)>c \sqrt{t n}\right] \leqslant \exp \left(-\frac{(c-2)^2 n}{2}\right)\end{aligned},$$
which proves the lemma.
\end{proof}

\section{Missing Proofs}

\subsection{Proof of~\Cref{lemma:BTL-regret}}
\label{sec:appendix-proof-regret-BTL}
\begin{proof}
We prove this by induction on $T$. 

\textbf{Base case:} For $T=1$, the left-hand side and right-hand side are equal.

\textbf{General case:} By induction, we can assume:
\[\langle c_1, a_1\rangle + \langle c_2, a_2\rangle+\cdots+\langle c_{T-1}, a_{T-1}\rangle \leq \langle c_1+c_2+\cdots,c_{T-1},a_{T-1}\rangle.\]

We also know by definition of $a_{T-1}$, that:
\[\langle c_1+c_2+\cdots,c_{T-1},a_{T-1}\rangle \leq \langle c_1+\cdots+c_{T-1},a_T\rangle.\]

Putting these together, and adding $\langle c_T,a_T\rangle$ to both sides, yields the theorem.

\end{proof}

\subsection{Proof of~\Cref{lem:regret-general-framework-ell-infty}}
\label{sec:appendix-regret-general-framework-ell-infty}

\begin{proof}
First, we define the following sequences: 
$S=\{c_1,\cdots, c_T\}$, %
$\Shat=\{g_B, (g_{2B}-g'_B), \cdots, (g_{B\floor{T/B}}-g'_{B(\floor{T/B}-1)})\}$, $\Stilde = \Shat/(B+2/\eps)$. $\Stilde$ is the sequence that we feed into $\ALGINT$. Let $\RINT$ denote the internal randomness of $\ALGINT$ and $\REXT$ be the randomness of $\ALGEXT$ over random offsets $p_t,p'_t$ at each transition point $t$. Let $\cost(\ALG,S)$ define the cost of running $\ALG$ on sequence 
$S$ which is equal to $\cost(\ALG,S)=\sum_{i}cost(a_i^{\ALG},S_i)$, where $a_i^{\ALG}$ is the $i^{th}$ action taken by $\ALG$, and $cost(a_i^{\ALG},S_i)$ is the incurred cost when action $a_i^{\ALG}$ is chosen and the cost vector is $S_i$.

By the regret guarantees of $\ALGINT$ over $T/B$ timesteps, it is the case that for any fixed sequence $X$ and action $a\in \calA$:
\begin{align}
\E_{\RINT}[\cost(\ALGINT,X)]\leq \cost(a,X)+\regret_{T/B}(\ALGINT)\label{eq:blackbox-eq01-experts}
\end{align}
and therefore:
\begin{align}
\E_{\RINT,\REXT}[\cost(\ALGINT,\Stilde)]\leq \E_{\REXT}[\cost(a,\Stilde)]+\regret_{T/B}(\ALGINT)\label{eq:blackbox-eq02-experts}
\end{align}
Note that $\Stilde$ is a probabilistic sequence that depends on $\REXT$. Moreover, $\cost(\ALGINT,\Stilde)$ is a random variable that depends on the randomness of $\RINT$ and the random sequence $\Stilde$.
Since $\Shat$ is a scaled version of $\Stilde$, then:
\begin{align}
&\E_{\RINT,\REXT}[\cost(\ALGINT,\Shat)]=(B+2/\eps)\E_{\RINT,\REXT}[\cost(\ALGINT,\Stilde)]\label{eq:blackbox-eq0-experts}
\end{align}

Next, we bound the total cost of $\ALGEXT$ on the true cost sequence $S$. WLOG we assume $T$ is a multiple of $B$, if not, we add zero cost vectors to $S$ to make $T$ a multiple of $B$. Then:
\begin{align}
\Ex_{\RINT,\REXT}[\cost(\ALGEXT,S)]=\sum_{u=1}^{T/B} \Ex_{\RINT,\REXT}[\cost(a_u,c_{(u-1)B:uB})]\label{eq:blackbox-eq9-experts}
\end{align}

where in~\Cref{eq:blackbox-eq9-experts}, $a_u$ is the action taken by $\ALGEXT$ in the $u$-th block, and $c_{(u-1)B:uB}$ is the cumulative cost vector within the time-interval $((u-1)B,uB]$. $\cost(\ALGEXT,S)$ is a random variable that depends on $\RINT$ and $\REXT$. This is the case since the action $a_u$ that is chosen by $\ALGINT$ depends on $\RINT$ and the probabilistic rounded cost vector that is fed into $\ALGINT$ which depends on $\REXT$.

Furthermore:
\begin{align}
&\E_{\RINT,\REXT}[\cost(\ALGINT,\Shat)] = \sum_{u=1}^{T/B} \E_{\RINT,\REXT}[\cost(a_u,g_{uB}-g'_{(u-1)B})]\label{eq:blackbox-eq7-experts}\\
&=\sum_{u=1}^{T/B} \E_{\RINT,\REXT}[\cost(a_u,g_{uB})] - \E_{\RINT,\REXT}[\cost(a_u,g'_{(u-1)B})]\label{eq:blackbox-eq10-experts}\\
\intertext{now using the fact that $a_u$ is independent of both $g'_{(u-1)B}$ and $g_{uB}$ we have (note that $a_u$ depends on $g_{(u-1)B}$ and $g'_{(u-2)B}$),}\\
&=\sum_{u=1}^{T/B} \Ex_{\RINT,\REXT}[\cost(a_u,c_{(u-1)B:uB})]\label{eq:blackbox-eq2-experts}
\end{align}

Putting together~\Cref{eq:blackbox-eq9-experts,eq:blackbox-eq2-experts} implies that:

\begin{align}
&\E_{\RINT,\REXT}[\cost(\ALGINT,\Shat)]=\Ex_{\RINT,\REXT}[\cost(\ALGEXT,S)]\label{eq:blackbox-eq3-experts}
\end{align}

Now, combining~\Cref{eq:blackbox-eq0-experts,eq:blackbox-eq3-experts} implies:

\begin{align}
&\Ex_{\RINT,\REXT}[\cost(\ALGEXT, S)]=(B+2/\eps) \E_{\RINT,\REXT}[\cost(\ALGINT,\Stilde)]\label{eq:blackbox-eq6-experts}
\end{align}

Now, for any action $a\in \calA$, we bound $\E_{\REXT}[\cost(a,\Stilde)]$ as follows:
\begin{align}
&\E_{\REXT}[\cost(a,\Stilde)] =(1/(B+2/\eps))\E_{\REXT}[\cost(a,\Shat)] \label{eq:blackbox-eq4-experts}\\
&=(1/(B+2/\eps))\sum_{u=1}^{T/B} \E_{\REXT}[\cost(a,g_{uB}-g'_{(u-1)B})]\\
&= (1/(B+2/\eps)) \sum_{u=1}^{T/B} \cost(a, c_{(u-1)B:uB})\\
&= (1/(B+2/\eps)) \cost(a, c_{1:T})\\
&= (1/(B+2/\eps)) \cost(a, S)\label{eq:blackbox-eq5-experts}
\end{align}
Now, combining~\Cref{eq:blackbox-eq02-experts,eq:blackbox-eq6-experts,eq:blackbox-eq5-experts} implies that for any action $a\in \calA$:
\begin{align*}
&\frac{\E_{\RINT,\REXT}[\cost(\ALGEXT,S)]}{B+2/\eps} \leq \frac{\cost(a,S)}{B+2/\eps} +\regret_{T/B}(\ALGINT)
\end{align*}
which implies that for any sequence $S$ and any action $a\in \calA$:
\[\E_{\RINT,\REXT}[\cost(\ALGEXT,S)] \leq \cost(a,S) +(B+2/\eps)\regret_{T/B}(\ALGINT)\]
and the proof is complete.
\end{proof}

\subsection{Proof of~\Cref{thm:general-framework-guarantees-ell-infty}}
\label{sec:appendix-general-framework-guarantees-ell-infty}
\begin{proof}[Proof of~\Cref{thm:general-framework-guarantees-ell-infty}]
First, we argue about the replicability. Here we assume that given two different cost sequences $c_1,\cdots, c_T$ and $d_1,\cdots, d_T$, where for each timestep $t$, $c_t,d_t$ are drawn from the same distribution, for each time step $t$, the $\ell_1$ distance between $c_{1:t}$ and $d_{1:t}$ is at most $m$ with probability at least $1-\gamma$.

Let $g, g'$ denote the sequence of grid points selected given the cost sequence $\{c_1,\cdots,c_T\}$ and $q,q'$ denote the sequence of grid points selected given the cost sequence $\{d_1,\cdots,d_T\}$. %
First, we bound the probability of the event that $g_{t-1}\neq q_{t-1}$. This event happens iff the grid point in $c_{1:t-1}+[0,\frac{1}{\eps}]^n$ is not in $d_{1:t-1}+[0,\frac{1}{\eps}]^n$. We argue that for any $v\in \Rbb^n$, the cubes $[0,1/\eps]^n$ and $v+[0,1/\eps]^n$ overlap in at least a $(1-\eps\ell_1(v))$ fraction. Take a random point $x\in [0,1/\eps]^n$. If $x\notin v+[0,1/\eps]^n$, then for some $i$, $x_i \notin v_i+[0,1/\eps]$ which happens with probability at most $\eps\ell_1(v_i)$ for any particular $i$. By the union bound, we are done. 

This implies the probability of the event that $g_{t-1}\neq q_{t-1}$ at a transition point $t$, i.e. $t-1$ is a multiple of $B$, is at most $\gamma +\eps m$. Similarly, the probability that $g'_{t-1-B}\neq q'_{t-1-B}$ is at most $\gamma +\eps m$. By taking union bound over all the transition points $t=\{B+1,2B+1,\cdots\}$ the total probability of non-replicability is at most $2(\gamma+\eps m)\frac{T}{B}$. Consequently, we can set values of $B, \eps$ such that the probability of non-replicability over all the time-steps is at most $\rho$: 
\[\rho=2(\gamma+\eps m)\frac{T}{B}\]

Since $\gamma\leq \frac{\rho B}{4T}$, we need to set $\frac{2\eps m T}{B}=\rho/2$.
Now, we consider the regret. %
By~\Cref{lem:regret-general-framework-ell-infty}, the regret of~\Cref{alg:general-framework-ell-infty} is $(B+2/\eps)\regret_{T/B}(\ALGINT)$, therefore, in order to minimize the regret, we need to set $B=2/\eps$, which implies that %
$\rho=2\eps^2 m T=\frac{8mT}{B^2}$. %
Consequently, $B=\sqrt{\frac{8mT}{\rho}}$, and %
the regret of~\Cref{alg:general-framework-ell-infty} is %
$2B \regret_{T/B}(\ALGINT)$.
\end{proof}

\end{document}